\newdimen \jot \jot=5mm
\let\cite\parencite
\newcommand{\indexed}[1]{\textbf{#1}\index{#1}}
\newcommand{\xline}{\noindent\underline{\makebox[0.1cm][l]{}}}
\newcommand{\mb}[1]{\textbf{#1}}
\newcommand{\ma}[1]{#1^\dagger}
\newcommand{\mi}[1]{\textbf{#1}}
\newcommand{\minimize}[3]{
\begin{aligned}
& \underset{#1}{\textrm{minimize:}}
& & #2 \\
& \textrm{subject to:}
& &  #3
\end{aligned}
}
\let\oldmc\multicolumn
\newcolumntype{B}[3]{>{\boldmath\DC@{#1}{#2}{#3}}c<{\DC@end}}
\newcommand{\mcinherit}{\renewcommand{\multicolumn}[3]{\oldmc{##1}{##2}{\ifodd\rownum \@oddrowcolor\else\@evenrowcolor\fi ##3}}}
\renewcommand{\m}[1]{\multicolumn{1}{c}{#1}}
\newcommand{\mm}[1]{\multicolumn{1}{c|}{#1}}
\newcommand{\mmm}[1]{\multicolumn{1}{c||}{#1}}
\newcommand{\my}[1]{\multicolumn{1}{B{.}{.}{-1}}{#1}}
\newcommand{\myy}[1]{\multicolumn{1}{B{.}{.}{-1}|}{#1}}
\newcolumntype{H}{>{\setbox0=\hbox\bgroup}c<{\egroup}@{}}
\newcolumntype{d}[1]{D{.}{.}{#1}}
\newcolumntype{S}{l}
\newcolumntype{N}{>{\columncolor{green}}c}
\newcommand{\R}{\mathbb{R}}
\newcommand{\remove}[1]{}
\newcommand*{\@rowstyle}{}
\newcommand*{\rowstyle}[1]{% sets the style of the next row
  \gdef\@rowstyle{#1}%
  \@rowstyle\ignorespaces%
}
\newcolumntype{=}{% resets the row style
  >{\gdef\@rowstyle{}}%
}
\newcolumntype{+}{% adds the current row style to the next column
  >{\@rowstyle}%
}
\definecolor{lightgray}{gray}{0.96}
\definecolor{darkgray}{gray}{0.7}
\definecolor{darkergray}{gray}{0.0}
\newcommand{\newcite}[1]{\cite{#1}}    % {\citealt{#1}}
\newcommand{\mycite}[1]{\cite{#1}}     % {\citep{#1}}
\newcommand{\mynewcite}[1]{\cite{#1}}  % {\citet{#1}}
\newcommand{\myciteauthor}[1]{\cite{#1}}%{\citeauthor{#1}}
\newcommand{\union}{\cup}
\newcommand{\figref}[1]{Figure~\ref{#1}}
\newcommand{\tabref}[1]{Table~\ref{#1}}
\newcommand{\thref}[1]{Theorem~\ref{#1}}
\newcommand{\lemref}[1]{Lemma~\ref{#1}}
\newcommand{\secref}[1]{\S~\ref{#1}}
\newcommand{\charef}[1]{Chapter~\ref{#1}}
\newcommand{\algoref}[1]{Algorithm~\ref{#1}}
\newcommand{\eqnref}[1]{Eq.~\ref{#1}}
\newcommand{\appref}[1]{(Appendix~\ref{#1})}
\newcommand{\chapref}[1]{Chapter~\ref{#1}}
\newcommand{\ie}{i.e.,\xspace}
\newcommand{\bigie}{I.e.,\xspace}
\DeclareMathOperator{\DF}{DF}
\DeclareMathOperator{\IDF}{IDF}
\DeclareMathOperator{\NN}{NN}
\DeclareMathOperator{\score}{{score}}
\newcommand{\I}{\mathbf{I}}
\newcommand{\XmD}{\cX \setminus \cD}
\newcommand{\XmQ}{\cX \setminus \cQ}
\newcommand{\Z}{\mathbf{0}}
\newcommand{\alphatil}{\tilde{\alpha}}
\newcommand{\alphat}{\tilde{\alpha}_\cQ}
\newcommand{\ath}[1]{#1^{\textrm{th}}}
\newcommand{\avgdl}{\bar{L}}
\newcommand{\bI}{\mathbb{I}}
\newcommand{\betatil}{\tilde{\beta}}
\newcommand{\betat}{\tilde{\beta}_\cQ}
\newcommand{\btau}{{\boldsymbol\tau}}
\newcommand{\bx}{\mathbf{x}}
\newcommand{\cDoc}{\mathcal{D}}
\newcommand{\cD}{\mathcal{Q}}
\newcommand{\cE}{\mathcal{E}}
\newcommand{\cL}{\mathcal{L}}
\newcommand{\cM}{\mathcal{M}}
\newcommand{\cNc}{\mathcal{N}_c}
\newcommand{\cN}{\mathcal{N}}
\newcommand{\cQ}{\mathcal{Q}}
\newcommand{\cR}{\mathcal{R}}
\newcommand{\cVp}{\mathcal{V'}}
\newcommand{\cV}{\mathcal{V}}
\newcommand{\cX}{\mathcal{X}}
\newcommand{\cfont}[1]{{\tt #1}}
\newcommand{\erLong}{Entity Set Expansion\xspace}
\newcommand{\er}{ESE\xspace}
\newcommand{\fb}[1]{\textbf{#1}}
\newcommand{\fcq}{\bar{f}_\cQ}
\newcommand{\ffont}[1]{\underline{#1}}
\newcommand{\fracil}[2]{{#1}/{#2}}
\newcommand{\gennn}{\NN^{(g)}_\theta}
\newcommand{\infnn}{\NN^{(i)}_\phi}
\newcommand{\ip}[2]{\langle  #1, #2 \rangle}
\newcommand{\lambdat}{\tilde{\lambda}}
\newcommand{\nvgeLong}{Neural Variational Set Expansion\xspace}
\newcommand{\nvge}{NVSE\xspace}
\newcommand{\rDoc}{\mathrm{D}}
\newcommand{\rD}{\mathrm{Q}}
\newcommand{\rF}{\mathrm{F}}
\newcommand{\rQ}{\mathrm{Q}}
\newcommand{\rS}{\mathrm{S}}
\newcommand{\rVp}{\mathrm{V'}}
\newcommand{\rV}{\mathrm{V}}
\newcommand{\rX}{\mathrm{X}}
\newcommand{\rhot}{\tilde{\rho}}
\newcommand{\setX}{SetExpan\xspace}
\newcommand{\talpha}{\alphat}
\newcommand{\ta}{\theta}
\newcommand{\tbeta}{\betat}
\newcommand{\wTv}{Word2Vecf\xspace}
\newcommand{\xbar}{\bar{x}}
\g@addto@macro\normalsize{%
  \setlength\abovedisplayskip{5pt}
  \setlength\belowdisplayskip{5pt}
  \setlength\abovedisplayshortskip{5pt}
  \setlength\belowdisplayshortskip{5pt}
}
\newcommand{\paratextbf}[1]{\textbf{#1}}
\newcommand{\BStrut}{\rule[-0.9ex]{0pt}{0pt}}   % = `bottom' strut
\newcommand{\TStrut}{\rule{0pt}{2.6ex}}         % = `top' strut
\newcommand{\numberthis}{\addtocounter{equation}{1}\tag{\theequation}}
\newtheorem{theorem}{Theorem}
\newtheorem{lemma}[theorem]{Lemma}
\newtheorem{proposition}[theorem]{Proposition}
\newenvironment{definition}[1][Definition]{\begin{trivlist}
\item[\hskip \labelsep {\bfseries #1}]}{\end{trivlist}}
\algnewcommand{\LeftComment}[1]{\(\triangleright\) #1}%
\algrenewcommand\algorithmicindent{0.6em}%
\xpatchcmd{\algorithmic}{\ALG@tlm\z@}{\leftmargin\z@\ALG@tlm\z@}{}{}%
\newcommand{\AR}{R\xspace}
\newcommand{\CT}{T\xspace}
\newcommand{\Je}{J^{(f)}}
\newcommand{\Jr}{J^{(f)}}
\newcommand{\T}{\mathbb{T}}
\newcommand{\ab}{\mathbf{a}}
\newcommand{\bE}{\mathbb{E}}
\newcommand{\bb}{\mathbf{b}}
\newcommand{\blde}{\mathbf{e}}
\newcommand{\bldr}{\mathbf{r}}
\newcommand{\bldt}{\mathbf{t}}
\newcommand{\cFc}{\mathcal{F}^c}
\newcommand{\cF}{\mathcal{F}}
\newcommand{\cT}{\mathcal{T}}
\newcommand{\cU}{\mathcal{U}}
\newcommand{\cb}{c}
\newcommand{\comp}{c}
\newcommand{\ec}{\mathbf{\bar{e}}}
\newcommand{\entail}{\textsc{Entail}}
\newcommand{\eqbref}[1]{(\ref{#1})}
\newcommand{\e}{\mathbf{e}}
\newcommand{\fc}{\bar{f}}
\newcommand{\grad}[2]{\frac{\partial #2}{\partial #1}}
\newcommand{\hfs}[3][-]{H^{#1}(#2, #3)}
\newcommand{\ifoif}{iff\xspace}
\newcommand{\implication}{\textsc{RelImp}}
\newcommand{\norm}[1]{{|}{|}#1{|}{|}^2}
\newcommand{\proj}{\textit{Proj}}
\newcommand{\proptrans}{\textsc{ProTrans}}
\newcommand{\rE}{\mathrm{E}}
\newcommand{\rT}{\mathrm{T}}
\newcommand{\rU}{\mathrm{U}}
\newcommand{\rc}{\mathbf{\bar{r}}}
\newcommand{\rp}[1][d]{\mathbb{R}^{#1}_{+}}
\newcommand{\rrelimpl}{\textsc{RevImp}}
\newcommand{\std}{\tilde{d}}
\newcommand{\supp}[2][\mathbb{T}]{h_{#1}(#2)}
\newcommand{\symm}{\textsc{Symm}}
\newcommand{\tc}{\mathbf{\bar{t}}}
\newcommand{\tworowb}[3][c]{\begin{tabular}{@{}#1 @{}}#2 \\ #3\end{tabular}}
\newcommand{\tworow}[2]{\begin{tabular}{c}#1 \\ [.5ex] \hline #2\end{tabular}}
\newcommand{\twrs}[1]{\texttt{\justify #1}\xspace}
\newcommand{\twr}[1]{\texttt{\justify #1}\xspace}
\newcommand{\typeimpl}{\textsc{TypeImp}}
\newcommand{\udg}{$^{\dagger}$}
\newcommand{\vvv}{v}
\newcommand{\x}{\mathbf{x}}
\newcommand{\y}{\mathbf{y}}
\newcommand*\justify{%
 \fontdimen2\font=1pt% interword space
 \fontdimen3\font=1pt% interword stretch
 \fontdimen4\font=0.1em% interword shrink
 \fontdimen7\font=0.1em% extra space
 \hyphenchar\font=`\-% allowing hyphenation
}
\newcommand{\Uk}{U^{(k)}}
\newcommand{\Sk}{\Sigma^{(k)}}
\newcommand{\resumes}{r\'{e}sum\'{e}s}
\DeclareMathOperator{\trace}{trace}
\DeclareMathOperator{\KL}{KL}
\newcommand{\midd}{\mid\mid}
\newcommand{\nepoch}{n_{\text{epoch}}}
\newcommand{\cmark}{\ding{51}}%
\newcommand{\xmark}{\ding{55}}%
\newcommand{\z}{\mathbf{z}}
\newcommand{\w}{\mathbf{w}}
\newcommand{\btheta}{{\boldsymbol\theta}}
\newcommand{\bphi}{{\boldsymbol\phi}}
\newcommand{\NDCG}{{\footnotesize NDCG@}\small}
\newcommand{\MAP}{{\footnotesize MAP}}
\DeclareMathOperator*{\argmin}{arg\,min}
\begin{document}
\newcommand{\bin}[2]{\left(\begin{array}{@{}c@{}} #1 \\ #2
             \end{array}\right) }
\renewcommand{\contentsname}{Table of Contents}
\baselineskip=24pt

% Create cover page of dissertation !
\pagenumbering{roman}
\thispagestyle{empty}
\begin{center}
\vspace*{.25in}
{\bf\LARGE{ Representation Learning for Words and Entities }}\\ % change it accordingly!
\vspace*{.75in}
{\bf by} \\*[18pt]
\vspace*{.2in}
{\bf Pushpendre Rastogi}\\ % change it accordingly!
\vspace*{1in}
{\bf A dissertation submitted to The Johns Hopkins University\\
in conformity with the requirements for the degree of\\
Doctor of Philosophy }\\
\vspace*{.75in}
{\bf Baltimore, Maryland} \\
{\bf March, 2019} \\     % change it accordingly!
\vspace*{.5in}
\begin{small}
{\bf \copyright{ }2019 by Pushpendre Rastogi} \\ % change the year if needed!
{\bf All rights reserved}
\end{small}
\end{center}
\newpage
\pagestyle{plain}
\pagenumbering{roman}
\setcounter{page}{2}
%auto-ignore
\chapter*{Abstract}

This thesis presents new methods for unsupervised learning of distributed representations of words and entities from text and knowledge bases. The first algorithm presented in the thesis is a multi-view algorithm for learning representations of words called Multiview Latent Semantic Analysis (MVLSA). By incorporating up to 46 different types of co-occurrence statistics for the same vocabulary of english words, I show that MVLSA outperforms other state-of-the-art word embedding models. Next, I focus on learning entity representations for search and recommendation and present the second method of this thesis, Neural Variational Set Expansion (NVSE). NVSE is also an unsupervised learning method, but it is based on the Variational Autoencoder framework.  Evaluations with human annotators show that NVSE can facilitate better search and recommendation of information gathered from noisy, automatic annotation of unstructured natural language corpora. Finally, I move from unstructured data and focus on structured knowledge graphs. I present novel approaches for learning embeddings of vertices and edges in a knowledge graph that obey logical constraints.

\textbf{Keywords:} Machine Learning, Natural Language Processing, Representation Learning, Knowledge Graphs, Entities, Word Embeddings, Entity Embeddings.

% \textbf{Advisor:} Benjamin Van Durme.

%auto-ignore
\chapter*{Thesis Committee}

\section*{}
\subsection*{Primary Readers}

\begin{singlespace}
\indent Benjamin Van Durme (Primary Advisor)\\
\indent \indent Assistant Professor \\
\indent \indent Department of Computer Science\\
\indent \indent  Johns Hopkins University \\

\smallskip

\noindent Raman Arora \\
\indent \indent Assistant Professor\\
\indent \indent Department of Computer Science \\
\indent \indent  Johns Hopkins University \\

\smallskip

\noindent Mark Dredze \\
\indent \indent Associate Professor \\
\indent \indent Department of Computer Science \\
\indent \indent Johns Hopkins University
\end{singlespace}

%auto-ignore
\chapter*{Acknowledgments}
First, I will like to thank my dissertation committee, Benjamin Van Durme, Raman Arora, Mark for all of their time and help. Without their advice, I will not have been able to finish this thesis. I am very grateful to the following people:

Benjamin Van Durme, my advisor: Ben admitted me to Hopkins and mentored me for six years. He worked tirelessly to improve my shortcomings, gave copious advice to me on writing, coding, and many other aspects of graduate life and did not give up even when I was slow on the uptake. Ben is an excellent thesis advisor, and he made me realize that I need to look at the bigger picture and stay focused on a topic to achieve results.

Raman Arora, Thank you from the bottom of my heart for your advice and support.  I thoroughly enjoyed every time I worked with you, and you taught me a lot about machine learning. My most significant accomplishments resulted from my collaborations with you, and for that, I am genuinely grateful.

Jason Eisner, your knowledge, diligence, and technical prowess are truly admirable, and I learned a lot from working with you. You worked with me for one year and taught me a lot in the process. My collaboration with Jason resulted in an excellent paper which unfortunately I could not include in this thesis because it was on a different topic.

Professors James Spall, Vince Lyzinski and Amitabha Basu from the AMS department for helping me find answers to many research problems that I got stuck on. Professor Spall taught me about stochastic optimization, and I  also collaborated with him on this area which resulted in a CISS publication. Vince introduced me to the stochastic block models, and the problem of Vertex Nomination and a significant portion of this thesis was work done in collaboration with him. Although I never got the chance to collaborate and publish with Amitabha formally, I brainstormed with him about logically constrained embeddings with him, and he was always charitable with his time incredibly insightful and helpful.

I am also grateful to Professors Kevin Duh and Aaron Steven White, with whom I collaborated on the recasting semantics paper for all their hard work and tenacity in seeing the project to the end.

I will also like to thank Ruth Scally and Carl Pupa who have been incredibly helpful to not just me, but I think the whole CLSP department. Ruth went beyond her duty and helped me sort out some CPT-related issues in the summer of 2017, and I am ever grateful to her for that.

I will also like to thank all of the student collaborators that I have had the pleasure of working with. Ellie Pavlick and Juri Ganitkevich whom I assisted on the PPDB project. My collaboration with Ellie and Juri helped me get started on my research journey and resulted in my most cited paper to date. Manaal Faruqui -- Ph.D. from CMU -- who read my paper on MVLSA and invited me to collaborate with him to explicate problems of word embedding evaluation. That paper is now the second most cited paper on my resume! Ryan Cotterell, who is now a rising faculty member at Cambridge and is an inspirational figure in my mind, for his insight and initiative that made my WFST paper with Jason possible. He mentioned the importance of tying the weights required for copying characters in previous literature. I realized that this trick could be implemented even in the neural architecture that we had and coded it and immediately our experimental results improved beyond SOTA. He is a smart researcher, and I am grateful to have worked with him. I also enjoyed my collaboration with Jingyi Zhu who worked with me on the Efficient Adaptive SPSA paper.  Frank Ferraro, and Travis Wolfe, both one year senior to my Ph.D. cohort, with whom I published a paper each. They taught me about the tricks of Bayesian inference, max-margin learning and factor graphs. Rachel Rudinger who was my colleague and also a great friend who always gave well-reasoned and rational advice. Finally, I will like to thank Adam Poliak who has been both a great friend and a great research collaborator. We have published four papers together so far, and I will gladly work with him again given another chance. Finally, I will like to mention collaborators with whom I worked, but I was not to publish anything with just because I did not work smart enough. Nicholas Andrews, Mo Yu, Dingquan Wang, Nanyun Peng, and Elan Hourticolon-Retzler, hopefully, I will be to able to make up for the missed opportunities in future.

The Johns Hopkins University is a great university because of all the great students who come here. I enjoyed not only my collaborations with fellow students, but I also cherished their friendship. Finally, I will like to thank Adi Renduchintala, Gaurav Kumar, Keith Levin, Tongfei Chen,  Anirbit Mukherjee, Skyler Kim, Tim Vieira, Keisuke Sakaguchi, Ting Hua, Corbin Rosset, Poorya Mianjyi, Inayat Ullah, Yasamin Nazari, Xuchen Yao, Patrick Xia, Seth Ebner, Ryan Culkin, Elias Stengel-Eskin and Andrew Blair-Stanek, for their friendship.
 % Add acknowledgements
\cleardoublepage
\newpage
\pagestyle{plain}
\baselineskip=24pt
\tableofcontents
% for the three lines below, change the page numbers if needed!
%\addtocontents{toc}{\contentsline{chapter}{Table of Contents}{iii}}
%\addtocontents{toc}{\protect\contentsline{chapter}{\protect\numberline{}Table of Contents}{iii}}
%\addtocontents{toc}{\protect\contentsline{chapter}{\protect\numberline{}List of Tables}{iv}}
%\addtocontents{toc}{\protect\contentsline{chapter}{\protect\numberline{}List of Figures}{v}}
\listoftables
\listoffigures

\cleardoublepage % Needed because our intro chapter doesn't really have anything
\pagenumbering{arabic}

%auto-ignore
\chapter{Introduction}
\label{cha:introduction}

People communicate with each other about entities in the real world using natural language. Due to the increasing digitization of communication and advancement of the internet, large natural language corpora are now available for computational analysis. Beneath the sizeable natural language corpora composed of words and sentences, lie pools of information about real-world entities, such as the names and affiliations of people, and details about states and nations. My goal for this thesis is to develop methods for learning distributed representation of words, and entities, that can improve Natural Language Processing (NLP) systems, and facilitate better search and presentation of information inside unstructured natural language data.

Words are a fundamental unit of natural language. Automatically learning about words, and quantifying this information, can ultimately help many NLP tasks. One of the earliest methods for learning dense representations of words was the linguistic vector space model called Latent Semantic Analysis (LSA)~\cite{landauer1997solution}. LSA has been successfully used for information retrieval, but it has a limitation because it uses only a single view of the data via a single word co-occurrence matrix. My \indexed{first contribution} in this thesis is an algorithm called \indexed{MultiView LSA} (MVLSA)~\cite{rastogi2015multiview}. MVLSA overcomes the limitation of LSA to a single view because it can use an arbitrary number of views of data.

Generally speaking MVLSA is a multi-view algorithm and multiple view of data can help learning algorithms in two principle ways. First, the access to multiple views can help a learning algorithm to extract useful features that generalize across tasks and suppress spurious correlations that are dominant in one view and missing from the other. Second, multiple views may bring together complementary sources of information which a learning algorithm can combine to better learn the similarity between entities. More specifically, MVLSA is invariant to linear transformations of the data and this can also be considered as an advantage of MVLSA in comparison to LSA.
%% MVLSA allows for the use of an arbitrary number of views for learning word representations.
I show that using a large number of views containing diverse sources of information improves the quality of the MVLSA word representations on many NLP tasks and makes them competitive with other popular word representation learning methods.

On the surface level, natural language is composed of words and sentences, but at a deeper, more conceptual, level these words and sentences convey information about real-world entities. Therefore, I claim that learning about entities can be even more useful than learning word representations in some fields of application. Consider the field of Information Retrieval for example. Information Retrieval~(IR) is concerned with the search and presentation of information inside semi-structured and unstructured sources of data. Even though \textit{Keyword based IR}, in which a user inputs a query in the form of a list of keywords, is the standard way of interacting with industrial IR systems such as \textsc{Google} and \textsc{Bing}, the field of IR is not restricted to keyword-based retrieval of documents. In the IR community, Entity Set Expansion (ESE)~\footnote{ESE is also called Entity Recommendation in literature.} is an established task of recommending entities\footnote{Entities are also called \textsc{Items} or \textsc{Elements} in the literature.} in a knowledge graph that are similar to a provided seed set of entities. Even small improvements in this task can have a tremendous impact on many fields.

For instance, imagine a physician trying to pinpoint a specific diagnosis or a security analyst investigating a terrorist network. In both scenarios, a \textit{domain expert} may try to find answers based on prior known, relevant entities -- such as a list of diagnoses with similar symptoms that a patient is experiencing or, a list of known terrorists. Instead of manually looking for connections between the known entities, \textit{searchers}  can save time by using an automatic \textit{Recommender} that can recommend relevant entities to them. My \indexed{second contribution} in this thesis is the \indexed{Neural Variational Set Expansion} (NVSE) algorithm~\cite{rastogi2018nvse} that can operate on noisy knowledge graphs constructed automatically from a natural text document and recommend relevant entities. NVSE learns a probabilistic representation of an arbitrary subset of knowledge graph entities and uses this representation for the task of Entity Set Expansion. Through extensive experiments against existing state-of-the-art methods, I show that the NVSE algorithm can outperform existing methods for Entity Set Expansion.

Although one can learn much about entities in the world by the computational analysis of words associated to them -- indeed, the NVSE method was based on this intuition -- but there are important cases where the information about entities is stored in the form of a knowledge graph. A \indexed{Knowledge Graph} (KG) is a collection of relations of inter-connected entities. Large-scale semi-manually constructed KGs such as Freebase and YAGO2 have been heavily used for NLP tasks such as Relation Extraction, Question Answering, and Entity Recognition in Informal Domains. In the final part of this thesis, I consider the task of Knowledge Base Completion and propose methods for learning representations of knowledge graph entities that are not derived from the text.  My \indexed{third contribution} is a method for learning embeddings of entities in Knowledge Bases that obey logical constraints~\cite{rastogi2017elkb}. I show that the proposed algorithm performs better than other baseline systems.

\section{Thesis Outline}
\label{sec:thesis-outline}
\begin{description}
\item [\chapref{cha:mvlsa}] presents the MVLSA algorithm for learning word embeddings from multiple sources of data and compares its performance to other state-of-the-art methods. Through experiments on a large number of word-similarity and word-analogy tasks, I show that the MVLSA embeddings are competitive with other methods for learning word embeddings.
\item [\chapref{cha:nvse}] describes the NVSE algorithm for recommending entities grounded in natural language text. This task is called Entity Set Expansion (ESE). The NVSE algorithm is based on the Variational-Autoencoder (VAE) framework for training deep-generative models. Through human evaluations conducted on the Mechanical Turk platform, we verified that the NVSE algorithm outperforms pre-existing state of the art ESE methods.
\item [\chapref{cha:kbe}] presents the logically constrained representation learning algorithm and compares it to other methods for learning representations of KB entities.
\item [\chapref{cha:more}] This chapter compares the various word-level and entity level algorithms developed in the previous chapters with each other on a few benchmark tasks.
\item [\chapref{cha:conc}] This chapter summarizes the contributions of the thesis and outlines directions for future work.
\end{description}

%% \itodo{outline}{Hint in the chapters about the experiments that you will do later. And then say later that these are in the context of knowledge bases rather than text, and there is an overlap between structured knowledge and representation learning for KBs in the context of and a portion of the thesis is devoted to a chapter per task and so a task would be , the word similarity thing could be put in, just word similarity task in its chapter, and then go back and train a VAE representation to add an extra column in the table, and then have a few extra paragraphs and then have the various tables. And that’s a task, and tenth next chapter will be preferable something that uses word embeddings and then the entities. And if the NLP word tasks are tiny broken up into unsupervised/supervised. The purpose is not to layout new tasks just use them in a variety of new tasks, then how does X relate to human sim judgments. and then do an analysis  Entity search / CMR entity linking entity embedding, KB entity embedding.}

\section{Thesis Statement}
\label{sec:thesis-statement}
In this thesis, I present new algorithms for learning representation of words and entities from multiple views of data. I show that the proposed MVLSA algorithm is a generalization of the classical LSA method to multiple views of data and that incorporating various co-occurrence matrices for learning word representations improves the quality of the learned word representations. I then present a deep generative model for learning representations of entities present in natural language text and I also present the results of an approach I developed for enforcing logical constraints on the representations learned for representing entities in a knowledge base. Finally, I compare the algorithms developed in the thesis on the benchmark tasks of Contextual Mention Retrieval and Entity Disambiguation.

%auto-ignore
\chapter{Background and Motivation}
\label{cha:background}
This chapter provides the background and terminology necessary for understanding the methods used throughout this thesis. Since later chapters will refer back to these sections, one may choose to skim this chapter and refer back to it as a reference.

\section{Unsupervised Representation Learning}
\label{sec:repr-learn}
At a high-level machine learning algorithms can be divided into two classes based on the available data and the type of task that is being performed, Supervised Learning and Unsupervised Learning. Supervised machine learning methods receive a \textit{labeled} dataset containing pairs of inputs and outputs, and the goal is to construct a decision rule, that has high accuracy, based on the labeled data. On the other hand, unsupervised learning receives only a large dataset of input data, and the goal is to learn the regularities and patterns in the input data. The learned representations can be evaluated either by using the learned representations in a downstream task or by evaluating intrinsic properties of the representations such as invariances and nearest neighbors in the space of the learned representations.

Unsupervised learning can be highly beneficial, in comparison to supervised learning, because of the abscence of a single task and lack of sufficient labeled data. In such scenarios, learning the similarity between instances through unsupervised learning and leveraging that information can help to significantly reduce the sample complexity of learning. There are numerous examples of such applications. For example, \cite{nigam1998learning} used the expectation maximization algorithm over unlabeled data to learn feature weights for a Naive-Bayes classifier and showed that the number of samples required to achieve the same accuracy as a fully supervised naive bayes classifier decreased by as much as 50\%.

Besides the obvious benefit of reducing the requirement on the number of labeled samples, unsupervised learning can also help by making the learnt features more task-agnostic. This is because, in contrast to supervised learning, unsupervised learning does not fit its features to the labels provided for a single task. Therefore the representations learnt during unsupervised learning can help in scenarios such as multi-task learning~\cite{ye2016urban,he2011graphbased} and few shot learning~\cite{fu2015transductive}.

\subsection{Shallow Representation Learning}
\label{sec:shall-unsup-repr}
Principal Component Analysis (PCA) is one of the earliest statistical methods for unsupervised representation learning. Commonly PCA is known to be just an algorithm for linear dimensionality reduction shown in \algoref{alg:pca}. \begin{algorithm}
  \caption{The PCA Algorithm}\label{alg:pca}
  \begin{algorithmic}[1]
    \State \textbf{Given}: We are given $N$ data points each of which is $d$ dimensional.
    \Statex Let the $\ath{i}$ observation be called $x_i$ and, let $X$ be a $d
    \times N$ real matrix whose $\ath{i}$ column contains $x_i$.
    \Statex Finally, let $\mathbf{1}$ be a
    column vector with $N$ rows whose elements are all $1$.
    \Function{PCA}{$X, k$}
    \State Let $\bx = \frac{1}{N}{\mathbf{1}^TX}$ \Comment{ $\bx$ is simply the
      sample average, i.e. $\bx = \frac{1}{N}\sum_{i=1}^N x_i$.}
    \State Let $\Sk$, $\Uk$ equal the top $k$ singular values and
    corresponding left singular vectors of $X - \mathbf{1}\bx$.
    \State \textbf{return} $\bx, \Uk, \Sk$
    \EndFunction
  \end{algorithmic}
\end{algorithm}
However, PCA models the data with a single latent subspace, and it can be interpreted not just as a procedure for linear dimensionality reduction but also as a \textit{shallow} unsupervised representation learning algorithm, because the singular vectors $\Uk$ constitute an orthogonal basis for the optimal $k$ dimensional linear subspace that \textit{best} encodes $X-\mathbf{1}\bx$ according to the Frobenius norm of the total training error matrix. This interpretation of PCA as the solution of the optimal subspace learning problem is also referred to as the Geometric View or Synthesis View of PCA in the literature.

\paragraph{Probabilistic PCA} Although, the Geometric View of PCA shows us that the projection matrix output by PCA maps a datapoint to a subspace which is \textit{closest} to the training data, and therefore it motivates PCA as a representation learning algorithm. However, there is a more modern, probabilistic view of PCA, which frames PCA as a latent variable model and gives excellent insight into the type of representations that PCA learns. The probabilistic view of PCA was simultaneously introduced by ~\cite{tipping} and~\cite{roweis} where they showed that the output of the PCA algorithm could be used to estimate the parameters of a particular type of directed graphical model. Specifically, they considered the problem of parameter estimation for the following model.

Assume that the $\ath{i}$ observation is generated as follows, first a latent
variable $z_i \in \mathbb{R}^k$ is sampled from a normal distribution
$\mathcal{N}(\mathbf{0}, I)$. Then conditioned on the value of $z_i$, the vector
$x_i \in \mathbb{R}^D$ is drawn from $\mathcal{N}(Mz_i + \nu, \sigma^2 I)$. Here $M$ is the transformation matrix, $\nu$ is a vector and $\sigma$ is a positive scalar. In other
words \begin{equation}\label{eq:ppca}
p(x_i | z_i, M, \nu, \sigma^2) = \mathcal{N}(Mz_i+\nu, \sigma^2I).
\end{equation}

According to the above model the observed data is drawn from a continuous mixture model and since this is an unsupervised probabilistic model it makes sense to talk about the MLE estimate of $M$, $\hat{M}$. \cite{tipping} showed that $\hat{M}$ can be computed as: \[\hat{M} = \Uk(\frac{\Sk}{N} - \sigma^2I)^{1/2}R,\] where $R$ is an arbitrary orthonormal matrix, $\Uk$ is an orthogonal matrix containing the top $k$ eigen vectors, and $\Sk$ is a diagonal matrix containing the top $k$ eigen-values, of the empirical covariance matrix $1/n X^TX$. Therefore, if $\sigma = 0$ then $\Uk (\frac{\Sk}{{N}})^{1/2}$ is the maximum likelihood estimator of $M$ and $\Uk (\frac{\Sk}{{N}})^{1/2}x$ is the \textit{plugin estimate} of $z$. This interpretation also clarifies the relation between PCA and other probabilistic methods such as factor analysis~\cite{spearman1904general,thurstone1947mfa}. For example we can easily see from \eqnref{eq:ppca} that probabilistic PCA is just a more restricted form of factor analysis where the error variance along each dimension is assumed to be the same. In contrast to \eqnref{eq:ppca} factor analysis~\cite{} assumes that the errors along each dimension can have different variance but are still uncorrelated, i.e. \begin{equation}\label{eq:fa}
p(x_i | z_i, M, \nu, \sigma^2) = \mathcal{N}(Mz_i+\nu, \begin{bmatrix}\sigma^1_1 & 0 & 0 & \ldots \\ 0 & \sigma_2^2 & 0 & \ldots \\ 0 & 0 & \sigma_3^2 & \ldots \\ 0 & \ldots & 0 &\sigma_d^2 \end{bmatrix}).
\end{equation}

\subsection{Deep Representation Learning}
\label{sec:deep-unsup-repr}

Pedagogically simple examples of the utility of unsupervised learning arise in learning disentangled representations of low dimensional manifolds such as the swiss-roll dataset shown in \figref{fig:swissroll}.
\begin{figure}[htbp]
  \centering
  \includegraphics[width=.75\linewidth]{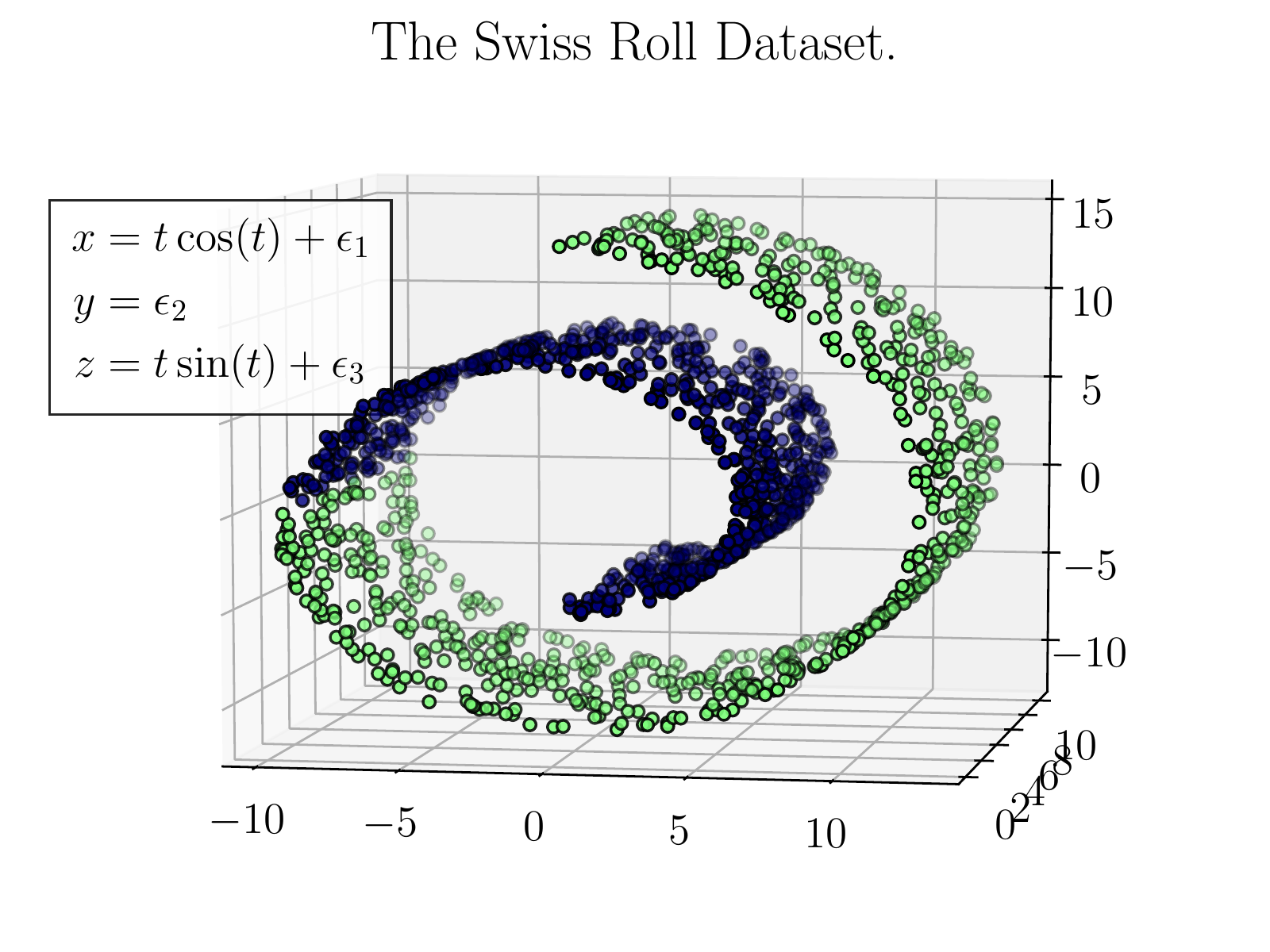}
  \caption[Pedagogical example to motivate unsupervised learning.]{The Swiss-Roll Dataset, where $3$ features $x,y,z$ are observed. $x,z$ depend on a single underlying parameter $t$, and $y$ is just noise. Learning a map from $x,y,z$ to $t$ from unsupervised data can decrease the number of samples needed to discriminate between green and blue points accurately.}
  \label{fig:swissroll}
\end{figure}
\figref{fig:swissroll} shows that learning a good representation of the observed data can be fruitful later on for supervised learning. Locally Linear Embeddings (LLE)~\cite{saul2003think}, Laplacian Eigenmaps~\cite{belkin2003laplacian} and IsoMaps~\cite{tenenbaum2000global} are a few of the best-known unsupervised machine learning algorithms that work on this principle. A seminal paper by~\cite{bengio2004out} unified these three algorithms -- amongst several others -- by recasting the problem of unsupervised learning of embeddings to learning eigenfunctions of a data-dependent kernel.

After the success of these early algorithms, the field of unsupervised representation learning grew rapidly. A significant development was the construction of algorithms such as the Restricted Boltzmann Machine~(RBM)~\cite{2006Sci...313..504H} and Deep Belief Networks~\cite{hinton2006fast} for training deep neural networks from unsupervised data. These methods learned the parameters of neural networks which can convert sparse input data to dense, distributed, vector representations. Moreover, these representations were proved to be useful for real-world tasks such as Collaborative Filtering~\cite{salakhutdinov2007restricted}.

More recently, a new method called the Variational Autoencoder (VAE)~\cite{kingma2014auto,rezende2014stochastic} for learning deep, non-linear, generative models from unlabeled data was proposed, which has resulted in tremendous advancements in unsupervised and semi-supervised machine learning~\cite{kingma2014semi,miao2016neural}. In the following section, I give an overview of the VAE framework.

\subsubsection{Variational Autoencoders}
\label{sssec:vae-bg}
\indexed{Generative modeling} of data is a broad topic in data science. A generative model of a dataset can make its underlying factors of variations more explicit, and it can help us summarize and understand large amounts of data quickly. An example of a generative model of data is the Latent Dirichlet Allocation Topic Model~\cite{blei2003latent}. A topic model can summarize a large dataset by discovering clusters of commonly co-occurring features.  Many ``component analysis'' methods such as PCA and CCA discussed in previous sections can be interpreted as statistical generative models as well~\cite{tipping,bach2005probabilistic}.

\indexed{Latent variable models} are a useful sub-type of generative model that can be useful in situations where the observed data lies in a high-dimensional space, but the elements of the data contain strong inter-dependencies. In common parlance, such data is said to lie on a low-dimensional manifold. If the inter-dependencies between the components can be de-coupled by introducing a small number of latent variables without introducing too much error, then such a latent variable model can be useful both for its predictive accuracy and also for its explanatory power. For example, suppose that we are trying to learn a generative model of human face images. The observed pixels in an image have to satisfy many constraints such as bi-lateral symmetry across the face, consistent skin coloring and relative proportion of eyes, nose, and ears. Because of all these constraints, the pixel intensity at the top-right corner of a face image may be highly correlated to the pixel intensity at the bottom-left corner in the general population of the entire dataset. However now consider a situation where we can stratify the dataset by the gender, age, race, and weight of the person. Within each stratum, the correlation between the top-left pixel intensity and the bottom-right pixel intensity will be closer to zero than the correlation in the general population. Even though these $4$ variables are not observed in the dataset, by introducing these $4$ factors as latent variables and then adding a conditional independence assumption amongst the observed variables given these unobserved variables we can make the model better suited to the data. Another motivation for latent variable models is that they are a mixture model and can approximate multi-modal distributions easily. Many excellent books on machine learning -- such as ~\cite{murphy2012ml,bishop2006prm,mackay2002iti} -- expand upon this point of view, and I will not expand upon this more.

The Variational Autoencoder is a framework for learning the parameters of a \textit{latent variable} generative model from i.i.d. unlabeled data samples. Formally, say that we are given a dataset $\mathcal{D}$ containing $n$ i.i.d. samples of a random variable $X$. We posit that there exists a latent random variable $Z$ such that instances of $X$ are conditionally independent given $Z$. In other words we posit the following generative story:
\begin{equation}\label{eq:gen-story}
Z \sim \pi(z), \qquad X | Z \sim p_\theta(x | z)
\end{equation}
Here $\theta$ parameterizes the conditional probability distribution of $X$ given $Z$. According to this model the marginal distribution of $X$ is given as
\begin{equation}\label{eq:x-marg}
p(X) = \int_z p_\theta(x|z) \pi(z) dz
\end{equation}

Maximum Likelihood Estimation (MLE) with regularization is perhaps the most common method for learning the parameters $\theta$ for a statistical model given $\mathcal{D}$. For simplicity I will omit discussion of the regularization for now and focus only on the likelihood function itself. The MLE procedure maximizes the likelihood of the parameters $\theta$ for a given dataset $\mathcal{D}$. Therefore the MLE procedure maximized the following objective:
\[ \mathcal{J}_{\text{MLE}}(\theta) = \sum_{i=1}^n \log p_\theta(x_i) = \sum_{i=1}^{n} \log \int p_\theta(x_i | z) \pi(z) dz .\]
In some situations it may be possible to compute the above objective, for example in a discrete mixture model without priors on the mixture parameters but in general it is not possible to compute the above sum over the dataset. A common solution for such problems utilizes the following identity called the \textit{Variational Identity} which introduces a new distribution over the latent variables that I denote $q(Z)$
\begin{subequations}\label{eq:var-eq}
\begin{align}
\log p(X) &= E_{Z \sim q(Z)}\left[\log \frac{p(X, Z)}{q(Z)}\right] + \KL(q(Z) \midd p(Z\mid X))\label{eq:energy}\\
\log p(X)&= E_{Z \sim q(Z)}\left[\log {p(X, Z)} \right] + \mathbb{H}[{q(Z)}] + \KL(q(Z) \midd p(Z\mid X))\label{eq:em}\\
  \log p(X)&= E_{Z \sim q(Z)}\left[\log {p(X | Z)} \right] - \KL[{q(Z)} \midd \pi(Z)] + \KL(q(Z) \midd p(Z\mid X))\label{eq:h}
\end{align}
\end{subequations}
Identity~\eqref{eq:energy} can be easily verified simply by expanding the definition of the KL divergence. In the above indentities, $q(Z)$ is actually shorthand for $q(Z \mid X, \phi)$ \ie $q(Z)$ is an arbitrary distribution over the latent variables that can freely depend on the values of $X$ and other parameters $\phi$. The variational auto-encoder specifies a special type of $q(Z \mid X, \phi)$ which is parameterized as a differentiable neural network. I will give more details about specific architectures  in \chapref{cha:nvse}.

\paragraph{Relation to Expectation Maximization} Consider identity~\eqref{eq:em} and note that if $q(Z)$ exactly equals $p(Z|X)$ then the $\KL(q(Z) \midd p(Z\mid X))$ term becomes zero. Moreover we get the formula that
\begin{align}
  p(X) = E_{Z \sim p(Z|X)}\left[\log {p(X, Z)} \right] + H[{p(Z|X)}]
\end{align}
Here the $H$ operator computes the entropy of a distribution. The EM procedure discards the entropy of $p(Z|X)$ and optimizes the Joint Likelihood with Current Parameters to get the following iterative learning rule:
\[\theta_{t+1} = \arg\max_\theta E_{Z \sim p_{\theta_t}(Z|X)}\left[\log {p_\theta(X, Z)} \right], \]
So we can see that the Expectation Maximization procedure is simply a special case of the variational optimization procedure. However, this special case of variational optimization enjoys a wonderful property of \textit{Monotonicity}, i.e. the value of the objective $E_{Z \sim p_{\theta_t}(Z|X)}\left[\log {p_\theta(X, Z)} \right]$ increases with $t$. \figref{fig:em-explanation} gives a graphical explanation of the EM procedure described above.
\begin{figure}[htbp]
  \centering
  \includegraphics[width=.75\linewidth]{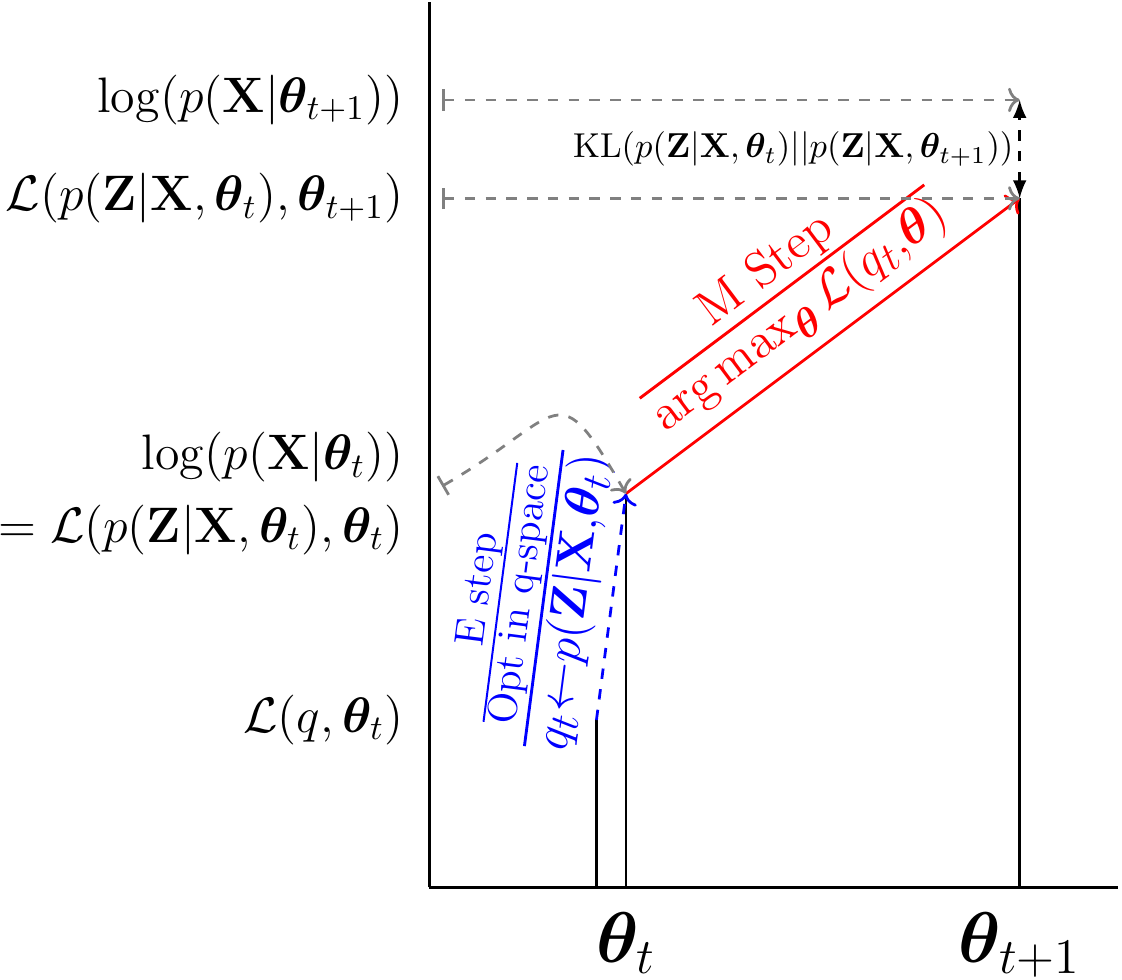}
  \caption[Graphical depiction of EM optimization]{A graphical depiction of the EM iteration as coordinate ascent in the space of probabilities and parameters. Let $\mathcal{L}(q,\theta)$ denote $\displaystyle\int_z q(Z) \log \frac{p(X,Z|\theta)}{q(Z)}$. In the so-called E-step we optimize by setting $q^t = p( Z| X, \theta_t)$ and in the M-step we set $\theta$ by maximizing $\cL(q_{t}, \theta_t)$.}
  \label{fig:em-explanation}
\end{figure}

\subsection{Multiview Representation Learning}
\label{sec:mult-repr-learn}
The goal of multi-view representation learning is to learn a single unified representation for data that is observed via multiple views/channels, but that has a single underlying source. There is no standard definition of a  ``view'' in multiview learning and it can even overlap with multimodal learning~\cite{ngiam2011multimodal} to some extent. Multiview learning can refer to any technique that learns one classifier/regressor per view and a common underlying representation, that is unknown apriori. \cite{sridharan2008information} state that the fundamental assumption underlying multi-view learning is that any of the views used for learning \textit{alone} has sufficient information about the target.

There are two natural classes of multi-view learning algorithms that utilize this assumption. The first class comprises the algorithms such as Co-Training~\cite{blum1998combining} and co-regularization~\cite{sindhwani2005co} which regularize the predictions of classifiers learnt from separate views of data to agree with each other. The second class of algorithms based on correlation analysis such as~\cite{kakade2007multi} and \cite{wang2015deep} focus on learning correlated representations of data using unsupervised learning methods and these \textit{correlation analysis} based methods will be the focus of this thesis.\footnote{We note that correlation analysis is not the sole method for unsupervised multi-view learning. A good example of such a technique is the work by \cite{ngiam2011multimodal} who presented an auto-encoder based framework for learning feature extractors that work well for feature learning from multimodal data.}

There are many natural applications for unsupervised multi-view learning especially in the field of natural language processing (NLP) because of the high dimensionality and sparsity of the bag-of-words feature representation that is typically employed for many NLP tasks. By properly utilizing multiple view of data we can learn better representations of data which can even provably reduce the number of samples required for learning. For example, consider models such as Glove~\cite{pennington2014glove} and Word2Vec~\cite{mikolov2013linguistic} that learn a dense vector representation of a word from a single view of linguistic data such as a large corpus of natural language text sequences. However these methods are not able to distinguish between antonyms such as the words \textsc{good} and \textsc{bad} because both the antonyms are used in very similar contexts. On the other hand by using an external view such as the dictionary entries for a word we can learn to distinguish between two antonyms. In this sense, the two views bring complementary information that a multi-view learning algorithms can utilize.
%% There is a word that is in one dataset,
%% Multiple views help when i bring complementary information or in suppressing bad correlations.
%% these views can bring complementary information.
%% One view can help figure out what is signal vs noise, having the second view suggests that these are somewhat not useful similarities. these views bring complementary information.
%% There are two words that co-occur in one view. (foreshadow using examples about test data)
Other successful applications include multiclass classification~\cite{arora2014multi}, clustering~\cite{chaudhuri2009multi,zhang2016deep} and ranking/retrieval~\cite{vinokourov2003inferring,cao2018deep}. Another application of multiview learning for natural language processing was presented by \cite{benton2016arora}  who also released a dataset for learning multiview representations of twitter users.\footnote{\url{https://www.cs.jhu.edu/~mdredze/datasets/multiview_embeddings}}

There has also been work done on providing guarantees for performance improvement so that multiple views can be guaranteed to not hurt the performance of a regressor or classifier. A typical assumption that is utilized in Multiview learning in order to improve the sample complexity of a learning method is that the views of data are conditionally independent given the underlying common representation. Conventional methods for machine learning do not make this assumption, and they may concatenate all the features from the different views and treat the multi-view dataset as a single-view dataset.
 For example \cite{kakade2007multi} showed that using unlabeled data from two views can reduce the sample complexity of prediction problems. Specifically they provided a semi-supervised algorithm which first uses unlabeled data to learn a kernel, and then regularized a ridge-regression classifier according to the learnt norm.

In the remaining part of this section, I describe a few classical multiview representation learning techniques that are pertinent to this thesis.

\subsubsection{Canonical Correlation Analysis}
\label{sssec:cca}
Canonical Correlation Analysis (CCA), first proposed by \cite{hotelling1935the}, is a procedure that finds linear projections of two datasets such that the correlation between the two projections is maximized. Let $X_1, X_2$ be two centered and standardized matrices denoting two views of the data with $n$ rows and $d_1, d_2$ columns respectively. Let $P_j = X_j U_j$ where $U_j \in \mathbb{R}^{d_j \times d}$ is a linear projection matrix and $j \in \{1, 2\}$. $d_1,d_2$ are the dimensionalities of $X_1, X_2$ respectively, and $d$ is the dimensionality of the latent space. Let $\hat{\Sigma}_{jj'} = \frac{1}{n} X_j^TX_{j'}$. The CCA procedure determines projection matrices $U_1, U_2$ according to the following optimization problem
\begin{align}
  \arg\max_{U_1, U_2}\ &U_1 \hat{\Sigma}_{12} U_2 \nonumber\\
  \text{ subject to }  U_1\hat{\Sigma}_{11}U_1 = 1 &\text{ and } &U_2\hat{\Sigma}_{22}U_2 = 1\label{eq:cca}
\end{align}

\cite{hardoon2004canonical} highlight two of the most important ways to motivate the CCA objective and to derive its solution and \cite{hastie1995penalized} highlight an interesting connections between CCA and Fisher's Linear Discriminant Analysis in the case of categorical classification, which I will not repeat here. Instead I list some of the important properties of the CCA procedure that are most relevant to us
\begin{itemize}
\item The CCA projections $P_j$ are invariant to shifting and scaling the data.
\item Let $A, S, B$ denote the singular value decompotion of $\hat{\Sigma}_{11}^{-1/2}\hat{\Sigma_{12}}\hat{\Sigma_{22}}^{-1/2}$. I.e. \[\hat{\Sigma}_{11}^{-1/2}\hat{\Sigma_{12}}\hat{\Sigma_{22}}^{-1/2} = ASB^T,\] then $U_1 = \hat{\Sigma}_{11}^{-1/2}A$ and $U_2 = \hat{\Sigma}_{22}^{-1/2}B$.
\end{itemize}

\paragraph{Computational Aspect: Algorithms for CCA}\label{sssec:algcca}
As described above the CCA learning problem can be reduced to solving a singular value decomposition problem of an asymmetric matrix $\hat{\Sigma}_{11}^{-1/2}\hat{\Sigma_{12}}\hat{\Sigma_{22}}^{-1/2}$. However, empirically computing this matrix may be intractable because of the quadratically increasing memory requirement as the number of examples in the dataset increases. More recently a number of works have proposed more scalable methods for computing CCA such as \cite{ge2016efficient,arora2017stochastic,gao2017stochastic,allen2017doubly}. For example, \cite{arora2017stochastic} proposed a convex relaxation of the original CCA optimization problem and they prosented stochastic approximation algorithms for optimizing the resulting objective in a streaming setting. And they showed that their proposed stochastic approximation algorithm outperformed existing state-of-the-art methods for CCA on a real dataset.

\paragraph{Nonlinear CCA}\label{sssec:deep-cca}
An interesting direction for generalizing Canonical Correlation is to use non-linear functions for projecting the views. Kernel CCA~\cite{akaho2001kernel,hardoon2004canonical} and Deep Canonical Correlation Analysis (DCCA) by~\cite{andrew2013deep} were two efforts in this direction. KCCA is a nonparametric method for learning non-linear transformations that produce high correlated projection in a reproducing kernel Hilbert space. On the other hand, DCCA trains two deep-neural networks to learn nonlinear transformations of respective data views by optimizing a regularized correlation objective.

\subsubsection{Generalized CCA}
\label{sec:generalized-cca}
Canonical Correlation Analysis is one of the earliest multiview learning algorithms; however, it is limited to only two views by construction. In order to remove this limitation, several generalizations of CCA have been proposed in the literature. \cite{kettenring1971canonical} proposed $5$ possible ways of generalizing CCA, and all of those $5$ methods possessed the special property that they reduced to standard CCA when using only two views of data. \cite{asendorf2015informative} further extended the work by \newcite{kettenring1971canonical} and proposed $20$ possible generalization of CCA. Instead of reviewing all of the possible generalizations of the CCA objective I will focus on one particular variant of Generalized CCA -- introduced by \newcite{carroll1968generalization} -- which \citeauthor{kettenring1971canonical} called the \textsc{MaxVar} generalized CCA method. Like all the other variants studied by \citeauthor{kettenring1971canonical} \textsc{MaxVar} GCCA projections are also equivalent to the standard CCA projections in the case that there are only two views of the data.

Let $X_1, \ldots, X_J$ be $J$ observed views of the same underlying data, with $J \ge 2$. The \textsc{MaxVar GCCA} procedure -- which I will call GCCA from now on -- finds $J$ projection matrices $U_j \mid j \in \{1, \ldots, J\}$, and a common latent representation $G$ such that the sum of the squared correlations between the view-projections $X_jU_j$ and the latent representation $G$ is maximized. Formally, the GCCA objective is:
\[ \arg\max_{Y, U_1, \ldots, U_J} \sum_{j=1}^J \trace(Y^T (X_j U_j)) \text{ subject to } Y^TY = I\]
This constrained maximization can be reframed as least squared error minimization optimization problem as follows
\begin{align}
   \arg\min_{Y, U_1, \ldots, U_J} \sum_{j=1}^J \trace((Y -  X_j U_j)^T(Y -  (X_j U_j))) \text{ subject to } Y^TY = I\label{eq:gcca-obj}
\end{align}
I will refer to this objective in \chapref{cha:mvlsa}.

\section{Information Retrieval and Set Completion}
\label{sec:irsc}
The field of \textit{Information Retrieval} (IR) is concerned with the search and presentation of information inside semi-structured sources. Examples of semi-structured information sources are web pages, images, research papers, and \resumes.  IR is different from querying databases because of the lack of precise semantics of data.
% This lack of semantics means that it is impossible to specify the right way of aggregating and filtering the data.
For example, IR systems, such as Internet search engines, need to answer queries like, ``What is the largest city in the world?'' without asking the user, which attribute of a city should be used to sort the cities, or what is the precise definition of a ``city''.\footnote{Clearly not all users will be satisfied with the results, in which case they will modify the query and retrieve a new set of web pages.}

Instead of asking for all sorts of clarifications IR systems work \textit{intelligently} and they find documents that are most likely to contain the answer for a query. Therefore, IR systems are best thought of as fast and efficient statistical prediction engines which incorporate:
\begin{enumerate}
\item   \textit{A document level prior about the importance of a document.} For example, the Pagerank algorithm is an unsupervised method that utilizes the hyperlinks in web-pages to learn the prior probability of the importance of a web-page~\cite{brin1998the-anatomy,yin2016ranking}.\footnote{If the document collection does not contain hyperlinks, then other meta-data such as the length of the document, author information and last modification time can be used to model the importance of a document.}
\item \textit{The probability of a document's relevance to a query.} {Search Engine Click-Logs that contain the URL that a person clicked amongst the search results in response to a query provide a useful signal for estimating this.} Finally, the document's content can be analyzed to predict whether it is relevant to a question.
\end{enumerate}

Even though \textit{Keyword based IR}, in which a user inputs a query in the form of a few keywords is the standard method for interacting with well known IR systems such as Google and Bing, research in IR has not restricted to keyword-based retrieval, nor has it restricted to the retrieval of documents~\cite{manning2010introduction}. \textit{Recommender Systems} on e-commerce websites that retrieve \textit{items}, or \textit{entities}, relevant to a customer, from a customer's profile and a large corpus of customer-item interactions are examples of \textit{non-keyword} IR systems that return items from a catalog without any textual query.\footnote{Other examples of \textit{non-keyword} IR systems are multimedia retrieval systems that allow users to use an audio recording or an image for retrieving the results. I will not focus on multimedia retrieval.}
% A second example of IR systems that do not retrieve documents are systems
% that retrieve entities from structured knowledge bases such as RDF triple stores
% or Knowledge Graphs. For example, major search engines
% nowadays populate a carousel of related films in response to queries like
% ``Movies in 2016''. This listing is generated by filtering a \textit{Knowledge
%   Graph} that contain meta-data about entities and their
% inter-relations and they are interesting objects of study by themselves.

Let us consider another application of non-keyword, non-document IR systems which will also motivate my research problem: Consider the situation of a recruiter who needs to find suitable candidates for a job from a large corpus of candidates. Moreover, the recruiter has access to the candidates' friendship network, \resumes, personal statements and publications. More specifically, the job may require people who possess a good understanding of ``information retrieval'' techniques and the ``Hindi'' language. However, it is possible that not every candidate's profile contains that information. Instead, publishing in the SIGIR conference, or being a citizen of India, or being friends with multiple people like that may be good indicators of the above qualities. These correlated qualities could be inferred from examples of desirable entities. This example suggests that:
\begin{enumerate}
\item Graph structured side information about entities can be useful for IR.
\item Examples of relevant entities provide useful feedback to an IR system.
\item Entity retrieval is a useful task.
\end{enumerate}

The problem of finding more items that are similar to a given set of items and the problem of finding items in response to a keyword query have both been studied extensively. In case the user does not provide any keyword query and only provides examples of items to be retrieved then the problem will be considered as the problem of \textbf{\textit{Set Completion}}

\subsection{Set Completion and Variants}
\label{sec:set-compl-vari}
Retrieving entities that are similar to a few example entities is an artificial intelligence task with broad utility.

\subsubsection{Examples of Set Completion Tasks}
\label{sssec:examples-set-comp}
The problem of finding a suitable candidate for a job can be framed as the problem of {Set Completion} if a few examples of suitable candidates are given and the system needs to rank the unknown candidates in a database according to their suitability for the job. This task of finding suitable candidates for a task is broadly referred to as the problem of \textit{Expertise Retrieval} and it operationalized and evaluated in various ways~\cite{balog2012expertise}. Expert Retrieval is a special case of a more general problem called \textit{Entity Retrieval} in which, a seed entity, a description of the relations between the target entity and the seed entity, and a few examples of the target entities are provided as inputs. Variants of this task, depending on whether entities are provided as examples or not, have been run multiple times in the annual TREC conference~\cite{balog2012expertise}.

Another example of a set completion task is the \textit{Document Routing} problem, in which documents related to a topic need to be retrieved given a natural language text describing a user's information need and some example documents is another example of the set completion task. Document Routing was an important shared task evaluated during the early annual TREC conferences~\cite{schutze1995a-comparison}.

A third example of the set completion problem is the task of
\textit{Item Recommendation} to customers from their purchase history and profile. If information about the purchase history of other customers is also available then the problem is called \textit{Collaborative Filtering} otherwise the problem is called \textit{Content Based Recommendation}. Note that technically the term \textit{filtering} should be used if the task is to classify whether an entity lies in a set instead of ranking the remaining entities. Such a problem, where the set of example entities needs to be increased is also called the \textit{Set Expansion} problem.

Set completion tasks where relational information amongst the entities is
available were called the \textit{Vertex Nomination} (VN) task
by~\cite{fishkind2015vertex} and \textit{Class-Instance acquisition}
by~\cite{talukdar2010experiments}. The Vertex Nomination terminology is apter in situations where the graphs are more homogeneous with lesser entity level features, and the edges between the entities are not too sparse.\footnote{Note that the research on VN is evolving and more algorithms are being introduced which get rid of some of these assumptions, and therefore these distinctions are not strict.} In the case where additional meta-data is available, and the graph is sparse, and there can be multiple possible types that a particular vertex can take on then the terminology of \textit{Class-Instance acquisition} is perhaps apter.

Finally, the task of \textit{Query By Entity}, or \textit{Query By Examples} has been studied in the database community and the semantic web community as a way of helping user's interact with databases or knowledge graphs~\cite{metzger2013qbees}. In these tasks, a computer system needs to infer and execute an unknown query on the basis of a few examples of the results, and therefore this task too can be considered to be a set completion task.

\subsubsection{Methods for Set Completion}
\label{sssec:methods-set-comp}
One of the earliest methods for set completion was a patented approach called ``Google Sets''~\cite{tong2008system}\footnote{Note that although the patent application was filed in 2003 it was only granted in 2008.} that performed set completion by modeling the input examples as samples from a mixture of distributions over pre-existing lists. After receiving a few examples of entities, the mixture components were estimated and then new entities were generated from this distribution. Inspired by this approach~\cite{ghahramani2005bayesian} introduced the method of ``Bayesian Sets''.  The Bayesian Sets method ranks the entities by the ratio of two probabilities. The first probability measures whether the entity and the data were generated from the same parameters and the second probability measures whether the data and the entity were generated independently. Some theoretical results about the \textit{stability} of the method in the presence of correlated features were presented in~\cite{letham2013growing}. However, the method by itself does not provide guarantees about the \textit{quality} of the rankings.

A different approach than the Bayesian sets method which creates a ``profile'' of the criterion for being in a set is to define a similarity function such that new items that are most similar to the example entities get a high rank. SEAL and its variants~\cite{wang2007language-independent,wang2008iterative} were early approaches that learned the similarity between the new entities and example entities using methods like ``Random Walks'', and ``Random Walks With Restart''. Other methods for computing similarity have also been explored such as the sum of cosine similarities amongst others. In modern parlance, any kernel method that computes similarities between two entities can be used for ranking the entities. The hyper-parameters for the kernel methods can be trained via $5$-fold cross-validation on the training data.

A very different approach to this problem is to treat the problem as a binary classification task, where only positively labeled examples and unlabeled data is available. Viewed this way any generative probabilistic model usable for binary classification can be applied in a principled manner to this problem, for example, the generative Naive Bayes algorithm was applied for binary classification. \cite{nigam1998learning,nigam2000text} used precisely this model with the EM algorithm to utilize unlabeled data to learn the parameters of a naive Bayes text classifier.

A different principled approach is to apply the PU learning framework~\cite{denis1998pac-learning}.  Under the PU learning framework, the learning algorithm tries to minimize the total probability of labeling the unlabeled data as positive while holding the probability of correctly labeling the labeled data above the desired recall rate~\cite{liu2002partially,liu2016learning}. \cite{li2010distributional} compared the performance of a PU learning-based classifier to distributional similarity based methods and the Bayesian Sets method on the problem of entity set expansion. The distributional similarity based method ranks entities from the cosine similarity of its distributional signature (PMI features). They showed that their PU learning method outperformed both distributional methods and the Bayesian Sets method. Recent work by~\cite{natarajan2015learning} and~\cite{rao2015collaborative} amongst others applied  PU learning methods to problems of matrix completion and collaborative filtering while incorporating additional graph information as well. Note that PU Learning is a very active research area and activity in this area is accelerating.

The problem of set completion can also be solved using the \textit{learning to rank} approach~\cite{li2014learning}. In the learning to rank framework, the set completion algorithm learns a pairwise decision function that receives two entities as inputs and decides which of those two entities is more likely to be a member of the set. This function can be trained so that it always picks the input labeled entities to be members of the set in comparison to unlabeled entities. Learning to rank methods are very popular in the Information Retrieval community~\cite{manning2008introduction}.

The problem of vertex nomination was tackled using the Adjacency Spectral Embedding method and its extensions by~\cite{sussman2012consistent,fishkind2015vertex} on communication graphs where the Stochastic Block Model is a reasonable approximation to the generative process for the observed graph. On the other hand, when the graphs were manually created knowledge graphs such as Freebase, or automatically extracted OpenIE knowledge graphs such as the Textrunner graph, which naturally exhibits sparsity, and bipartiteness, then this problem has been tackled using \textit{Graph Based Semi-Supervised Learning} algorithms such as the Label Propagation algorithm, the Adsorption, and the Modified Adsorption algorithm by~\cite{talukdar2010experiments}.\footnote{The framework of \textit{querying by examples} on RDF triples provides a useful framework for framing the problem of set completion. This is also a large area, and I leave it out of this review.}

Finally, set completion can be reduced to the problem of \textit{Link  Prediction} or \textit{Knowledge Base Completion} by adding a new meta-vertex that represents the set and then connecting the labeled entities in the training set to the meta-vertex. I can represent the likelihood of an unlabeled entity being part of the group as the score assigned to an edge that connects the meta-vertex to the unlabeled entity. See~\cite{nickel2016review} for a review of this area.

\subsection{Existing Work on Entity Search}
\label{sssec:existing-work-entity-search}
Research in entity search over large text corpora was accelerated with the start of the TREC entity retrieval and expertise retrieval tracks~\cite{balog2012overview,balog2012expertise}.  These shared tasks considered the same problem as us, where a query was a bag of keywords, and the result of a query was a ranked list of individual entities, each of which was an answer. \cite{dalton2014entity} further used knowledge graphs for feature expansion. One of the dominant methods was introduced by~\cite{balog2011query}, which is based on entity language models and harnesses entity categories for ranking and for restricting answers to the desired type. However, these methods have been tested only on situations where large Wikipedia pages were available for estimating the language models associated with an entity. A similar approach, of creating entity language models, using only the text surrounding the mentions of an entity was earlier explored by~\cite{raghavan2004exploration}. However, they did not consider how to incorporate side information or relevance feedback.

Searching and exploring text corpora that are annotated with entities and linked to a KG has been addressed in various projects, most notably the Broccoli system~\cite{bast2014semantic}, Facetedpedia~\cite{li2010facetedpedia}, ERQ~\cite{li2012entity-relationship}, STICS~\cite{hoffart2014stics}, and DeepLife~\cite{ernst2016deeplife}. The work by~\cite{agrawal2012entity} Expanded upon the work by~\cite{li2012entity-relationship} and added a notion of similarity between entities, which they called ``near'' queries. Additionally, they utilized the ``Spreading Activation'' method for including graph structure into the ranking of entities. Furthermore, they only considered the Wikipedia graph as an example.

\cite{sawant2013learning} and \cite{joshi2014knowledge} considered the problem of answering short keyword-based text queries over a combination of textual and structured data. Their approach was to jointly learn the segmentation, the entity, class and predicate interpretation of the input query (in text form), and the ranking of candidate results. They did not consider the problem of entity-based relevance feedback, however. \cite{yahya2016question} worked on supporting complex queries on knowledge bases that also contain textual web content in their fields. They called such knowledge graphs ``Extended Knowledge Base''.

Recently~\cite{savenkov2016when} and \cite{xu2016enhancing} considered the extended knowledge graph as a starting model for performing question answering over knowledge bases. Specifically, they showed that access to related text could improve the performance of various sub-components in information extraction and semantic parsing pipelines, such as entity linking and coreference resolution.

\subsection{Distributed Representations of Knowledge Graphs}
\label{sssec:backg-dist-repr}
The singular vectors of word-document co-occurrence matrices were one of the first vector representations of words and documents used in the field of NLP and Information Retrieval. After the work of~\cite{mikolov2013linguistic,mikolov2013distributed} there was an explosion of activity in the area of learning vector representations of words, graphs, and other discrete structures. Interesting new directions were proposed by~\cite{vilnis2015word} and~\cite{rudolph2016exponential}. \cite{vilnis2015word} proposed to represent each word in a sequence by a Gaussian distribution, and this work was extended to learning representations for entities in a knowledge graph by~\cite{he2015learning}. On the other hand, \cite{rudolph2016exponential} proposed the \textrm{EF-EMB} model to represent the conditional distribution of a ``related'' entity given a ``base'' entity using exponential family distributions. They applied their model to the task of predicting a neuron's activity from its neighbors' activities.

Recently~\cite{he2015learning} applied the Gaussian Embedding method presented by \cite{vilnis2015word} to learn vector representations of graph vertices, and they tested their  learned representations on tasks such as link prediction.\footnote{The link prediction task aims to find the correct entity that should be linked to a given entity with a given relation and measures performance using IR metrics.}

%auto-ignore
\chapter{Multiview LSA}
\label{cha:mvlsa}
The primary goal of this chapter\footnote{A previous version of this work was published in~\cite{rastogi2015multiview}.}  is to motivate and describe the Multiview LSA (MVLSA)  algorithm which is a significant generalization of classical methods such as LSA.  To that end, I compared the performance of MVLSA against single view LSA as well as other contemporary methods such as Glove~\cite{pennington2014glove} and SkipGram Word2Vec~\cite{mikolov2013distributed} on the tasks of word-similarity and word-analogy. These tasks measure whether the representation of words learned from an unsupervised text corpus contains information about the semantic similarity between words or not.

A possible criticism of this choice of task for evaluation is that word-similarity and analogy do not represent an end-task. To that end, I will present experiments on the downstream tasks of Contextual Mention Retrieval and Entity Linking in \chapref{cha:more}. In this chapter I focus on the tasks of word similarity and analogy for two main reasons:
\begin{enumerate}
\item A large number of resources exist for extracting word co-occurrence matrices, and therefore it is possible to evaluate large-scale multi-view embeddings of words.
\item Words form the basis of language, and a large number of downstream NLP models benefit when initialized via word-embeddings, especially in low-resource scenarios. Therefore, the performance of a method on tasks such as analogy and similarity is important in its own right.
\end{enumerate}

\section{Introduction}
\newcite{winograd1972understanding} wrote that: \emph{``Two sentences are paraphrases if they produce the same representation in the internal formalism for meaning''}.  This intuition is made soft in vector-space models \cite{turney2010frequency}, where one says that expressions in language are paraphrases if their representations are \emph{close} under some distance measure.

One of the earliest linguistic vector space models was Latent Semantic Analysis (LSA). LSA has been successfully used for Information Retrieval, but it is limited in its reliance on a single matrix, or \emph{view}, of term co-occurrences. In this chapter, I address the single-view limitation of LSA by demonstrating that the framework of Generalized Canonical Correlation Analysis (GCCA) can be used to perform Multiview LSA (MVLSA). This approach allows for the use of an arbitrary number of views in the induction process, including embeddings induced using other algorithms. I also present a fast approximate method for performing GCCA and approximately recover the objective of \cite{pennington2014glove} while accounting for missing values.

My experiments show that MVLSA is competitive with state of the art approaches for inducing vector representations of words and phrases. As a methodological aside, I discuss the \mbox{(in-)significance} of conclusions being drawn from comparisons done on small sized datasets.

\section{Motivation}
LSA is an application of Principal Component Analysis (PCA) to a term-document cooccurrence matrix.  The principal directions found by PCA form the basis of the vector space in which to represent the input terms~\cite{landauer1997solution}. A drawback of PCA is that it can leverage only a single source of data and it is sensitive to scaling.

An arguably better approach to representation learning is Canonical Correlation Analysis (CCA) that induces representations that are maximally \emph{correlated} across two views, allowing the utilization of two distinct sources of data. While an improvement over PCA, being limited to only two views is unfortunate because many sources of data (perspectives) are frequently available in practice.  In such cases, it is natural to extend CCA's original objective of maximizing the correlation between two views by maximizing some measure of the matrix $\Phi$ that contains all the pairwise correlations between linear projections of the \emph{covariates}. This is how Generalized Canonical Correlation Analysis (GCCA) was first derived by~\newcite{horst1961generalized}. Recently these intuitive ideas about benefits of leveraging multiple sources of data have received strong theoretical backing due to work by \newcite{sridharan2008information} who showed that learning with multiple views is beneficial since it reduces the complexity of the learning problem by restricting the search space.  Recent work by \newcite{anandkumar2014tensor} showed that at least three views are necessary for recovering hidden variable models.

Note that there exist different variants of GCCA depending on the measure of $\Phi$ that one chooses to maximize. \newcite{kettenring1971canonical} enumerated a variety of possible measures, such as the spectral-norm of $\Phi$. Kettenring noted that maximizing this spectral-norm is equivalent to finding linear projections of the \emph{covariates} that are most amenable to rank-one PCA, or that can be best explained by a single term factor model. This variant was named \emph{MAX-VAR GCCA} and was shown to be equivalent to a proposal by \newcite{carroll1968generalization}, which searched for an auxiliary orthogonal representation $G$ that was maximally correlated to the linear projections of the covariates.  Carroll's objective targets the intuition that representations leveraging multiple views should correlate with all provided views as much as possible.

\section{Proposed Method: MVLSA}
\label{sec:gcca}
Let $X_j \in \mathbb{R}^{N\times d_j} \; \forall j \in [1,\ldots,J]$ be the mean centered matrix containing data from view $j$ such that row $i$ of $X_j$ contains the information for word $w_i$. Let the number of words in the vocabulary be $N$ and number of contexts (columns in $X_j$) be $d_j$. Note that $N$ remains the same and $d_j$ varies across views. Following standard notation \cite{hastie2009elements} I call $X_j^\top X_j$ the scatter matrix and $X_j (X_j^\top X_j)^{-1}X_j^\top$ the projection matrix.

The objective of \emph{MAX-VAR GCCA} can be written as the following optimization problem: Find $G \in \mathbb{R}^{N\times r}$ and $U_j \in \mathbb{R}^{d_j \times r}$ that solve:
\begin{equation}
  \label{eq:gcca}
\begin{split}
  \operatorname*{\arg\,\min}_{G,U_j} & \sum_{j=1}^J \begin{Vmatrix} G - X_jU_j \end{Vmatrix}^2_F \\
  \text{subject to } & G^\top G = I.
\end{split}
\end{equation}
The matrix $G$ that solves problem~(\ref{eq:gcca}) is my vector representation of the vocabulary. Finding $G$ reduces to spectral decomposition of sum of projection matrices of different views: Define
\begin{align}
P_j =& X_j(X_j^\top X_j)^{-1}X_j^\top, \label{eq:pp}\\
M =& \sum_{j=1}^J P_j. \label{eq:mm}
\end{align}
Then, for some positive diagonal matrix $\Lambda$, $G$ and $U_j$ satisfy:
\begin{align}
M G =& G \Lambda,\\
U_j =& \left(X_j^\top X_j\right)^{-1} X_j^\top G.
\end{align}

The above expressions tell us that my word representations are the eigenvectors of the sum of $J$ projection matrices. Also, note that the dimensions of $G$ are orthogonal to each other. Orthogonality of representations can be a desirable property that I will discuss in more detail at the end of this chapter.

Computationally storing $P_j \in \mathbb{R}^{N \times N}$ is problematic owing to memory constraints.  Further, the scatter matrices may be non-singular leading to an ill-posed procedure. I now describe a novel scalable GCCA with $\ell_2$-regularization to address these issues.

\noindent\textbf{Approximate Regularized GCCA}: GCCA can be regularized by adding $r_jI$ to scatter matrix $X_j^\top X_j$ before doing the inversion where $r_j$ is a small constant e.g. $10^{-8}$. Projection matrices in~(\ref{eq:pp}) and (\ref{eq:mm}) can then be written as
\begin{align}
  \widetilde{P}_{j} =& X_j(X_j^\top X_j+r_jI)^{-1}X_j^\top, \label{eq:6}\\
  M =& \sum_{j=1}^J \widetilde{P}_{j}. \label{eq:mmm}
\end{align}

Next, to scale up GCCA to large datasets, I first form a rank-$m$ approximation of projection matrices~\cite{arora2012kernel} and then extend it to an eigendecomposition for $M$ following ideas by~\newcite{savostyanov}. Consider the rank-$m$ SVD of $X_j$: $$X_j = A_{j} S_{j} B^\top_{j},$$ where $S_j \in \R^{m \times m}$ is the diagonal matrix with $m$-largest singular values of $X_j$ and $A_j \in \R^{N \times m}$ and $B_j \in \R^{m \times d_j}$ are the corresponding left and right singular vectors. Given this SVD, write the $j^{th}$ projection matrix as
\begin{eqnarray}
\widetilde{P}_j &=& A_j S_j^\top(r_j I + S_j S_J^\top)^{-1}S_j A_j^\top, \nonumber \\
&=& A_j T_j T_j^\top A_j^\top, \nonumber
\end{eqnarray}
where $T_j \in \mathbb{R}^{m \times m}$ is a diagonal matrix such that $T_jT_j^\top = S_j^\top(r_j I + S_j S_J^\top)^{-1}S_j$. Finally, I note that the sum of projection matrices can be expressed as $M = \tilde{M} \tilde{M}^\top$ where $$\tilde{M} = \left[ A_1T_1 \ldots A_JT_J \right] \in \mathbb{R}^{N \times mJ}.$$ Therefore, eigenvectors of matrix $M$, i.e. the matrix $G$ that I am interested in finding, are the left singular vectors of $\tilde{M}$, i.e. $\tilde{M}=GSV^\top$. These left singular vectors can be computed by using Incremental PCA \cite{brand2002incremental} since $\tilde{M}$ may be too large to fit in memory.

Let $SVD_m$ denote a partial SVD where $S_j$ is a rectangular diagonal matrix that contains only the $m$ largest singular values and $A_j, B_j$ are square, orthonormal, unitary matrices. Defining $SVD_m$ like this ensures correctness but in practice one only needs to compute $m$ columns of $A_j$. Take the SVD of $X_j$: $$A_{j} S_{j} B^\top_{j} \xleftarrow{SVD_{m}} X_j$$ and substitute the above in equation~\ref{eq:6} to get $$\widetilde{P}_j = A_j S_j^\top(r_j I + S_j S_J^\top)^{-1}S_j A_j^\top$$. Define $T_j \in \mathbb{R}^{m \times m}$ to be the diagonal matrix such that $T_jT_j^\top = S_j^\top(r_j I + S_j S_J^\top)^{-1}S_j $ then $$\widetilde{P}_j = A_j T_j T_j^\top A_j^\top$$. Now $\tilde{M} = \left[ A_1T_1 \ldots A_JT_J \right] \in \mathbb{R}^{N \times mJ}$, then $$M = \tilde{M} \tilde{M}^\top.$$ Performing QR decomposition of $\tilde{M}$ gives $$M = Q R R^\top Q$$. Eigen decomposition of $R R^\top \in \mathbb{R}^{mJ \times mJ}$ results in eigen vectors $U$ and eigen values $S$. $$M = Q U S U^\top Q^\top$$ which implies $G = QU$.

\subsection{Computing SVD of mean centered $X_j$}
\label{ssec:svdmc}
Recall that I assumed $X_j$ to be mean centered matrices. Let $Z_j \in \mathbb{R}^{N \times d_j}$ be sparse matrices containing mean-uncentered cooccurrence counts. Let $f_j = n_j \circ t_j $ be the preprocessing function that I will apply to $Z_j$:
\begin{align}
  Y_j =& f_j (Z_j), \\
  X_j =& Y_j - 1 (1^\top Y_j).
\end{align}
To compute the SVD of mean-centered matrices $X_j$ I first compute the partial SVD of an uncentered matrix $Y_j$ and then update it (\newcite{brand2006fast} provides details). I experimented with representations created from the uncentered matrices $Y_j$ and found that they performed as well as the mean centered versions, but I will not mention them further since it is computationally efficient to follow the principled approach. I should note, however, that even the method of mean-centering the SVD produces an approximation.

\subsection{Handling missing rows across views}
\label{ssec:missing}
With real data, it may happen that a term was not observed in a view at all. A large number of missing rows can corrupt the learned representations since the rows in the left singular matrix become zero. The procedure described above can not recover from this, and the representation for those words may become a one-hot vector.  To counter this problem, I adopt a variant of the ``missing-data passive'' algorithm from \newcite{van2006generalized} who modified the GCCA objective to counter the problem of missing rows.\footnote{A more recent effort, by \newcite{van2012generalized}, describes newer iterative and non-iterative (Test-Equating Method) approaches for handling missing values. It is possible that using one of those methods could improve performance. \label{ftn:mis}} The objective now becomes:
\begin{equation}
  \label{eq:gcca2}
\begin{split}
  \operatorname*{arg\,min}_{G,U_j} & \sum_{j=1}^J \begin{Vmatrix} K_j(G - X_jU_j) \end{Vmatrix}^2_F \\
  \text{subject to } & G^\top G = I,
\end{split}
\end{equation}
where $[K_j]_{ii} = 1$ if row $i$ of view $j$ is observed and zero otherwise. Essentially $K_j$ is a diagonal row-selection matrix which ensures that I will optimize the GCCA representations only on the observed rows. Note that $X_j = K_jX_j$ since the rows that $K_j$ removed were already zero. Let, $K = \sum_j K_j$ then the optima of the objective can be computed by modifying equation~(\ref{eq:mmm}) as: \begin{align} M =& K^{-\frac{1}{2}}(\sum_{j=1}^J P_j)K^{-\frac{1}{2}}. \end{align} Again, if I regularize and approximate the GCCA solution then I get $G$ as the left singular vectors of $K^{-\frac{1}{2}}\tilde{M}$. I mean center the matrices using only the observed rows.

Also note that other heuristic weighting schemes could be used here. For example if I modify my objective as follows then I will approximately recover the objective of \newcite{pennington2014glove}:
\begin{eqnarray} \label{eq:gcca3} \minimize{G,U_j}{\sum_{j=1}^J \begin{Vmatrix} W_j K_j(G - X_jU_j) \end{Vmatrix}^2_F}{G^\top G = I } \end{eqnarray}
where
\begin{eqnarray}
[W_j]_{ii} &=& \left(\frac{w_i}{w_{\max}}\right)^{\frac{3}{4}} \text{ if } w_i < w_{\max} \text{ else } 1, \nonumber \\
  \text{and } w_i &=&  \sum_k [X_j]_{ik}. \nonumber
\end{eqnarray}

\section{Data}
\label{sec:data}
\noindent\textbf{Training Data} I used the English portion of the \textit{Polyglot} Wikipedia dataset released by \newcite{al2013polyglot} to create 15 \emph{irredundant} views of co-occurrence statistics where element $[z]_{ij}$ of view $Z_k$ represents that number of times word $w_j$ occurred $k$ words behind $w_i$. I selected the top 500K words by occurrence to create my vocabulary for the rest of the chapter. I lowercased all the words and discarded all words which were longer than 5 characters and contained more than 3 non-alphabetical symbols. This was done to preserves years and smaller
numbers.

I extracted co-occurrence statistics from a large bitext corpus that was made by combining a number of parallel bilingual corpora as part of the ParaPhrase DataBase (PPDB) project: Table~\ref{tab:dataperlang} gives a summary, \newcite{ganitkevitch2013ppdb} provides further details. Element $[z]_{ij}$ of the \textit{bitext} matrix represents the number of times English word $w_i$ was automatically aligned to the foreign word $w_j$.

I also used the dependency relations in the \textit{Annotated Gigaword Corpus}~\cite{annotatedGigaword12} to create 21 views\footnote{Dependency relations employed: nsubj, amod, advmod, rcmod, dobj, prep\xline{}of, prep\xline{}in, prep\xline{}to, prep\xline{}on, prep\xline{}for, prep\xline{}with, prep\xline{}from, prep\xline{}at, prep\xline{}by, prep\xline{}as, prep\xline{}between, xsubj, agent, conj\xline{}and, conj\xline{}but, pobj.}  where element $[z]_{ij}$ of view $Z_d$ represents the number of times word $w_j$ occurred as the governor of word $w_i$ under dependency relation $d$.

I selected these dependency relations since they seemed to be particularly interesting which could capture different aspects of similarity.

I combined the knowledge of paraphrases present in FrameNet and PPDB by using the dataset created by \newcite{rastogi2014augmenting} to construct a \textit{FrameNet} view. Element $[z]_{ij}$ of the \textit{FrameNet} view represents whether word $w_i$ was present in frame $f_j$. Similarly I combined the knowledge of morphology present in the \textit{CatVar} database released by \newcite{habash2003catvar} and \textit{morpha} released by \newcite{minnen2001applied} along with \textit{morphy} that is a part of WordNet. The morphological views and the frame semantic views were especially sparse with densities of 0.0003\% and 0.03\%. While the approach allows for an arbitrary number of distinct sources of semantic information, such as going further to include cooccurrence in WordNet synsets, I considered the described views to be representative, with further improvements possible as future work.
%%Figure~\ref{fig:cartoon} summarizes our description.

\begin{figure}
  \centering
  \includegraphics[width=\linewidth]{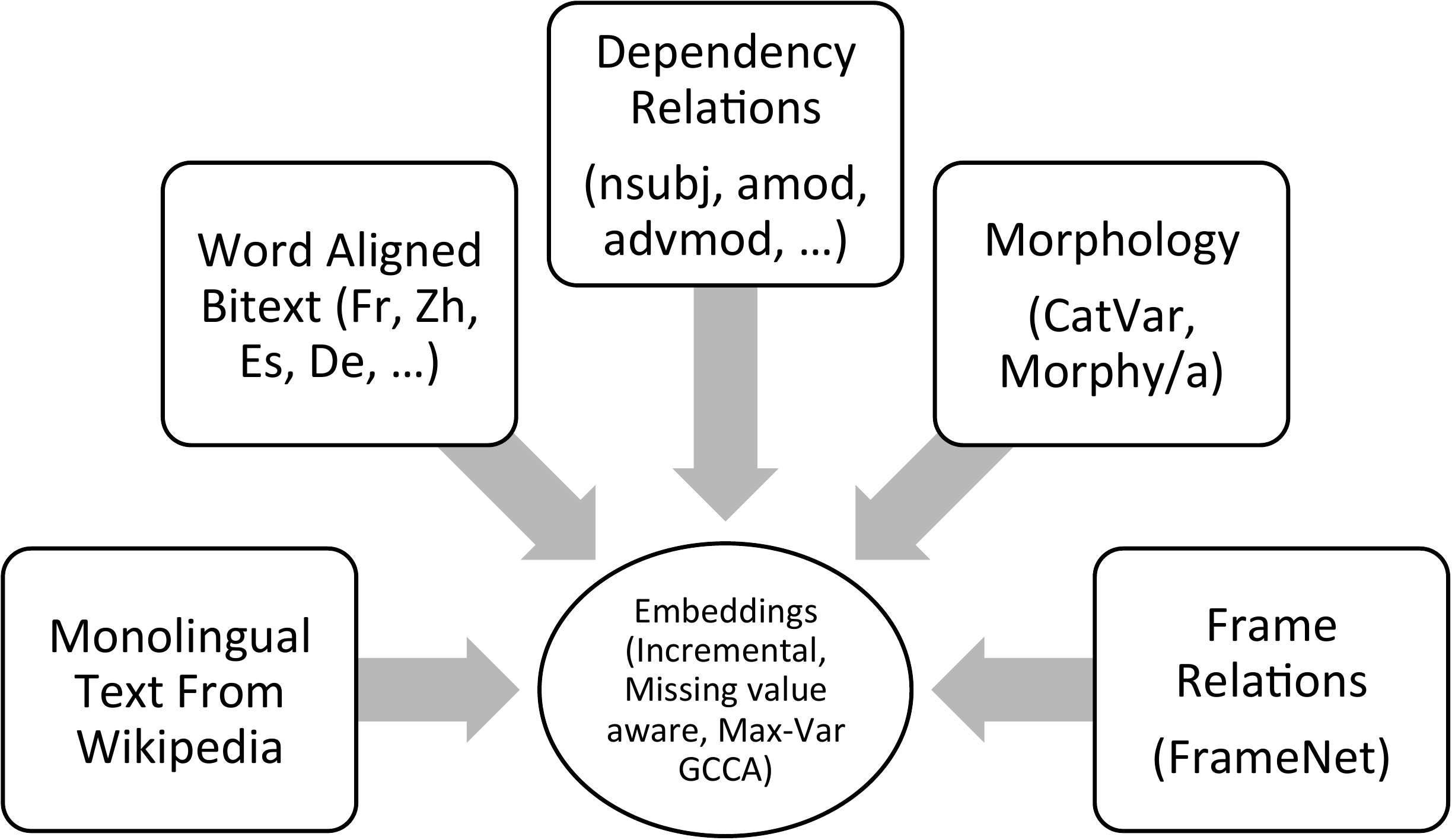}
  \caption[Input datasets for MVLSA.]{An illustration of datasets used.}
  \label{fig:cartoon}
\end{figure}

\begin{table}[htbp]
  \centering
  %\rowcolors{1}{}{lightgray}
  \begin{tabular}{l | rr}
    Language & Sentences & English Tokens \\
    \hline
    Bitext-Arabic   & 8.8M   & 190M  \\
    Bitext-Czech    & 7.3M   & 17M   \\
    Bitext-German   & 1.8M   & 44M   \\
    Bitext-Spanish  & 11.1M  & 241M  \\
    Bitext-French   & 30.9M  & 671M  \\
    Bitext-Chinese  & 10.3M  & 215M  \\
    Monotext-En-Wiki& 75M    & 1700M
  \end{tabular}
  \caption[Dataset statistics for MVLSA experiments]{Portion of data used to create GCCA representations (in millions).}
  \label{tab:dataperlang}
\end{table}

\noindent\textbf{Test Data} I evaluated the representations on the word similarity datasets listed in Table~\ref{tab:testlist}. The first 10 datasets in Table~\ref{tab:testlist} were annotated with different rubrics and rated on different scales. However, broadly they all contain human judgments about how similar two words are. The ``AN-SYN'' and ``AN-SEM'' datasets contain 4-tuples of analogous words, and the task is to predict the missing word given the first three. Both of these are open vocabulary tasks while TOEFL is a closed vocabulary task.
\begin{table*}[ht]
  \begin{adjustwidth}{-1cm}{}
  %\rowcolors{1}{}{lightgray}
  \begin{tabular}{lr | ccc  | ccc | l}
    Acronym & Size  &
    $\sigma_{0.01}^{0.5}$ & $\sigma_{0.01}^{0.7}$ & $\sigma_{0.01}^{0.9}$ &
    $\sigma_{0.05}^{0.5}$ & $\sigma_{0.05}^{0.7}$ & $\sigma_{0.05}^{0.9}$ &
    Reference  \\
    \hline
    MEN    & 3000  & 4.2  & 3.2  & 1.8  & 3.0  & 2.3  & 1.3  & \cite{bruni2012distributional}  \\
    RW     & 2034  & 5.1  & 3.9  & 2.3  & 3.6  & 2.8  & 1.6  & \cite{Luong2013morpho}          \\
    SCWS   & 2003  & 5.1  & 4.0  & 2.3  & 3.6  & 2.8  & 1.6  & \cite{Huang2012Improving}       \\
    SIMLEX & 999   & 7.3  & 5.7  & 3.2  & 5.2  & 4.0  & 2.3  & \cite{hill2014simlex}           \\
    WS     & 353   & 12.3 & 9.5  & 5.5  & 8.7  & 6.7  & 3.9  & \cite{finkelstein2001placing}   \\
    MTURK  & 287   & 13.7 & 10.6 & 6.1  & 9.7  & 7.5  & 4.3  & \cite{Radinsky2011word}         \\
    WS-REL & 252   & 14.6 & 11.3 & 6.5  & 10.3 & 8.0  & 4.6  & \cite{agirre2009study}          \\
    WS-SEM & 203   & 16.2 & 12.6 & 7.3  & 11.5 & 8.9  & 5.1  & -Same-As-Above-                 \\
    RG     & 65    & 28.6 & 22.3 & 12.9 & 20.6 & 16.0 & 9.2  & \cite{Rubenstein1965Contextual} \\
    MC     & 30    & 41.7 & 32.7 & 19.0 & 30.6 & 23.9 & 13.8 & \cite{miller1991contextual}     \\ \hline
    AN-SYN  & 10675 & -    & -    & 0.95 & -    & -    & 0.68 & \cite{mikolov2013distributed}   \\
    AN-SEM  & 8869  & -    & -    & 1.03 & -    & -    & 0.74 & -Same-As-Above-                 \\
    TOEFL  & 80    & -    & -    & 8.13 & -    & -    & 6.63 & \cite{landauer1997solution}
  \end{tabular}
  \caption[Testsets for MVLSA evaluation.]{List of test datasets used. The columns headed $\sigma_{p_0}^r$ contain \emph{MRDS} values. The rows for accuracy based test sets contain $\sigma_{p_0}$ which does not depend on $r$. See \S~\ref{ssec:mrds} for details.}
  \label{tab:testlist}
  \end{adjustwidth}
\end{table*}

\subsection{Significance of comparison} \label{ssec:mrds}
While surveying the literature, I found that performance on word similarity datasets is typically reported in terms of the Spearman correlation between the gold ratings and the cosine distance between normalized embeddings.  However, researchers do not report measures of significance of the difference between the Spearman Correlations even for comparisons on small evaluation sets.\footnote{For example, the relative difference between competing algorithms reported by \newcite{faruqui2014retrofitting} could not be significant for the Word Similarity test set released by \newcite{finkelstein2001placing}, even if I assumed a correlation between competing methods as high as 0.9, with a p-value threshold of 0.05.  Similar such comparisons on small datasets are performed by \newcite{hill2014not}.} This motivated me to define a method for calculating the \emph{Minimum Required Difference for Significance (MRDS)}.

\noindent\textbf{Minimum Required Difference for Significance (MRDS)}: Imagine two lists of ratings over the same items, produced respectively by algorithms $A$ and $B$, and then a list of gold ratings $T$. Let $r_{AT}$, $r_{BT}$ and $r_{AB}$ denote the Spearman correlations between $A:T$, $B:T$ and $A:B$ respectively. Let $\hat{r}_{AT}, \hat{r}_{BT}, \hat{r}_{AB}$ be their empirical estimates and assume that $\hat{r}_{BT} > \hat{r}_{AT}$ without loss of generality.

For word similarity datasets I define $\sigma_{p_0}^r$ as the MRDS,  such that it satisfies the following proposition: {\small $$ (r_{AB} < r) \land (|\hat{r}_{BT} - \hat{r}_{AT}|{<}\sigma_{p_0}^r) {\implies} \textit{pval} > p_0$$}. Here $\textit{pval}$ is the probability of the test statistic under the null hypothesis that $r_{AT} = r_{BT}$ found using the Steiger's test \cite{steiger1980tests}. The above constraint ensures that as long as the correlation between the competing methods is less than $r$ and the difference between the correlations of the scores of the competing methods to the gold ratings is less than $\sigma_{p_0}^r$, then the p-value of the null hypothesis will be greater than $p_0$.  Now let us ask what is a reasonable upper bound on the agreement of ratings produced by competing algorithms: for instance, two algorithms correlating above $0.9$ might not be considered meaningfully different.  That leaves us with the second part of the predicate which ensures that as long as the difference between the correlations of the competing algorithms to the gold scores is less than $\sigma_{p_0}^r$ then the null hypothesis is more likely than $p_0$.

I can find $\sigma_{p_0}^r$ as follows: Let $\textit{stest}$ denote Steiger's test predicate which satisfies the following: $$\textit{stest-p}(\hat{r}_{AT}, \hat{r}_{BT}, r_{AB}, p_0, n) {\implies} \textit{pval} < p_0$$ Once I define this predicate then I can use it to set up an optimistic problem where my aim is to find $\sigma_{p_0}^r$ by solving the following: {\small $$\sigma_{p_0}^r = \min\{\sigma | \forall\, 0 {<} r' {<} 1\, \textit{stest-p}(r', \min(r'+\sigma, 1), r, p_0, n) \} $$} Note that MRDS is a liberal threshold and it only guarantees that differences in correlations below that threshold can never be statistically significant (under the given parameter settings). MRDS might optimistically consider some differences as significant when they are not, but it is at least useful in reducing some of the noise in the evaluations.  The values of $\sigma_{p_0}^r$ are shown in Table~\ref{tab:testlist}.

For the accuracy based test-sets I found MRDS$=\sigma_{p_0}$ that satisfied the following: $$ 0< (\hat{\theta}_{B} - \hat{\theta}_{A})<\sigma_{p_0} {\implies} \text{p}(\theta_{B} \le \theta_{A}) > p_0$$ Specifically, I calculated the posterior probability $\text{p}(\theta_{B} \le \theta_{A})$ with a flat prior of $\beta(1,1)$ to solve the following:\footnote{This instead of using  McNemar's test \cite{mcnemar1947note} since the Bayesian approach is tractable and more direct. A calculation with $\beta(0.5, 0.5)$ as the prior changed $\sigma_{0.5}$ from 6.63 to 6.38 for the TOEFL dataset but did not affect MRDS for the AN-SEM and AN-SYN datasets.}{ $\sigma_{p_0}=\min\{\sigma |\forall\, 0{<}\theta{<}\min(1{-}\sigma,0.9)\,$ $\text{p}(\theta_{B}{\le} \theta_{A}| \hat{\theta}_A{=}\theta, \hat{\theta}_B{=}\theta+\sigma, n) < p_0\}$} Here $\theta_{A}$ and $\theta_{B}$ are probability of correctness of algorithms $A$, $B$ and $\hat{\theta}_{A}$, $\hat{\theta}_{B}$ are observed empirical accuracies.

Unfortunately, there are no widely reported train-test splits of the above datasets, leading to potential concerns of \emph{soft supervision} (hyper-parameter tuning) on these evaluations throughout the existing literature.  I report on the resulting impact of various parameterizations, and my final results are based on a single set of parameters used across all evaluation sets.

%% Here I follow
%% this accepted practice, but on ongoing work are exploring evaluation
%% of these learned representations in downstream systems as

%% and I also evaluated the effects of hyper-parameter tuning
%% on the entire test set, therefore, my final comparison could have
%% favored us due to \emph{soft supervision}.
%% However the consistent performance of our
%% method across the test sets lends hope that the trends I report would
%% generalize.

\section{Experiments and Results}
\label{sec:exp}
I wanted to answer the following questions through my experiments: (1) How do hyper-parameters affect performance? (2) What is the contribution of the multiple sources of data to performance? (3) How does the performance of MVLSA compare with other methods? I show the tuning runs on both larger and smaller datasets. I also highlight the top performing configurations in bold using the small threshold values in column~$\sigma_{0.05}^{0.09}$ of Table~\ref{tab:testlist}.

\noindent\textbf{Effect of Hyper-parameters} $f_j$: I modeled the preprocessing function $f_j$ as the composition of two functions, $f_j = n_j \circ t_j$. $n_j$ represents nonlinear preprocessing that is usually employed with LSA. I experimented by setting $n_j$ to be: identity; logarithm of count plus one; and the fourth root of the count.\footnote{I also experimented with other powers of the counts (0.12, 0.5 and 0.75) on a smaller dataset and found that the fourth root performed the best.} $t_j$ represents the truncation of columns and can be interpreted as a type of regularization of the raw counts themselves through which I prune away the noisy contexts. The decrease in $t_j$ also reduces the influence of views that have a large number of context columns and emphasizes the sparser views. Table~\ref{tab:n} and Table~\ref{tab:t} show the results.
\begin{table}[htbp]
  \centering
  \begin{tabular}{=l| +c +c +c}
    Test Set                            & Log  & Count & Count$^{\frac{1}{4}}$ \\ \hline
    MEN                                 & 67.5 & 59.7  & \mb{70.7}                  \\
    RW                                  & 31.1 & 25.3  & \mb{37.8}                  \\
    SCWS                                & 64.2 & 58.2  & \mb{66.6}                  \\%\remove{
    SIMLEX                              & 36.7 & 27.0  & \mb{38.0}                  \\
\rowstyle{\color{darkergray}}    WS     & 68.0 & 60.4  & \mb{70.5}                  \\
\rowstyle{\color{darkergray}}    MTURK  & 57.3 & 55.2  & \mb{60.8}                  \\
\rowstyle{\color{darkergray}}    WS-REL & 60.4 & 52.7  & \mb{62.9}                  \\
\rowstyle{\color{darkergray}}    WS-SEM & 75.0 & 67.2  & \mb{76.2}                  \\
\rowstyle{\color{darkergray}}    RG     & 69.1 & 55.3  & \mb{75.9}                  \\
\rowstyle{\color{darkergray}}    MC     & 70.5 & 67.6  & \mb{80.9}                  \\%}
    AN-SYN                               & 45.7 & 21.1  & \mb{53.6}                  \\
    AN-SEM                               & 25.4 & 15.9  & \mb{38.7}                  \\%\remove{
  \rowstyle{\color{darkergray}}  TOEFL  & 81.2 & 70.0  & \mb{81.2} %}
  \end{tabular}
  \caption[MVLSA performance versus non-linear pre-processing.]{Performance versus $n_j$, the non linear processing of cooccurrence counts.$\, t =200K, \; m=500, \; v=16, \; k=300$. All the top configurations determined by $\sigma_{0.05}^{0.09}$ are in bold font.}
  \label{tab:n}
\end{table}

\begin{table}[htbp]
  \centering
  {
  \begin{tabular}{=l | +c +c +c +c H +c H +c}
Test Set                            & 6.25K & 12.5K & 25K  & 50K  & 75K  & 100K & 150K & 200K \\ \hline
MEN                                 & 70.2  & \mi{71.2}  & \mi{71.5} & \mi{71.6} & \mi{71.4} & \mi{71.2} & \mi{71.0} & \mi{70.7} \\
RW                                  & \mi{41.8}  & \mi{41.7}  & \mi{41.5} & \mi{40.9} & \mi{40.7} & 39.6 & 38.3 & 37.8 \\
SCWS                                & \mi{67.1}  & \mi{67.3}  & \mi{67.1} & \mi{67.0} & \mi{67.3} & \mi{66.9} & \mi{66.8} & \mi{66.6} \\ %\remove{
SIMLEX                              & 42.7  & \mb{42.4}  & 41.9 & 41.3 & 40.5 & 39.5 & 38.4 & 38.0 \\
\rowstyle{\color{darkergray}}WS     & 68.1  & 70.8  & 71.6 & 71.2 & 71.3 & 70.2 & 70.8 & 70.5 \\
\rowstyle{\color{darkergray}}MTURK  & 62.5  & 59.7  & 59.2 & 58.6 & 58.3 & 60.3 & 61.0 & 60.8 \\
\rowstyle{\color{darkergray}}WS-REL & 60.8  & 65.1  & 65.7 & 64.8 & 65.2 & 63.7 & 63.7 & 62.9 \\
\rowstyle{\color{darkergray}}WS-SEM & 77.8  & 78.8  & 78.8 & 78.2 & 77.7 & 76.5 & 77.0 & 76.2 \\
\rowstyle{\color{darkergray}}RG     & 72.7  & 74.4  & 74.7 & 75.0 & 75.0 & 74.3 & 75.6 & 75.9 \\
\rowstyle{\color{darkergray}}MC     & 75.2  & 75.9  & 79.9 & 80.3 & 81.0 & 76.9 & 79.6 & 80.9 \\%}
AN-SYN                               & 59.2  & \mi{60.0}  & \mi{59.5} & 58.4 & 57.4 & 56.1 & 54.3 & 53.6 \\
AN-SEM                               & 37.7  & \mi{38.6}  & \mi{39.4} & \mi{39.2} & \mi{39.4} & 38.4 & \mi{38.8} & \mi{38.7} \\%\remove{
\rowstyle{\color{darkergray}}TOEFL  & 88.8  & 87.5  & 85.0 & 83.8 & 83.8 & 83.8 & 82.5 & 81.2%}
      \end{tabular}
  }
  \caption[Performance versus truncation threshold hyperparameter.]{Performance versus the truncation threshold, $t$, of raw cooccurrence counts. I used $n_j=\textrm{Count}^{\frac{1}{4}}$ and other settings were the same as Table~\ref{tab:n}.}
  \label{tab:t}
\end{table}
$m$: The number of left singular vectors extracted after SVD of the preprocessed cooccurrence matrices can again be interpreted as a type of regularization, since the result of this truncation is that I find cooccurrence patterns only between the top left singular vectors. I set $m_j = max(d_j, m)$ with $m=[100, 300, 500]$. See table~\ref{tab:m}.

\begin{table}[htbp]
  \centering
  \begin{tabular}{=l | +c +c +c +c}
Test Set                            & 100  & 200  & 300  & 500  \\\hline
MEN                                 & 65.6 & 68.5 & \mi{70.1} & \mi{71.1} \\
RW                                  & 34.6 & \mi{36.0} & \mi{37.2} & \mi{37.1} \\
SCWS                                & 64.2 & \mi{65.4} & \mi{66.4} & \mi{66.5} \\%\remove{
SIMLEX                              & 38.4 & 40.6 & \mb{41.1} & 40.3 \\
\rowstyle{\color{darkergray}}WS     & 60.4 & 67.1 & 69.4 & \mb{71.1} \\
\rowstyle{\color{darkergray}}MTURK  & 51.3 & 58.3 & 58.4 & \mb{58.9} \\
\rowstyle{\color{darkergray}}WS-REL & 49.0 & 58.2 & 61.6 & \mb{65.1} \\
\rowstyle{\color{darkergray}}WS-SEM & 73.6 & 76.8 & 76.8 & \mb{78.0} \\
\rowstyle{\color{darkergray}}RG     & 61.6 & 69.7 & 73.2 & \mb{74.6} \\
\rowstyle{\color{darkergray}}MC     & 65.6 & 74.1 & \mb{78.3} & 77.7 \\%}
AN-SYN                               & 50.5 & \mi{56.2} & \mi{56.4} & \mb{56.4} \\
AN-SEM                               & 24.3 & 31.4 & 34.3 & \mb{40.6} \\%\remove{
\rowstyle{\color{darkergray}} TOEFL & 80.0 & 81.2 & \mb{82.5} & 80.0%}
  \end{tabular}
  \caption[Performance versus intermediate dimensionality.]{Performance versus $m$, the number of left singular vectors extracted from raw cooccurrence counts. I set $n_j=\textrm{Count}^\frac{1}{4}, \; t=100K, \; v=25, \; k=300$.}
  \label{tab:m}
\end{table}

$k$: Table~\ref{tab:k} demonstrates the variation in performance versus the dimensionality of the learned vector representations of the words. Since the dimensions of the MVLSA representations are orthogonal to each other therefore creating lower dimensional representations is a trivial matrix slicing operation and does not require retraining.
  \begin{table}[htbp]
    \centering
  \begin{tabular}{=l | +c H +c +c +c +c +c}
Test Set                            & 10   & 25   & 50   & 100  & 200       & 300       & 500       \\\hline
MEN                                 & 49.0 & 59.3 & 67.0 & \mb{69.7} & \mb{70.2} & \mi{70.1} & \mb{69.8}\\
RW                                  & 28.8 & 33.1 & 33.3 & 35.0 & 35.2      & \mb{37.2} & \mi{38.3} \\
SCWS                                & 57.8 & 62.8 & 64.4 & \mi{65.2} & \mi{66.1}      & \mb{66.4} & \mi{65.1}      \\%\remove{
SIMLEX                              & 24.0 & 30.1 & 33.9 & 36.1 & 38.9      & 41.1      & \mb{42.0} \\
\rowstyle{\color{darkergray}}WS     & 46.8 & 57.5 & 63.4 & 69.5 & 69.5      & 69.4      & 66.0      \\
\rowstyle{\color{darkergray}}MTURK  & 54.6 & 65.9 & 67.7 & 61.6 & 60.5      & 58.4      & 57.4      \\
\rowstyle{\color{darkergray}}WS-REL & 38.4 & 49.5 & 55.8 & 63.1 & 62.4      & 61.6      & 56.3      \\
\rowstyle{\color{darkergray}}WS-SEM & 55.3 & 64.7 & 69.9 & 76.9 & 77.1      & 76.8      & 75.6      \\
\rowstyle{\color{darkergray}}RG     & 48.8 & 60.5 & 66.1 & 69.7 & 75.1      & 73.2      & 72.5      \\
\rowstyle{\color{darkergray}}MC     & 37.0 & 57.5 & 59.0 & 71.3 & 79.1      & 78.3      & 75.7      \\%}
AN-SYN                               & 9.0  & 28.4 & 41.2 & 52.2 & 55.4      & \mb{56.4} & 54.4      \\
AN-SEM                               & 2.5  & 10.8 & 21.8 & 34.8 & \mb{35.8} & 34.3      & 33.8      \\%\remove{
\rowstyle{\color{darkergray}} TOEFL & 57.5 & 73.8 & 72.5 & 76.2 & 81.2      & 82.5      & 85.0%}
  \end{tabular}
  \caption[Performance versus MVLSA embedding dimensionality.]{Performance versus $k$, the final dimensionality of the embeddings. I set $ m=300$ and other settings were same as Table~\ref{tab:m}.}
  \label{tab:k}
\end{table}

$v$: Expression~\ref{eq:gcca3} describes a method to set $W_j$. I experimented with a different, more global, heuristic to set $[W_j]_{ii} = (K_{ww} \ge v)$, essentially removing all words that did not appear in $v$ views before doing GCCA. Table~\ref{tab:v} shows that changes in $v$ are largely inconsequential for performance. {In absence of clear evidence in favor of regularization I decided to regularize as little as possible and chose $v=16$.}
  \begin{table}[htbp]
    \centering
  \begin{tabular}{=l | +c +c H +c H +c H +c}
Test Set                            & 16   & 17   & 19   & 21   & 23   & 25   & 27   & 29   \\ \hline
MEN                                 & \mb{70.4} & \mb{70.4} & \mi{70.2} & \mi{70.2} & \mi{70.1} & \mi{70.1} & \mi{70.0} & \mi{70.0} \\
RW                                  & \mb{39.9} & \mi{38.8} & \mi{40.1} & \mi{39.7} & 38.3 & 37.2 & 35.3 & 33.5 \\
SCWS                                & \mb{67.0} & \mb{66.8} & \mb{66.8} & \mb{66.5} & \mb{66.3} & \mb{66.4} & \mb{66.1} & \mb{65.7} \\%\remove{
SIMLEX                              & 40.7 & 41.0 & 41.1 & \mb{41.2} & 41.2 & 41.1 & 41.1 & 41.0 \\
\rowstyle{\color{darkergray}}WS     & 69.5 & 69.4 & 69.5 & 69.5 & 69.4 & 69.4 & 69.3 & 69.1 \\
\rowstyle{\color{darkergray}}MTURK  & 59.4 & 59.2 & 59.3 & 59.2 & 58.7 & 58.4 & 58.0 & 58.0 \\
\rowstyle{\color{darkergray}}WS-REL & 62.1 & 61.9 & 62.1 & 62.3 & 61.9 & 61.6 & 61.4 & 61.1 \\
\rowstyle{\color{darkergray}}WS-SEM & 76.8 & 76.8 & 76.9 & 77.0 & 76.7 & 76.8 & 76.7 & 76.8 \\
\rowstyle{\color{darkergray}}RG     & 73.0 & 72.8 & 72.7 & 72.8 & 73.6 & 73.2 & 73.4 & 73.7 \\
\rowstyle{\color{darkergray}}MC     & 75.0 & 76.0 & 76.4 & 76.5 & 78.2 & 78.3 & 78.6 & 78.6 \\%}
AN-SYN                               & \mb{56.0} & \mb{55.8} & \mb{56.0} & \mb{55.9} & \mb{56.3} & \mb{56.4} & \mb{56.3} & \mb{56.0} \\
AN-SEM                               & \mb{34.6} & \mb{34.3} & \mb{34.1} & \mb{34.0} & \mb{34.5} & \mb{34.3} & \mb{34.4} & \mb{34.3} \\%\remove{
\rowstyle{\color{darkergray}} TOEFL & 85.0 & 85.0 & 85.0 & 83.8 & 83.8 & 82.5 & 82.5 & 80.0%}
    \end{tabular}
  \caption[Performance versus the support threshold.]{Performance versus minimum view support threshold $v$, The other hyperparameters were $n_j=\textrm{Count}^{\frac{1}{4}}, \; m=300, \; t=100K$. Though a clear best setting did not emerge, I chose $v=25$ as the middle ground.}
  \label{tab:v}
\end{table}

$r_j$: The regularization parameter ensures that all the inverses exist at all points in my method. I found that the performance of my procedure was invariant to $r$ over a broad range from 1 to 1e-10. This was because even the 1000th singular value of my data was much higher than 1.

\noindent\textbf{Contribution of different sources of data} Table~\ref{tab:jkjk} shows an ablative analysis of performance where I remove individual views or some combination of them and measure the performance.  It is clear by comparing the last column to the second column that adding in more views improves performance. Also I can see that the Dependency based views and the Bitext based views give a larger boost than the morphology and FrameNet based views, probably because the latter are so sparse.
 \begin{table*}[ht]
  \centering
  \begin{tabular}{l|     p{1cm}         p{1cm}       p{1.3cm} p{1cm} p{1cm} p{1.3cm} p{2cm} p{2cm}}
Test Set              & {All Views} & !Fram- enet & !Morph- ology & !Bitext & !Wiki- pedia & !Depen- dency & {!Morphology !Framenet} & {!Morphology !Framenet !Bitext} \\\hline
MEN                                 & \mb{70.1} & \mi{69.8} & \mi{70.1} & \mi{69.9} & 46.4 & 68.4 & \mi{69.5} & 68.4 \\
RW                                  & \mb{37.2} & \mi{36.4} & \mi{36.1} & 32.2 & 11.6 & 34.9 & 34.1 & 27.1 \\
SCWS                                & \mb{66.4} & \mi{65.8} & \mi{66.3} & 64.2 & 54.5 & \mi{65.5} & \mi{65.2} & 60.8 \\%\remove{
SIMLEX                              & 41.1 & 40.1 & 41.1 & 37.8 & 32.4 & \mb{44.1} & 38.9 & 34.4 \\
\rowstyle{\color{darkergray}}WS     & 69.4 & 69.1 & 69.2 & 67.6 & 43.1 & 70.5 & 69.3 & 66.6 \\
\rowstyle{\color{darkergray}}MTURK  & 58.4 & 58.3 & 58.6 & 55.9 & 52.7 & 59.8 & 57.9 & 55.3 \\
\rowstyle{\color{darkergray}}WS-REL & 61.6 & 61.5 & 61.4 & 59.4 & 38.2 & 63.5 & 62.5 & 58.8 \\
\rowstyle{\color{darkergray}}WS-SEM & 76.8 & 76.3 & 76.7 & 75.9 & 48.1 & 75.7 & 75.8 & 73.1 \\
\rowstyle{\color{darkergray}}RG     & 73.2 & 72.0 & 73.2 & 73.7 & 45.0 & 70.8 & 71.9 & 74.0 \\
\rowstyle{\color{darkergray}}MC     & 78.3 & 75.7 & 78.2 & 78.2 & 46.5 & 77.5 & 76.0 & 80.2 \\%}
AN-SYN                               & \mb{56.4} & \mi{56.3} & \mi{56.2} & 51.2 & 37.6 & 50.5 & 54.4 & 46.0 \\
AN-SEM                               & 34.3 & 34.3 & 34.3 & \mb{36.2} & 4.1  & 35.3 & 34.5 & 30.6 \\%\remove{
\rowstyle{\color{darkergray}}TOEFL  & 82.5 & 82.5 & 82.5 & 71.2 & 45.0 & 85.0 & 82.5 & 65.0   %}
  \end{tabular}
  \caption[Ablation tests for MVLSA]{Performance versus views removed from the multiview GCCA procedure. !Framenet means that the view containing counts derived from Frame semantic dataset was removed. Other columns are named similarly. The other hyperparameters were $n_j=\textrm{Count}^{\frac{1}{4}}, \; m=300, \; t=100K, \; v=25, \; k=300$. }
  \label{tab:jkjk}
\end{table*}
\noindent\textbf{Comparison to other word representation creation methods} There are a large number of methods of creating representations both multilingual and monolingual. There are many new methods such as by \newcite{yu2014improving}, \newcite{faruqui2014retrofitting}, \newcite{felix2014learning}, and \newcite{weston2014hash} that are performing multiview learning and could be considered here as baselines: however it is not straight-forward to use those systems to handle the variety of data that I am using. Therefore, I directly compare my method to the Glove and the SkipGram model of Word2Vec as the performance of those systems is considered state of the art.  I trained these two systems on the English portion of the \textit{Polyglot} Wikipedia dataset.\footnote{I explicitly provided the vocabulary file to Glove and Word2Vec and set the truncation threshold for Word2Vec to 10.  Glove was trained for 25 iterations. Glove was provided a window of 15 previous words, and Word2Vec used a symmetric window of 10 words.} I also combined their outputs using MVLSA to create \emph{MV-G-WSG}embeddings.

I trained my best MVLSA system with data from all views and by using the individual best settings of the hyper-parameters. Specifically the configuration I used was as follows: $n_j = \text{Count}^\frac{1}{4}, t=12.5K, m=500, k=300, v=16$. To make a fair comparison, I also provide results where I used only the views derived from the \textit{Polyglot} Wikipedia corpus. See column \emph{MVLSA (All Views)} and \emph{MVLSA (Wiki)} respectively. It is visible that MVLSA on the monolingual data itself is competitive with Glove but worse than Word2Vec on the word similarity datasets and it is substantially worse than both the systems on the AN-SYN and AN-SEM datasets. However with the addition of multiple views, MVLSA makes substantial gains, shown in column \emph{MV Gain}, and after consuming the Glove and WSG embeddings, it again improves performance by some margins, as shown in column \emph{G-WSG Gain}, and outperforms the original systems.  Using GCCA itself for system combination provides closure for the MVLSA algorithm since multiple distinct approaches can now be simply fused using this method. Finally, I contrast the Spearman correlations $r_s$ with Glove and Word2Vec before and after including them in the GCCA procedure. The values demonstrate that including Glove and WSG during GCCA increased the correlation between them and the learned embeddings, which supports my motivation for performing GCCA in the first place.

\begin{table*}[ht]
  \begin{adjustwidth}{-1cm}{}
      %\rowcolors{1}{}{lightgray}  \mcinherit
  \setlength\tabcolsep{2.2pt}{
  \begin{tabular}{+l |                 +d{2.1}     +d{2.1} |    +d{2.1}     +d{2.1}     +d{2.1}       +d{2.1}       |   +d{2.1}       +d{2.1}|| +c                                  +c |     +c            +c }
    \mm{Test Set}                  & \m{Glove} & \mm{WSG}   &   \m{MV}          & \m{MVLSA} & \m{MVLSA}     & \mm{MVLSA} & \m{MV}    & \mmm{G-WSG} & \multicolumn{2}{c|}{$r_s$ MVLSA} & \multicolumn{2}{c}{$r_s$ MV-G-WSG} \\
                \mm{}              & \m{}      & \mm{}      & \m{G-WSG} & \m{Wiki } & \m{All Views} & \mm{Combined} & \m{Gain}  & \mmm{Gain } & \m{Glove}                        & \mm{WSG} & \m{Glove} & \m{WSG}     \\\hline
MEN                                & 70.4      & 73.9      & \my{76.0} & 71.4      & 71.2          & \myy{75.8}    & -0.2      & \ma{4.6}   & 71.9                             & 89.1     & 85.8      & 92.3        \\
RW                                 & 28.1      & 32.9      & 37.2       & 29.0      & \my{41.7}     & \myy{40.5}    & \ma{12.7} & -1.2       & 72.3                             & 74.2     & 80.2      & 75.6        \\
SCWS                               & 54.1      & 65.6      & 60.7       & 61.8      & \my{67.3}     & \myy{66.4}    & \ma{5.5}  & -0.9       & 87.1                             & 94.5     & 91.3      & 96.3        \\
SIMLEX                             & 33.7      & 36.7      & 41.1       & 34.5      & \my{42.4}      & \myy{43.9}    & \ma{7.9}  & 1.5        & 62.4                             & 78.2     & 79.3      & 86.0        \\
WS                                 & 58.6      & \myy{70.8} & \my{67.4} & \my{68.0}  & \my{70.8}     & \myy{70.1}    & \ma{2.8}  & -0.7       & 72.3                             & 88.1     & 81.8      & 91.8        \\
MTURK                              & \my{61.7}  & \myy{65.1} & 59.8       & 59.1      & 59.7          & \myy{62.9}    & 0.6       & 3.2        & 80.0                             & 87.7     & 87.3      & 92.5        \\
WS-REL                             & 53.4      & \myy{63.6} & 59.6       & 60.1      & \my{65.1}      & \myy{63.5}    & \ma{5.0}  & -1.6       & 58.2                             & 81.0     & 69.6      & 85.3        \\
WS-SEM                             & 69.0      & \myy{78.4}  & \my{76.1} & \my{76.8}  & \my{78.8}      & \myy{79.2}    & 2.0       & 0.4        & 74.4                             & 90.6     & 83.9      & 94.0        \\
\rowstyle{\color{darkergray}}RG    & \my{73.8}  & \myy{78.2}  & \my{80.4} & 71.2      & \my{74.4}      & \myy{80.8}    & 3.2       & \ma{6.4}   & 80.3                             & 90.6     & 91.8      & 92.9        \\
\rowstyle{\color{darkergray}}MC    & \my{70.5}  & \myy{78.5}  & \my{82.7} & \my{76.6}  & \my{75.9}      & \myy{77.7}    & -0.7      & 2.8        & 80.1                             & 94.1     & 91.4      & 95.8        \\
AN-SYN                              & 61.8      & 59.8      & 51.0       & 42.7      & 60.0          & \myy{64.3}    & \ma{17.3} & \ma{4.3}   &                                  &          &           &             \\
AN-SEM                              & \my{80.9} & 73.7      & 73.5       & 36.2      & 38.6          & 77.2          & \ma{2.4}  & \ma{38.6}  &                                  &          &           &             \\
\rowstyle{\color{darkergray}}TOEFL & \my{83.8}  & 81.2      & \my{86.2} & 78.8      & \my{87.5}      & \myy{88.8}    & \ma{8.7}  & 1.3        &                                  &          &           &
  \end{tabular}}
  \end{adjustwidth}
  \caption[Comparison of MVLSA to Glove and Word2Vec.]{Comparison of Multiview LSA against Glove and WSG(Word2Vec Skip Gram). Using $\sigma_{0.05}^{0.9}$ as the threshold I highlighted the top performing systems in bold font. $^\dagger$ marks significant increments in performance due to use of multiple views in the \emph{Gain} columns. The $r_s$ columns demonstrate that GCCA increased Pearson correlation.\remove{I also considered the best performance of Glove and Word2Vec to be my baseline and marked systems that were significantly higher than baseline with $^*$ and systems that were significantly lower than baseline by $^\downarrow$.}  }
  \label{tab:c}
\end{table*}

\section{Previous Work}
\label{sec:previouswork}
Vector space representations of words have been created using diverse frameworks including Spectral methods \cite{dhillon2011multi,dhillon2012two}, \footnote{\url{cis.upenn.edu/~ungar/eigenwords}} Neural Networks \cite{mikolov2013linguistic,collobert2013word}, and Random Projections \cite{ravichandranACL05,bhagatACL08,chanGEMS2011}. \footnote{\url{code.google.com/p/word2vec},\url{metaoptimize.com/projects/wordreprs}} They have been trained using either one \cite{pennington2014glove} \footnote{\url{nlp.stanford.edu/projects/glove}} or two sources of cooccurrence statistics \cite{zou2013bilingual,faruqui2014improving,bansal2014tailoring,levy2014dependency} \footnote{\url{ttic.uchicago.edu/~mbansal/data/syntacticEmbeddings.zip,cs.cmu.edu/~mfaruqui/soft.html}} or using multi-modal data \cite{felix2014learning,bruni2012distributional}.

\newcite{dhillon2011multi} and \newcite{dhillon2012two} were the first to use CCA as the primary method to learn vector representations and \newcite{faruqui2014improving} further demonstrated that incorporating bilingual data through CCA improved performance. More recently this same phenomenon was reported by \newcite{hill2014not} through their experiments over neural representations learned from MT systems. Outside of the NLP community \cite{sun2013generalized,tripathi2011data} are examples of works that have used GCCA for ``data fusion''. Various other researchers have tried to improve the performance of their paraphrase systems or vector space models by using diverse sources of information such as bilingual corpora~\cite{bannard2005paraphrasing,Huang2012Improving,zou2013bilingual},\footnote{An example of complementary views: \newcite{chanGEMS2011} observed that monolingual distributional statistics are susceptible to conflating antonyms, where bilingual data is not; on the other hand, bilingual statistics are susceptible to noisy alignments, where monolingual data is not.}  structured datasets~\cite{yu2014improving,faruqui2014retrofitting} or even tagged images~\cite{bruni2012distributional}. However, most previous work\footnote{\newcite{ganitkevitch2013ppdb} did employ a rich set of diverse cooccurrence statistics in constructing the initial PPDB, but without a notion of ``training'' a joint representation beyond random projection to a binary vector subspace (bit-signatures).} did not adopt the general, simplifying view that all of these sources of data are just cooccurrence statistics coming from different sources with underlying latent factors.\footnote{Note that while \newcite{faruqui2014retrofitting} performed belief propagation over a graph representation of their data, such an undirected weighted graph can be viewed as an adjacency matrix, which is then also a co-occurrence matrix.}
%% Also
%%   their framework could not fuse arbitrary views such as other vector
%%   representations.}

\newcite{bach2005probabilistic} presented a probabilistic interpretation for CCA. Though they did not generalize it to include GCCA, I believe that one could give a probabilistic interpretation of \emph{MAX-VAR GCCA}.  Such a probabilistic interpretation would allow for an online-generative model of lexical representations, which unlike methods like Glove or LSA would allow us to naturally perplexity or generate sequences. I also note that \newcite{via2007learning} presented a neural network model of GCCA and adaptive/incremental GCCA.  To the best of my knowledge, both of these approaches have not been used for word representation learning.

CCA is also an algorithm for multi-view learning \cite{kakade2007multi,ganchevuai08} and when I view my work as an application of multiview learning to NLP, this follows a long chain of effort started by \newcite{yarowsky1995unsupervised} and continued with \emph{Co-Training}~\cite{blum1998combining}, \emph{CoBoosting}~\cite{collins1999unsupervised} and \emph{2 view perceptrons}~\cite{brefeld2006efficient}.

\section{Conclusion}
This chapter is based on the following published paper:
\begin{itemize}
\item [] Rastogi, Pushpendre, Benjamin Van Durme, and Raman Arora (2015). ``Multi- view LSA: Representation Learning Via Generalized CCA''. In: Proceedings of NAACL.
\end{itemize}

\textbf{The main ideas and scientific contribution of this chapter are}:

\begin{itemize}
\item The first to construct word embeddings from massively multi-view datasets.
\item The first algorithm for scaling Generalized CCA to large datasets via a novel approximation technique.
\item A new procedure, \textit{MRDS}, for measuring the significance of results, based only on the spearman-correlation values and dataset size.
\end{itemize}

While previous efforts demonstrated that incorporating two views is beneficial in word-representation learning, I extended that thread of work to a logical extreme and created \emph{MVLSA} to learn distributed representations using data from 46 views!\footnote{Code and data available at \url{www.cs.jhu.edu/~prastog3/mvlsa}} Through evaluation of my induced representations, shown in Table~\ref{tab:c}, I demonstrated that the MVLSA algorithm could leverage the information present in multiple data sources to improve performance on a battery of tests against state of the art baselines. To perform MVLSA on large vocabularies with up to 500K words, I presented a fast, scalable algorithm. I also showed that a close variant of the Glove objective proposed by \newcite{pennington2014glove} could be derived as a heuristic for handling missing data under the MVLSA framework. To better understand the benefit of using multiple sources of data, I performed MVLSA using views derived only from the monolingual Wikipedia dataset thereby providing a more principled alternative of LSA that removes the need for heuristically combining word-word cooccurrence matrices into a single matrix. Finally, while surveying the literature I noticed that not enough emphasis was being given towards establishing the significance of comparative results and proposed a method, \emph{(MRDS)}, to filter out insignificant comparative gains between competing algorithms.

%% TODO
%% MVLSA - newer method - why it's important to have these. Different datasets give different
%%    Why is multi-view approach a better approach than just the

\noindent\textbf{Future Work} Column \emph{MVLSA Wiki} of
Table~\ref{tab:c} shows us that MVLSA applied to monolingual data has
 mediocre performance  compared to the baselines of Glove and
Word2Vec on word similarity tasks and performs surprisingly worse on
the AN-SEM dataset. I believe that the results could be improved by (1)
either using recent methods for handling missing values
mentioned in footnote~\ref{ftn:mis} or by using the heuristic count dependent
non-linear weighting mentioned by \newcite{pennington2014glove}
and that sits well within my framework as exemplified in Expression~\ref{eq:gcca3}
 (2) by using even more views, which
look at the future words as well as views that contain PMI values.
Finally, I note that Table~\ref{tab:jkjk} shows that certain datasets can
actually degrade performance over certain metrics. Therefore I am
exploring methods for performing discriminative optimization of weights
assigned to views, for purposes of task-based customization of learned representations.

%auto-ignore
\chapter{Neural Variational Set Expansion}
\label{cha:nvse}
People use words and sentences to communicate with each other about real-world entities. In the previous chapter, I presented a ``shallow'' algorithm MVLSA for learning representations of words. In this chapter, I go deeper and present a novel ``deep'' representation learning method for learning representations of entities grounded in natural language text.\footnote{A previous version of this work was published in \cite{rastogi2018nvse}.} For this chapter an entity is a set of mentions across multiple documents that refer to the same real-world object. Distributed representations of such mention-sets can aid Information extraction and retrieval systems. To that end, I focus on the task of \textit{Entity Recommendation}. Many existing information retrieval systems that operate on entities rely on clean, manually curated sets of entities for their operation. Because users often work with unclean, automatically generated KGs and require interpretable tools; therefore, they are often unable to incorporate such algorithms in their workflow fully. I propose \nvgeLong to extract actionable information from a noisy knowledge graph (KG) grounded in natural language and also propose a general approach for increasing the interpretability of recommendation systems.

Akin to prior entity-focused retrieval definitions, a query consists of one or more entities, with the intent of retrieving similar entities.  Differing from prior work, I focus neither on manually curated knowledge bases, nor collections of entity-labeled documents such as Wikipedia. I demonstrate the usefulness of applying a variational autoencoder to the \erLong task based on a realistic automatically generated KG.  Further, I describe an approach for \er, \nvgeLong, which supports humanly interpretable query rationales, and outperforms baselines such as Bayesian Sets and BM25.

\section{Introduction}
Imagine a physician trying to pinpoint a specific diagnosis or a security analyst attempting to uncover a terrorist network. In both scenarios, a \textit{domain expert} may try to find answers based on prior known, relevant entities -- either a list of diagnoses of with similar symptoms that a patient is experiencing or a list of known conspirators. Instead of manually looking for connections between potential answers and prior knowledge, a \textit{searcher} would like to rely on an automatic \textit{Recommender} to find the connections and answers for them, i.e., related entities.

In the information retrieval (IR) community, Entity Set Expansion (ESE) is the established task of recommending entities that are similar to a provided seed of entities.\footnote{I refer to the items in the seed as entities, but they can also be referred to as items or elements} ESE has been applied in Question Answering~\mycite{wang-EtAl:2008:EMNLP2}, Relation Extraction~\mycite{lang-henderson:2013:NAACL-HLT} and Information Extraction~\mycite{hegrishman2015demos} settings. The physician and journalist in my example cannot fully take advantage of IR advances in ESE for two main reasons. Recent advances 1) often assume access to a clean, large Knowledge Graph and 2) are uninterpretable.

Many advanced ESE algorithms rely on manually curated, clean Knowledge Graphs (KG), e.g. DBpedia \mycite{auer2007dbpedia} and Freebase \mycite{bollacker2008freebase}. In clean KGs duplicate entities are merged, entities rarely are isolated, and entities with similar names are properly disambiguated. However, in real-world settings, users do not always have access to clean KGs, and instead, they may rely on automatically generated KGs. Such KGs are often \textit{noisy} because they are created from complicated and error-prone NLP processes -- illustrated in \figref{fig:ff1}. For example, automatic KGs may include duplicate entities, associations (relations) between entities may be missing, and entities with similar names may be incorrectly disambiguated.
Similarly, faulty coreference or entity linking may fail to merge duplicate entities, may create many isolated entities, and may poorly disambiguate entities with similar names. These imperfections prevent machine learning approaches from performing well on automatically generated KGs. Furthermore, many ESE algorithm's performance degrades as the sparsity and unreliability of KGs increases~\mycite{pujara2017sparsity,rastogi2017vertex}. Therefore, in practice, users working with large KGs even now only rely on weighted boolean and keyword searches \mycite{jin2005query,gadepally2016recommender} instead of advanced KG completion algorithms

Advanced ESE methods, especially those that rely on neural networks, are uninterpretable \mycite{mitra2017neural}. If a physician can not explain decisions, patients may not trust her, and if a journalist can not demonstrate how a certain individual is acting unethically or above the law, a resulting article may lack credibility. Furthermore, uniterpretability may limit the applications of advancements in IR, and more broadly artificial intelligence, as humans ``won't trust an A.I. unless it can explain itself.''\footnote{\url{https://nyti.ms/2hR1S15}}

\begin{figure*}[t]
  \centering
  \includegraphics[width=\linewidth,trim=0 0 0 0,clip]{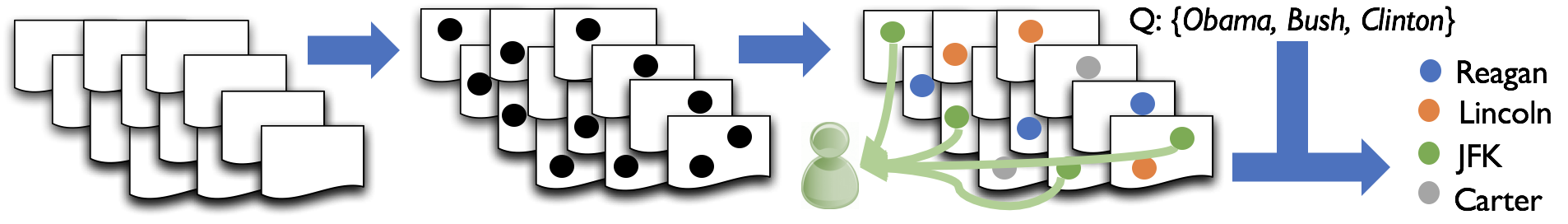}
  \caption[Typical NLP pipeline for Named Entity Recognition and Linking.]{ %\erLong.
    My \emph{\erLong} (\er) system assumes a corpus that has been labeled with entity mentions which are clustered via cross-document co-reference and linking to a knowledge base; together known as \emph{entity discovery and linking} (EDL). Given a query containing \textit{Obama}, \textit{Bush}, and \textit{Clinton}, the \er system returns other U.S. presidents found in the KG.}
\label{fig:ff1}
\end{figure*}

I introduce \nvgeLong (\nvge) to advance the applicability of ESE research. \nvge is an unsupervised model based on Variational Autoencoders (VAEs) that receives a query, consisting of a small set of entities and uses a Bayesian approach to determine a latent concept that unifies entities in the query, and returns a ranked list of similar entities based on the previously determined unified latent concept. I refer to my method as \nvgeLong since \nvge uses a VAE to model the latent concept as a Gaussian distributed random variable for the task of Set Expansion.

\nvge does not require supervised examples of queries and responses, nor a manually curated KG. It also does not require nor pre-built clusters of entities. Instead, my method
 only requires sentences with linked entity mentions, i.e., spans of tokens associated with a KG entity, often included in automatically generated KGs.

\nvge is robust to noisy automatically generated KGs, thus removing the need to rely on manually curated, clean KGs. I evaluate \nvge on the \er task using Tinkerbell \mycite{tinkerbell2017Tac}, an automatically generated KG that placed first in the TAC KGP shared-task. Unlike how ESE has been used to improve entity linking for KG construction~\mycite{gottipati2011linking}, my goal is the opposite: I leverage noisy automatically generated KGs to perform ESE. \nvge is interpretable; it outputs \textbf{query rationales} -- a summarization of features the model associated with the query -- and \textbf{result justifications} -- an ordered list of sentences from the underlying corpus that justify why my method returned that entity. Query rationales and result justifications are reminiscent of \textit{annotator rationales}~\mycite{zaidan-eisner-piatko-2008:nips}.

To my knowledge, this is the first unsupervised neural approach for \er as opposed to neural methods for supervised collaborative filtering~\mycite{lee2017augmented}. All code and data is available at \url{github.com/se4u/nvse} and a video demonstration of the system is available at \url{youtu.be/sGO_wvuPIzM}.

\section{Related Work}
{\bf Methods dependent on external information.} Since automatically generated KGs can be noisy, some methods utilize information beyond entity links and mentions to aid ESE. \mynewcite{pasca2007you} use search engine query logs to extract attributes related to entities and \mynewcite{pacsca2008weakly} extract sets of instances associated with class labels based on web documents and queries. \mynewcite{pantel-EtAl:2009:EMNLP} use a large amount of web data as they apply a learned word similarity matrix extracted from a 200 billion word Internet crawl to the \er task. Both \mynewcite{he2011seisa}'s SEISA system and \mynewcite{tong2008system}'s Google Sets use lists of items from the Internet and try to determine which elements in the lists are most relevant to a query. \mynewcite{sadamitsu-EtAl:2011:ACL-HLT2011} rely on given topic information about the queried entities to train a discriminative system. More recent approaches also use external information. \mynewcite{manzil2017deep} use LDA \mycite{blei2003latent} to create word clusters for supervision, and \mynewcite{vartak2017meta} use manual annotations by Twitter users. \mynewcite{zheng2017entity} uses inter-entity links in knowledge graphs which are very sparse in automatically generated KGs~\mycite{pujara2017sparsity,rastogi2017vertex}. All of these approaches use more information than just entity links and mentions. % their mentions.

{\bf Methods for comparing entities.} Set Expander for Any Language (SEAL) \mycite{wang2007language-independent} and its variants \mycite{wang2008iterative,wang2009automatic} learn similarities between new words and example words using methods like Random Walks and Random Walks With Restart. Similar to \mynewcite{lin1998automatic}'s using cosine and Jaccard similarity to find similar words, SEISA uses these metrics to expand sets. These methods are limited to only extracting words that co-occur. Because they are applied to web-scale data, SEAL and SEISA assume entities will eventually co-occur. This assumption might not be valid in an underlying corpus used to generate a KG automatically. In contrast to those approaches, \nvge finds similar entities based on a kernel between distributions.

{\bf Queries as natural language.} In the INEX-XER shared-task, queries were represented as natural language questions \mycite{demartini2009overview}. \mynewcite{metzger2014aspect} and \mynewcite{zhang2017entity} propose methods to extract related entities in a KG based on a natural language query. This scenario is similar to a person interacting with a system like Amazon Alexa. However, my setup better reflects users searching for similar entities in a KG as it is more efficient for users to type entities of interest instead of natural language text.

{\bf Neural Collaborative Filtering.} I am not the first to incorporate neural methods in a recommendation system. Recently, \mynewcite{He:2017:NCF:3038912.3052569} and \mynewcite{lee2017augmented} presented deep auto-encoders for collaborative filtering. Collaborative Filtering assumes a large dataset of previous user interactions with the search engine. For many domains, it is not possible to create such a dataset since new data is added every day and queries change rapidly based on different users and domains. Therefore, I propose the first neural method which does not use supervision for \erLong. \mynewcite{li2017collaborative} use a citation dataset and their recommendations only include users with less than ten articles. They only gave recommendations for entities that appeared in at least $10$ articles in the corpus.

%% TODO: Basically clustering features
%% LDA models over entities - Context and entity mentions - what is
%% the advantage of what I am trying to do.
{\bf Unsupervised Clustering for Entity Resolution} \cite{kugatsu2011entity} proposed to learn the latent topics of documents for alleviating problems of ``semantic drift'' in Entity Set Expansion. Semantic drift refers to the common problem  faced by entity set expansion algorithms of changes in the extraction criteria. In order to combat this problem they modeled the latent topics with LDA (Latent Dirichlet Allocation)~\cite{blei2003latent} and utilized the topic information in three ways:
\begin{itemize}
\item First they used the topic distribution of documents to generate features for their set expansion system.
\item Second, they selected negative examples for training a discriminative system.
\item And third they pruned certain examples in their iterative training method.
\end{itemize}
Instead of creating a pipelined approach using a pre-existing topic model, our approach allows us to create a topic model that can be trained end-to-end and which is directly amenable to learning non-linear features of the data. A similarity between the two methods is that variational inference can be used for learning the parameters of both the models.

\section{Notation}
\label{sec:notation}
Let $\cDoc$ be the corpus of documents and $\cV$ be the vocabulary of tokens that appear in $\cDoc$.  I define a document as a sequence of sentences and  a sentence as a sequence of tokens. Let $\cX$ be the set of entities discovered in $\cDoc$ and I refer to its size as $\rX$. Each entity $x \in \cX$ is linked to the tokens that mention $x$.\footnote{I ignore confidence scores that entity linking systems often assign to a link because confidence scores will prevent us from using a multinomial distribution to model a document as a bag-of-words.} Let $\cVp$ be the set of tokens linked to any $x \in \cX$, and let $\cM_x$ be the multiset of sentences that mention $x$ in the corpus. For example, consider an entity named ``Batman'' and a document containing three sentences \{\underline{Batman} is good., \underline{He} is smart. Life is good.\}. ``Batman'' is linked to tokens \underline{Batman} and \underline{He}, %in the first and second sentence respectively, $\cVp = \{\text{Batman}, \text{He}\}$, and $\cM_{\text{Batman}} = $  \{\underline{Batman} is good., \underline{He} is smart.\}.

In \er, a system receives query $\cQ$ -- a subset of $\cX$ -- and has to sort the elements remaining in $\cR = \XmQ$. The elements that are most similar to $\cQ$ should appear higher in the sorted order and elements dissimilar to $\cQ$ should be ranked lower.

\section{Baseline Methods}
Before introducing \nvge, I describe the four baselines systems: BM25, Bayesian Sets, \wTv, and \setX. I do not compare to DeepSets~\mycite{manzil2017deep}, as it is a supervised method that requires entity clusters.

For each $x$, I create a feature vector $f_x \in \mathbb{Z}^{\rF}$ from $\cM_x$, by concatenating three  vectors that count how many times 1) a token in $\rV$ appeared in $\cM_x$ 2) a document in $\cDoc$ mentioned $x$ and 3) a token in $\cVp$ appeared in $\cM_x$. Thus, $\rF = \rV + \rDoc + \rVp$.

\subsection{BM25}\label{ssec:bm25}
Best Match 25 (BM25) is ``one of the most successful text-retrieval algorithms'' \mycite{robertson2009prf}. \footnote{Lucene replaced tf-idf with BM25 as its default algorithm: \url{https://issues.apache.org/jira/browse/LUCENE-6789}} BM25 ranks remaining entities in $\cR$ according to the score function
\begin{equation*}
\underset{BM}\score(\cQ, x) = {\sum_{i=1}^\rF} {\frac{\IDF[i] f_x[i] \fcq[i] (k_1 + 1)  }{f_x[i] {+} k_{1} (1{-}b {+} b \fracil{\sum_{j} f_x[j]}{\avgdl})}},
%.
\end{equation*}
where $f_{x}[j]$ denotes the $j$-th feature value in $f_{x}$, $\fcq$ is the sum of $f_x \forall x \in \cQ$ and $\bI$ is the indicator function. $k_1$ and $b$ are hyperparameters that commonly set to $1.5$ and $0.75$ \mycite{introductionir}. $\avgdl$ is the average total count of a feature in the entire corpus and $\IDF[i]$ is the inverse document frequency of the $\ath{i}$ feature. They are computed as,
\begin{align*}
  \avgdl =& \fracil{\sum_{x \in \cX} \sum_j f_x[j] }{\rX}\\
\IDF[i]= &\log \frac{\rX - \DF[i] + 0.5}{\DF[i] + 0.5} \\
\DF[i] = &\sum_{x \in \cX} \bI[f_x[i] > 0].
\end{align*}

\subsection{Bayesian Sets}
\mynewcite{ghahramani2006bayesian} introduced the Bayesian Sets (BS) method which converts ESE into a bayesian model selection problem. BS compares the probabilities that the query entities are generated from a single sample of a latent variable $z \in \Delta^{\rF}$ with the probability that the entities were generated from independent samples. $\Delta^{\rF}$ is the $\rF-1$ dimensional probability simplex. Note that $z$ has the same dimensionality as the observed features. Given $\cQ$ and $\pi$, the prior distribution of $z$, BS infers the posterior distribution of $z$, $p(z | \cQ)$, and computes the following score
\begin{equation}
 \underset{BS}\score(\cQ, x) = %\log \frac{p(x|\cD)}{p(x)} =
 \log \frac{E_{p(z | \cQ)}[p(x|z)]}{E_{\pi(z)}[p(x|z)]}.
\label{eq:bs}
\end{equation}
\myciteauthor{ghahramani2006bayesian} computed $\score_{BS}$ in close form by selecting the conditional probability, $p(x|z)$, from an exponential family distribution and setting $\pi$ to be its conjugate prior. They showed that if $p(x|z)$ is multivariate Bernoulli then BS requires a single matrix multiplication \appref{sec:bs} and I use this setting for my experiments.% \footnote{I also implement Poisson-Gamma and Gaussian-Wishart variants of BS but they did not perform better.}
% They binarize the count vectors $f_{x}$ as described in ~\appref{app:bs-binarize}

\subsection{\wTv}
\mynewcite{levy2014dependency} generalize \mynewcite{mikolov2013distributed}'s Skip-Gram model as \wTv to include arbitrary contexts. I embed entities with \wTv by using the entity IDs as words \footnote{Converting entity mentions to entity IDs allows us to overcome issues related to embedding multi-word expressions as explained in \mynewcite{poliak-EtAl:2017:EACLshort}.} and the tokens in the sentences mentioning those entities as contexts. Note that all tokens in the sentence, except for some stop words, are used as contexts and not just co-occurrent entities. I rank the entities in the order of their total distance to the entities in the query set as
\begin{equation}\label{eq:wtv}
  \underset{W2V}\score(\cQ, x) = - \sum_{\tilde{x} \in \cQ} (v_x - v_{\tilde{x}})^2.
\end{equation}
Here, $v_{x}$ represents the L2-normalized embedding for $x$.

\subsection{SetExpan}
\mynewcite{shen2017setexpan} introduce \setX, a SOTA framework combining context feature selection with ranking ensembles, for set expansion. \setX outperformed other SE methods such as SEISA in their evaluation. \setX represents entities by the contexts that they are mentioned in. For example, the context features for \underline{Batman} from \secref{sec:notation} will be \{\_\_ is good, \_\_ is smart\}. The contexts are used to create a large feature vector which can be used to compute the inter-entity similarity. The authors argue that using all possible features for computing entity similarity can lead to overfitting and semantic drift. To combat these problems, \setX builds the entity set iteratively by cycling between a context feature selection step and an entity selection step. In context feature selection, each context feature is assigned a score based on the set of currently expanded entities. Based on these scores, the context-features are reranked, and the top few context features are selected. The entity selection proceeds by the bootstrap sampling of the chosen context features and using those features to create multiple different ranked lists of entities. Multiple different ranked lists are finally combined via a heuristic method for ensembling different ranked lists to create a new set of expanded entities. This process is repeated to convergence to get the final list of expanded entities.

\section{\nvgeLong} %\nvgeLong}
Like BS, \nvgeLong first determines the underlying concept, or topic, underlying the query and then ranks entities based on that concept. My method differs from BS because I  use a deep generative model with a low dimensional concept representation, to simulate how a concept may generate a query. Also I use a ``distance'' (\S~\ref{sec:using-pzcd-to-rank}) between posterior distributions for ranking entities in lieu of bayesian model comparison.
%% This same %% intuition has been implemented in different ways in~\myciter{SEISA, SEAL, QBEES} methods.
%\begin{figure}[t]%[thbp]
%  \centering%  trim={<left> <lower> <right> <upper>}
%  \includegraphics[width=\linewidth,trim=0 0 0 0,clip]{fig/q2.pdf}
%  \caption{The generative model of query generation is shown on the left.
%    The small nodes denote probability distributions.
%    The variational inference model is shown on the right.
%    The transformation from $z$ to $g$ is done via the G-NN
%    and the transformation from $f_x$ to $q_\phi(z|x)$ is done via the I-NN.}
%  \label{fig:q}
%\end{figure}

\subsection{Inference Step 1: Concept Discovery}
\label{sec:latent-concept}
My model (Fig.~\ref{fig:q}) is as follows: $z \in \mathbb{R}^d$ is a low dimensional latent gaussian random variable representing the concept of a query. $z$ is sampled from a fixed prior distribution $\pi = \cN(\Z, \sigma^2\I)$, i.e. $z \sim \pi$. The members of $\cQ$ are sampled conditionally independently given $z$. $z$ is mapped via a multi layer perceptron (MLP), called $\gennn$, to $g$, the p.m.f.  of a multinomial distribution that generates $f_x$, the features of $x$. $\gennn$ is a neural network with a softmax output layer and parameters $\theta$. $f_x \in \mathbb{Z}^\rF$ are sampled i.i.d. from $p(f|z,\theta) = \gennn(z)$.\footnote{My generative model is inspired by \mynewcite{miao2016neural}'s NVDM. They assume that a single latent variable generates only one observation, but I posit that the same latent variable $z$ generates all observations in $\cQ$.}

In other words, the vector $f_x$ contains the counts of observed features for $x$ that were sampled from $g$, and $g$ was itself sampled by passing a Gaussian random variable through a neural network.

\begin{figure}[t]
  \begin{subfigure}[t]{0.4\linewidth}
    \centering
      \includegraphics[width=\linewidth]{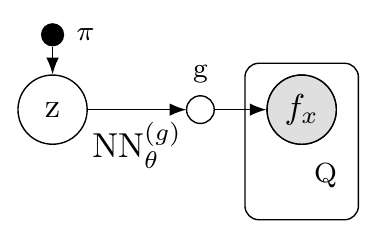}
      \caption{Generative Network}
    \end{subfigure}\hspace{0.1\linewidth}
  \begin{subfigure}[t]{0.49\linewidth}
    \centering
    \includegraphics[width=1\linewidth]{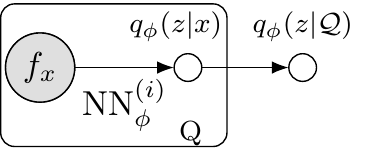}
    \caption{Inference Network}
  \end{subfigure}
  \caption[The NVSE generation and inference models.]{The generative model of query generation is on the left and the variational inference network is on the right. Small nodes denote probability distributions, gray nodes are observations and the black node $\pi$ is the known prior. $\gennn$ transforms $z$ to $g$ and the $\infnn$ transforms $f_x$ to $q_\phi(z|x)$.}
  \label{fig:q}
\end{figure}

Under this deep-generative model, a concept vector can simultaneously trigger multiple observed features. This allows us to capture the correlations amongst features triggered by a concept. For example, the concept of \cfont{president} can simultaneously trigger features such as \ffont{white house}, \ffont{executive order}, or \ffont{airforce one}.

To infer the latent variable $z$ ideally, I should compute $p_\theta(z|\cQ)$, the posterior distribution of $z$ given the observations $\cQ$. Unfortunately, this computation is intractable because the prior is not conjugate to the likelihood that has a neural network. Another problem is that it is unrealistic to assume access to a large set of ESE queries at training time, because user's information needs keep changing; therefore the approach used by \myciteauthor{manzil2017deep} in DeepSets to discriminatively learn a neural scoring function is \textit{impractical} for set expansion. For the same reason, it is also not possible to learn a single neural network whose input is $\cQ$ and which directly approximates $p_\theta(z|\cQ)$. Therefore it is non-trivial to apply the VAE framework to ESE. To overcomes these problems I make the assumption that a query $\cQ$ is conjunctive in nature, i.e. if entity $x_1$ and $x_2$ are present in $\cQ$ then results that are relevant to \textit{both} $x_1$ and $x_2$ simultaneously should be given a higher ranking than results that are related to $x_1$ but not $x_2$ or vice-versa. I implement the conjunction of entities in a query by combining the \textit{Product of Experts}~\mycite{hinton1999products} approach with the \textit{Variational Autoencoder} (VAE)~\mycite{kingma2014auto} method to approximate $p_\theta(z|\cQ)$.

I first map each $x$ to an approximate posterior $q_\phi(z|x)$ via a neural network $\infnn$ and then I take their product to approximate $p_\theta(z|\cQ)$. \[ p_{\theta}(z | \cQ) \approx q_\phi(z | \cQ) \propto \prod_{x \in \cQ} q_\phi(z | x). \] The $\phi$ parameters are estimated by minimizing $KL(q(z|x)\mid\mid p(z|x))$ as shown in \S~\ref{sec:est}.\footnote{ This is a generalization of~\mynewcite{bouchacourt2017multi} combining variational approximations of posterior distributions since the product of Gaussians is a Gaussian distribution.} The benefit of the POE approximation is that the posterior approximation $q_\phi(.|x)$ for each entity $x$ in $\cQ$ acts as an expert and the product of these experts will assign a high value to only that region where all the posteriors assign a high value. Therefore the POE approximation is a way of implementing conjunctive semantics for a query. Another benefit is that if $q_\phi(.|x)$ is an exponential family distribution with a constant base measure whose natural parameters are the output of $\infnn$, then the product of the distributions $\prod_x q_\phi(\cdot |x)$ lies in the same exponential family whose natural parameters are simply the sum of individual neural network outputs. Also, notice that the POE approach recommends adding the \textit{outputs} of the neural networks which is different than concatenating the features for all $x$ in $\cQ$ or naively adding the \textit{inputs} of the neural network.\footnote{Recently, \myciteauthor{manzil2017deep} gave a theorem that any permutation invariant function of sets must be representable as the function of a sum of features of elements of the set. I note that my POE approximation also has a similar form and is permutation invariant.}

I now  show in more detail how the product of experts can be computed simply by adding the output of the neural networks in the special case that the variational approximation has the following form:
\begin{align}
q_\phi(z | x) \propto h(z) \exp( \ip{\psi(z) }{\infnn(x)})
\end{align}
where $\psi(z)$ are the features of $z$. If $h$  is constant -- which is true for many exponential family distributions such as the Bernoulli, Exponential, Pareto, Laplace, Gaussian, Gamma, and the Wishart distributions -- then:
\begin{align*}
  q_\phi(z | x) &\propto \exp( \ip{\psi(z) }{\infnn(x)}).%\\
\end{align*}
In turn,
\begin{align*}
%\implies
\prod_{x \in \cQ}q_\phi(z | x) &\propto \exp( \ip{\psi(z) }{\sum_{x \in \cQ}\infnn(x)}).
\end{align*}
This shows that the product of experts can be computed simply by summing the outputs of the neural network activations for such \textit{deep-exponential} families with constant base measure.

I use $\infnn$ to compute the mean and log-variance of the gaussian distribution $q_\phi(z  | x)$ (\ref{eq:mean_log-var}) that I then convert to the natural parameters of a Gaussian  (\ref{eq:nat_params_gaus}). Next, I add the natural parameters of the individual variational approximations $\xi_x, \Gamma_x$ to compute the parameters $\xi_\cQ, \Gamma_\cQ$ for $q_\phi(z | \cQ)$ (\ref{eq:gamma}). Finally, I compute $q_\phi(z|\cQ)$ (\ref{eq:q_phi_z_Q}).
\begin{align}
  \mu_x, \Sigma_x &= \infnn(f_x)\label{eq:mean_log-var} \\
  \xi_x,\ \  \Gamma_x &= \mu_x  \Sigma_x^{-1},\ \  \Sigma_x^{-1}.\label{eq:nat_params_gaus} \\
  \xi_\cQ,\ \ \Gamma_\cQ &= \sum_{x \in \cQ} \xi_x,\ \ \sum_{x \in \cQ} \Gamma_x. \label{eq:gamma} \\
  {q_\phi(z|\cQ)} &= \cNc(z | \xi_\cQ, \Gamma_\cQ) \label{eq:q_phi_z_Q}
\end{align}

As explained above, the benefit of using the natural parameterization is that I can simply add the natural parameters of the individual variational approximations $\xi_x, \Gamma_x$ to compute the parameters $\xi_\cQ, \Gamma_\cQ$ for $q_\phi(z | \cQ)$ as
\begin{equation}%%\label{eq:gamma}
 \xi_\cQ, \Gamma_\cQ = \sum_{x \in \cQ} \xi_x, \sum_{x \in \cQ} \Gamma_x.
\end{equation}
Finally, I compute $q_\phi(z|\cQ)$ such that \[ {q_\phi(z|\cQ)} = \cNc(z | \xi_\cQ, \Gamma_\cQ), \] where $\cNc(z | \xi, \Gamma)$ is the multi-variate Gaussian distribution in terms of its natural parameters -- \[ \frac{|\Gamma|^{1/2}}{(2\pi)^{D/2}}\exp\left(-\frac{(z^T \Gamma z - 2\xi^T z + \xi^T\Gamma^{-1}\xi)}{2} \right). \]

\subsection{Inference Step 2: Entity Ranking}
%\subsection{Metric for entity rankings}
\label{sec:using-pzcd-to-rank}
In order to rank the entities $x \in \cR$, I design a similarity score between the probability distributions $q_\phi(z|\cQ)$ and $q_\phi(z|x)$ as an efficient substitute for bayesian model comparison. I use the distance between precision weighted means $\xi_{\cQ}$ and $\xi_{x}$ to define my ``distance'' function as
\begin{equation}
\underset{\nvge}\score(\cQ, x) = -||\xi_{\cQ} - \xi_{x}||^2 .
\label{eq:score_nver}
\end{equation}
My inter-distribution ``distance'' is not a proper distance because it changes as the location of both the input distributions is shifted by the same amount. I experimented with more standard, reparameterization invariant, divergences and kernels such as the KL-divergence and the Probability Product Kernel \mycite{jebara2004probability}, see \appref{app:rankings}, but I found my approach to be faster and more accurate. I believe this is because the regularization from the prior that encourages the posteriors to be close to the origin makes shift invariance unnecessary.

\subsection{Unsupervised Training}
\label{sec:est}
In general VAEs are a combination of deep neural generative models and deep approximations of posterior distributions of such generative models. \nvge is trained in an unsupervised fashion to learn its parameters $\theta$ and $\phi$. \mynewcite{kingma2014auto,rezende2014stochastic} proposed the VAE framework for learning richly parameterized conditional distributions $p_\ta(x | z)$ from unlabeled data. I follow \myciteauthor{kingma2014auto}'s reparameterization trick to train a VAE and maximize the \textit{Evidence Lower Bound}:
\begin{align}
  E_{q_\phi(z | x)}[\log p_\theta(x | z)] - KL(q_\phi(z|x) || p(z)).\label{bound:vae}
\end{align}
During training, I do not have access to any clustering information or side information that tells us which entities can be grouped. Therefore I assume that the entities $x \in \cX$ were generated i.i.d. The model during training looks the same as \figref{fig:q} but with one difference: $Q$ is a singleton set of just one entity.\footnote{More informally, I remove the plates from \figref{fig:q}.} Note that my learning method requires no supervision in contrast to methods like Deep Sets which require cluster information, or Neural Collaborative filtering methods which require a large dataset of user interactions.

To learn the parameters during training, I update $\phi$ and $\theta$ using stochastic back-propagation.

\subsection{Support for weighted queries}
\label{sec:weighted-queries}
Useful recommendation systems for users should be tunable.  If a recommendation system returns undesirable entities in response to the query, then the user should be able to easily tune the query so that the system excludes the undesirable results. Most search engines allow boolean exclusion operators or weighted query terms, but in the \er systems presented so far, a user can only change a query by either removing or adding entities. Furthermore, Weighted-queries enable users to tell the system what aspects of the query to focus on or ignore.

To apply user provided weights as the amount of influence that an entity should have on the final posterior over topics, I integrate the weights directly into the computation of the topic posteriors.  If the user provides weights $\btau = \{ \tau_x \mid x \in \cQ\}$, I compute the query features as

\begin{equation}\label{eq:weight}
\xi_{\cQ,\btau},\ \ \Gamma_{\cQ,\btau} = \sum_{x \in \cQ} \tau_x \xi_x,\ \ \sum_{x \in \cQ} |\tau_x|\Gamma_x.
\end{equation}
BM25 supports weights by multiplying each $f_x$ by $x$'s weights when computing $\fcq$. It is not clear how to incorporate weights in Bayesian Sets. instead of computing $\Gamma_\cQ = {\sum_{x \in \cQ}\infnn(x)}$ in (\ref{eq:gamma}), I perform a weighted sum \[\Gamma_{(\tau, \cQ)} = {\sum_{x \in \cQ}|\tau_x| \infnn(x)}\] Note that in computing the precision $\Gamma$  I only use the magnitude of the provided weights. To allow a user to tell the system to focus specificially on entities \textit{NOT} similar to a specific $x$, I enable a user to add negative weights. The signed weights are used for computing $\xi$ as follows: \[ \xi_{(\tau, \cQ)} = {\sum_{x \in \cQ} \tau_x \xi_{x}} \]

\section{Interpretability}
\label{sec:query-rational}
I introduce a general approach for interpreting \er models based on \textit{query rationales} to explain the latent concept the model discovered and \textit{result justifications} to provide evidence for why the system ranked an entity highly.

Useful recommendation systems for users should be tunable.  If a recommendation system returns undesirable entities in response to the query, then the user should be able to quickly tune the query so that the system excludes the undesirable results. Most search engines allow boolean exclusion operators or weighted query terms, but in the \er systems presented so far, a user can only change a query by either removing or adding entities. However, with the \nvge system, based on query rationales and result justifications, users can add weights to entities in a query to tell the system what aspects of the query to focus on or ignore.

\subsection{Query Rationale}
A \textit{Query Rationale} is a visualization of the latent beliefs of the \er system given the query $\cQ$. Given $\cQ$, I constructed a feature-importance-map $\gamma_{\cQ}$ that measures the relative importance of the features in $f_x$ and I show the top features according to $\gamma_\cQ$ as ``Query Rationales''. Recall that the $\ath{j}$ component of $f_x$, associated with entity $x$, measures how often the $\ath{j}$ feature co-occurred with $x$. I now present how I constructed $\gamma_\cQ$ for \nvge and the baselines.

For BM25, $\gamma_{\cQ}$ is simply $\bar{f_{\cQ}}$. In BS, $\gamma_{\cQ}$ is the weights from (\ref{eq:bs_score}): for each  $\ath{j}$ component of $f_x$,
\begin{equation*}
\gamma_{\cQ}[j] = \log \frac{\talpha[j] \beta[j]}{\alpha[j]\tbeta[j]}. %,
\end{equation*}
The benefit of generative methods such as BS and \nvge is that for them query rationales can be computed as a natural by-product of the generative process instead of as ad-hoc post-processing steps. For \nvge, ideally $\gamma_{\cQ}$ should be the posterior distribution $p_\theta(f | \cQ)$. Since this is intractable I approximate it by sampling the inference network: \[ p_\theta(f | \cQ) = E_{p_\theta(z | \cQ)} [p_{\theta}(f | z, \cQ)]  \approx E_{q_\phi(z | \cQ)} [p_{\theta}(f|z)] . \] I further approximate the expectation with a single sample of the mean of $q_\phi(z | \cQ)$. Finally the feature importance map for \nvge is: \[\gamma_{\cQ} = p_\theta(f | E[q_\phi(z | \cQ)]).\] Because \wTv finds the nearest-neighbor between entity embeddings, which are produced through a complicated learning process operating on the whole text corpus, it does not provide a natural way to determine the importance of a single sentence and therefore it is not possible to say what was the effect of a particular sentence on the query results. Similarly, since the \setX method works by extracting context features and iteratively expanding this feature set, it is not possible to determine the effect of a single sentence on the final search results.

\subsection{Result Justifications}
I define result justifications as sentences in $\cM_{x}$ that justify why an entity was ranked highly for a given query. Ranking the sentences that mention an entity is similar to ranking entities in $\cR$. Just as I create a feature vector for each $x$, I create a feature vector for each sentence in $\cM_{x}$ and use the same scoring function to rank the sentences based on the query. While computing a score for entity $x$ based on a query, I also score each sentence in $\cM_{x}$. My approach to generating interpretable result justifications is agnostic to \er methods with the caveat that for methods like \wTv and \setX this will require retraining or reindexing over the corpus for each query. My approach will not be feasible for such methods.

\subsection{Weighted queries}
Any recommendation system can occasionally fail to provide good results for a query. To improve a system's responses in such cases, I enable users to guide \nvge's results by using entity weights to influence the posterior distribution over topics.

If a user provides weights $\btau = \{ \tau_x \mid x \in \cQ\}$, I compute the query features via \eqnref{eq:weight}. The above formulae have an intuitive explanation that when an entity has a higher weight, then the precision over the concepts activated by that entity is increased according to the magnitude of the weight, and the value of the precision weighted mean is also weighted by the user-supplied weights. In turn, an entity with zero weight has zero effect on the final search result and entities with a high negative weight return entities diametrically opposite to that entity with higher confidence.

Weights can be applied to other methods as well. BM25 can multiply each $f_x$ by $x$'s weights when computing $\fcq$, and \wTv can use a weighted average. It is not straight-forward to incorporate weights in BS and \setX systems. One possible way is to use bootstrap resampling of the query entities according to a softmax distribution over entity weights, but bootstrapping makes the system non-deterministic and therefore even more opaque for a user. Also, bootstrap resampling requires multiple query executions, and it is not straight-forward to combine the outputs of different search queries; therefore I do not advocate for bootstrapping.

\section{Comparative Experiments}
My proposed method determines the latent random variable responsible for generating the query and then ranks the entities in $\cR$ by computing a distance between the probability of the latent variable given the given query and the probability of the latent variable given each entity. I test the hypothesis that \nvge can help bridge the gap between advances in IR and real-world use cases. I use human annotators on Amazon Mechanical Turk (AMT) to determine whether \nvge finds more relevant entities than my baseline methods in a real world, automatically generated KG.

\subsection{Dataset}
TinkerBell~\mycite{tinkerbell2017Tac} is a KG construction system  that achieved top performance in TAC-KGP2017 evaluation.\footnote{Tinkerbell constructed a KG from \texttt{LDC2017E25} that contains $30K$ English documents. Half of them are from online forums and the other half from Reuters and NYT. I focused on the $77,845$ entities from English documents appearing in $344,735$ sentences. $25,149$ entities were also linked to DBpedia.} I used it as my automatic KG. For each entity $e$ in TinkerBell I create $\cM_e$ by concatenating all sentences that mention $e$ and remove the top $100$ most frequent features in the corpus from $\cM_e$ to clean stop words. Tinkerbell was constructed from the TAC KGP 2017 evaluation source corpus, \texttt{LDC2017E25}, that contains 30K English documents and 60K Spanish and Chinese documents. \footnote{\url{tac.nist.gov/2017/KGP/data.html}} Half of the English documents come from online discussion forums and the other half from news sources, e.g., Reuters or the New York Times. My experiments only use the 77,845 EDL entities within TinkerBell that are assigned the type \texttt{Person}. I use these links to create a map from DBPedia categories to entities in TinkerBell, say $M$. Each entity in TinkerBell is associated with spans of characters that mention that entity. I tokenize and sentence segment the documents in LDC2017E25 and associate sentences to each entity corresponding to mentions. In the end, I get 344,735 sentences associated with the 77K entities. The median number of sentences associated with an entity is $1$, and the maximum number of sentences is $4638$ for the \textit{Barack Obama} entity.\footnote{The Mean is 4.43, the standard deviation is 29.19, the minimum number of sentences for an entity is 1, the maximum number of sentences is 4638, and the median is 1 (44,317 entities).} This is a good example of how automatic KGs differ from manually curated KGs. In TinkerBell most of the entities appear in only a single sentence so only a single fact may be known about them. In contrast, KGs like FreeBase and DBpedia have a more uniform coverage of facts for entities present in them. Another difference is that relational information such as ancestry relations between entities are noisier in an automatically generated KB than in DBpedia which relies on manually curated information present in Wikipedia.

\subsection{Implementation Details}
I prune the vocabulary by removing any tokens that occur less than $5$ times across all entities. I end up with, $\rF {=} 105448, \rV = 61311$, $\rDoc =  24661$, and  $\rVp = 19476$. I used BM25 implemented in Gensim~\mycite{rehurek_lrec} and I implemented BS myself. I choose $\lambda=0.5$, out of $0$, $0.5$, or $1$, after visual inspection. I used \wTv and \setX codebases released by the authors.\footnote{\url{https://bitbucket.org/yoavgo/word2vecf}, \url{github.com/mickeystroller/SetExpan}} For \nvge, I set $d{=}50$, $\sigma{=}1$. The generative network $\gennn$ does not have hidden layers and the inference network $\infnn$ has $1$ hidden layer of size $500$ with a $\tanh$ non-linearity and two output layers for the mean $\mu_x$ and log of the diagonal of the variance $\Sigma_x$. I use a diagonal $\Sigma_x$. \footnote{Training \nvge on $1$ Tesla K80 using the Adam optimizer with learning rate $5e^{{-}5}$ and minibatch size $64$ took $12$ hours.} For \wTv, I used $d=100$ to use the same number of parameters per entity as in \nvge. I trained with default hyperparameters for 100 iterations. I used \setX with the default hyperparameters as well except that I limited the number of maximum iterations to $3$ since I only needed top $4$ entities for my experiments.

\subsection{Experimental Design}
Prior work typically evaluates ESE on a small number of queries, constituting the most frequent entities, e.g., \myciteauthor{ghahramani2006bayesian} reported results for $10$ queries with highly cited authors and \myciteauthor{shen2017setexpan} used 20 test queries created of 2000 most frequent entities in Wikipedia. However, in automatic KGs, most entities are mentioned only a few times. For example, 60\% of the entities in TinkerBell are mentioned once. I am primarily interested in unbiased evaluation over such entities; therefore I stratified the evaluation queries into three types.

The 1st type contains entities mentioned in only $1$ sentence, the 2nd contains entities appearing in $2-10$ sentences, and the 3rd contains entities mentioned in $11-100$ sentences. I also stratified queries based on whether they had $3$, or $5$ entities. For each query type, I randomly generate 80 queries by first sampling 80 Wikipedia categories and then sampling entities from those categories that were also part of the TinkerBell KG. This results in $480$ queries. See \tabref{tab:qex} for examples.

For each query, I showed the names and first paragraphs from the Wikipedia abstracts of the query's entities, to help the AMT workers disambiguate entities unfamiliar to them. Then I showed the workers the top $4$ entities returned by each system. Each resultant entity was shown with up to 3 \textit{justification} sentences. Figure~\ref{fig:hit_system} illustrates the AMT interface.
\begin{figure}[htbp]
  \centering
  \includegraphics[width=\columnwidth,trim=0 10 0 0,clip]{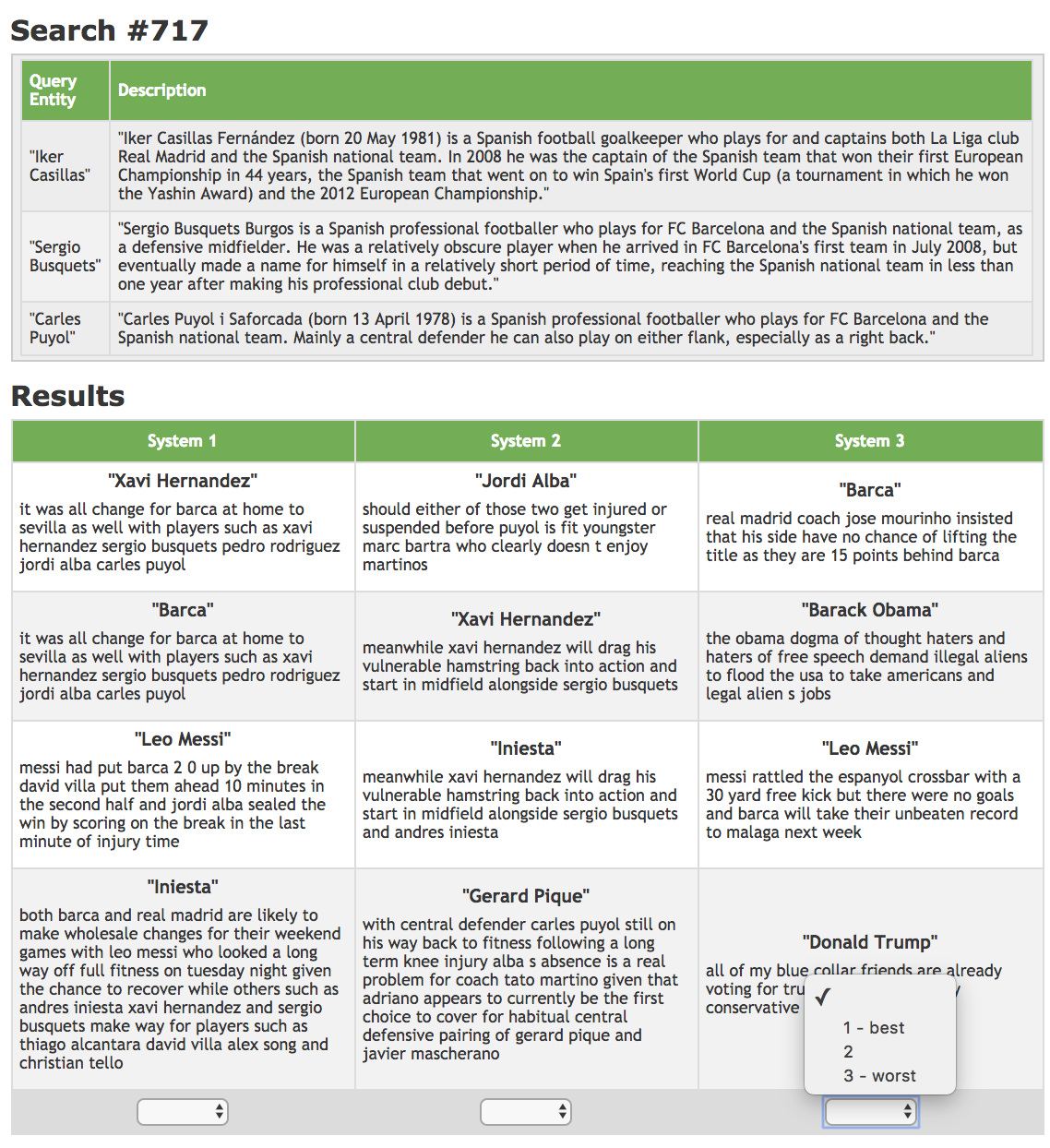}
  \caption[Example Mturk HIT for NVSE evaluation.]{Example of task shown to a crowd-source worker on Amazon Mechanical Turk.}
  \label{fig:hit_system}
\end{figure}

Since \setX and \wTv do not return justifications, I used \nvge to extract justifications for their results. I asked workers to rank the systems between $1$, the best system, to $3$, the worst; and I allowed for ties. The annotators found it difficult to compare results from 5 systems at a time, so I split my evaluation into two groups. Group 1 compared \nvge to BS and BM25, and group 2 compared \nvge to \setX and \wTv. I randomized the placement of the lists so that the workers could not figure out which system created which list.

\begin{table}[t]
  \centering
  \setlength{\tabcolsep}{4pt}
  \begin{tabular}{| p{5cm} | p{8cm} |}\hline
    \textbf{Category} & \textbf{Entities}\\[-1pt]\hline
    (1 Sent./Ent.) American Jazz Singers   & Paula West, Natalie Cole, Chaka Khan \\\hline
     (2-10 Sent.) Australian Major Golfers & Marc Leishman, David Graham, James Nitties\\\hline
     (11-100 Sent.) The Apprentice (U.S) Contestants& Maria, Rod Blagojevich, Dennis Rodman, Joan Rivers, Piers Morgan\\\hline
  \end{tabular}
  \caption[Examples of queries for NVSE evaluation.]{Examples of randomly created queries}
  \label{tab:qex}
\end{table}

\begin{table}[t]%[htbp]
  \setlength{\tabcolsep}{1.75pt}
  \centering
\begin{tabular}{|c@{\hskip 1pt}|r|ccc||ccc|}\hline
  {\small Ents$.$ In}  & {\small Sents. }  &          \multicolumn{3}{c||}{Group 1}      &          \multicolumn{3}{c|}{Group 2}       \\[-2pt]\cline{3-8}
  {\small Query} & {\small Per Ent.} & {\small \nvge}    & {\small BM25}    & {\small BS}  & {\small \nvge}    & {\small SetEx}   & {\small W2Vecf} \\\hline
            & 1          & 27       & \fb{38} & 15  & \fb{51}  & 14      & 15    \\[-1pt]
3           & 2-10       & \fb{29}  & 25      & 26  & \fb{49}  & 13      & 18    \\[-1pt]
            & 11-100     & \fb{35}  & 23      & 22  & \fb{44}  & 10      & 26    \\[-1pt]\hline
            & 1          & \fb{38}  & 25      & 17  & \fb{58}  & 19      & 3     \\[-1pt]
5           & 2-10       & \fb{40}  & 27      & 13  & \fb{53}  & 19      & 8     \\[-1pt]
            & 11-100     & 24       & \fb{33} & 24  & \fb{52}  & 11      & 17    \\[-1pt]\hline\hline
            & Total      & \fb{193} & 171     & 117 & \fb{307} & 86      & 87    \\[0pt]\hline
\end{tabular}
\caption[NVSE evaluation results (abbridged)]{The number of times a system was ranked $1^{st}$ over 80 queries compared to other systems in the same group. Ties were allowed so some rows may not sum to 80. Bold highlights the system with the most $1^{\text{st}}$ in its group. Extended results with second and third place rankings of the system are shown in \tabref{tab:old-sys-23}.}
\label{tab:old-sys}
\end{table}

\subsection{Results}
\tabref{tab:old-sys} %and \ref{tab:new-sys}
shows the number of times the annotators ranked each system as the best out of the $80$ queries. Over all queries, \nvge returned better results compared to the $4$ baselines systems. It performed best with 5 entities in the query where each entity was only mentioned up to $10$ times in the corpus. This shows that \nvge can discern better quality topics from multiple entities with sparse data. Extended results showing second and third place rankings of the systems are given in \tabref{tab:old-sys-23} which show that in cases that when NVSE does not rank first, it is typically chosen as the second-ranking system.
%% The largest gains between queries with $3$ or $5$ entities is when the entities appear in just $1$ sentence. One possible explanation is that it is easiest to determine $z$ when there are many entities in the query that appear only once in the text. This finding suggests that \nvge is robust to low data settings, i.e. when entities have little evidence in the text. %I observe that in one instance \nvge does not receive the most 1st place ratings, but does not do poorly there because it receives the most 2nd place ratings by a landslide. %and on average with

Table~\ref{tab:old-sys-23} shows the second and third place rankings of the systems and extends the results shown in \tabref{tab:old-sys}.
\begin{table}[htbp]%[htbp]
  \setlength{\tabcolsep}{1.75pt}
  \centering
    \begin{adjustwidth}{-1cm}{}
\begin{tabular}{|c@{\hskip 1pt}|r|ccc|ccc||ccc|ccc|}\hline
  {\small Ents$.$ In}  & {\small Sents. }  &          \multicolumn{3}{c|}{Group 1}      &          \multicolumn{3}{c||}{Group 2}   & \multicolumn{3}{c|}{Group 1}      &          \multicolumn{3}{c|}{Group 2}          \\[-2pt]\cline{3-14}
  {\small Query} & {\small Per Ent.} & {\small \nvge}    & {\small BM25}    & {\small BS}  & {\small \nvge}    & {\small SetEx}   & {\small W2Vecf} & {\small \nvge}    & {\small BM25}    & {\small BS}  & {\small \nvge}    & {\small SetEx}   & {\small W2Vecf}\\\cline{3-14}
  & 1      & 36 & 28 & 16 & 20 & 21 & 39 &17 & 14 & 49 & 9  & 45 & 26 \\[-1pt]
3 & 2-10   & 22 & 36 & 22 & 26 & 22 & 32 &29 & 19 & 32 & 5  & 45 & 30 \\[-1pt]
  & 11-100 & 24 & 26 & 30 & 23 & 22 & 34 &21 & 31 & 28 & 12 & 48 & 20 \\[-1pt]\hline
  & 1      & 28 & 37 & 15 & 20 & 47 & 13 &14 & 18 & 48 & 2  & 14 & 64 \\[-1pt]
5 & 2-10   & 22 & 27 & 31 & 21 & 50 & 9  &18 & 26 & 36 & 6  & 10 & 63 \\[-1pt]
  & 11-100 & 20 & 27 & 32 & 17 & 29 & 34 &36 & 20 & 24 & 11 & 40 & 29 \\[-1pt]\hline
\end{tabular}
\end{adjustwidth}
\caption[NVSE evaluation results (Longform)]{The number of times a system was ranked $2^{nd}$ (left subtable) and $3^{rd}$ (right subtable) over 80 queries. }
\label{tab:old-sys-23}
\end{table}

%I also note that
The IR method BM25 was the strongest baseline, outperforming BS and \setX, and even \nvge in two settings. I believe that this is because of the low-resource conditions of my evaluation where ad-hoc IR methods can have an advantage. Another reason why BM25 worked very well in my evaluation was the lack of auxiliary signals such as entity inter-relations and entity links and because all the entities were of type \textit{person}. This makes my task different from the entity list completion (ELC) task~\cite{balog2009overview} and a bit simpler for methods that focus heavily on lexical overlap. Another difference between the ESE task and the ELC task was that in the ELC task a descriptive prompt describing the query was also given to the users while evaluating the relevance of the returned results whereas no such prompt was given in the ESE task. I also found that sometimes BM25 was rated highly because it returned results that were highly relevant to a single query entity instead of being topically similar to all entities. For example, on the query associated with ``The Apprentice Contestants'' BM25's results solely focused on Dennis Rodman, but \nvge tried to infer a common topic amongst entities and returned generic celebrities which annotators did not prefer.

On entities with little data, \wTv and \setX  perform poorly. \wTv requires large amounts of data for learning useful representations~\mycite{AltszylerSS16} which explains why it performs poorly in my evaluation. The \setX algorithm directly uses context features extracted from the mentions of an entity and returns entities with the same context features. This approach can overfit with low data. Even though \setX uses an ensembling method to reduce the variance of the algorithm, I believe using context-features causes overfitting when an entity appears in only a few sentences. Lastly, I believe that BS suffers because its impoverished generative model has neither non-linearities nor low-dimensional topics for modeling correlations amongst tokens.

\begin{table*}[t!]%[htbp]
  \setlength{\tabcolsep}{2pt}
    \begin{adjustwidth}{-1.5cm}{}
\begin{tabular}{|cc|cc|cc|cc|cc|cc|}\hline
  \multicolumn{2}{|c|}{column $3$} & \multicolumn{2}{|c|}{column $14$} & \multicolumn{2}{|c|}{column $20$} & \multicolumn{2}{|c|}{column $33$} & \multicolumn{2}{|c|}{column $37$} \\ \hline
 merger & procurement& husband & iii & best & very & game & tackle & wild  & lighting\\
 industry & securities & sister & house & its & most & drill & fuzzy & holly & costumes\\
 premiers & AP-doc1 & \ffont{she} & labor & good & end & offensive & 21 & exhibit & fashion\\
 NYT-doc2 & analyst & \ffont{her} & king & some & do & coach & doc & martin's & nightclub\\
 protection & founders & daughter & church & only & such & artur & doc3 &  thriller & theatrical \\ \hline
 \multicolumn{2}{|c|}{\textit{business finance}} & \multicolumn{2}{|c|}{\textit{family royalty}} & \multicolumn{2}{|c|}{\textit{qualifier words}} & \multicolumn{2}{|c|}{\textit{football sports}} & \multicolumn{2}{|c|}{\textit{entertainment movie}} \\ \hline
\end{tabular}
  \end{adjustwidth}
\caption[Concepts corresponding to NVSE embedding components.]{ The first row contains top $10$ features most similar to $z_{j}$. The bottom row contains labels agreed upon by the authors; I loosely refer to the group where $j=20$ as ``qualifiers''. Underscored words signify that the feature came from $\cVp$.}
\label{tab:interpret_lat_concepts}
\end{table*}

\begin{table*}[htbp]
  %\linespread{0.9}\selectfont\centering
\begin{adjustwidth}{-1.5cm}{}
\begin{tabular}{|p{2.7cm}|p{2.67cm}|p{3.2cm}|p{3.2cm}|p{3.2cm}|}\hline
{ Abu Bakr Baghdadi (1)} & { Osama Bin Laden (1)} & { O.B. Laden (1.5)  A.B. Baghdadi (1)} & {  O.B. Laden (0.5)  A.B. Baghdadi (2)} & {  O.B. Laden (-0.2)  A.B. Baghdadi (1)} \\[2pt]\hline
{ qaida, iraq, abu, baghdadi, bakr, al, leader, attacks} & { bin, laden, osama, al, cia, pakistani, afridi, qaida} & { qaida, al, u, pakistani, cia, qaeda, government, killed} &  { qaida, al, leader, attacks, u, killed, officials, islamic} & { bakr, baghdadi, abu, iraq, al, sectarian, nuri, kurdish} \\\hline
\end{tabular}
\end{adjustwidth}
\caption[Example of Query Rationales output from NVSE]{The top row represents a query with weights in parentheses and the bottom row lists corresponding query rationales.} %Example of how query rationals change
\label{tab:query-rational-weights}
\end{table*}

\section{Analyzing Interpretability}
I now attempt to understand the similarity relations encoded in \nvge's internal concept representations to understand what it is learning. I also provide examples of how query rationales and query weights can help users fine-tune their queries.

\subsection{Understanding the concept space} To gain some insight into the distribution over concepts inferred by \nvge I determined the top $10$ words activated by individual dimension of $z$ by computing $\gennn(e_j)$ where $e_j$ is a one-hot vector in $\mathbb{R}^{50}$. % I let $k=10$ and I create a new vector $z_{j} = [0, \ldots, 0, z[j], 0, \ldots, 0]$. I determine the $10$ features that are closest to $z_{j}$ and manually determine a common theme from those features.
\tabref{tab:interpret_lat_concepts} shows the top $10$ words for selected components of $z$. I can easily recognize that dimensions $3,33$ and $37$ of $z$ represent finance, sports, and entertainment. Even though I did not constrain $z$ to be component-wise interpretable, this structure naturally emerged after training.
%Interpretability has two parts: how does the model understand the query, %why does the model return the results it did based upon how it understood the query. %Query rational addresses the former while results justifications address the latter. % To demonstrate its capabilities, I link to an anonymized video demonstration/figure. %For justification I will present a table with the best sentences for examples, %and these can also determined by hand.

\subsection{Weights \& Query Rationale}
\tabref{tab:query-rational-weights} depicts how the \textit{query rationale} returned by \nvge changes in response to entity weights. In the first column the query is \{Abu Bakr Baghdadi\} and the query rationale tells us that \nvge focuses on \textit{iraq}, \textit{baghdadi} etc. The second column shows a different query \{Osama Bin Laden\} and the query rationales changes accordingly to \textit{pakistani} and \textit{osama}. The third and fourth column show rationales when the weights on ``Laden'' and ``Baghdadi'' are varied. When more weight is put on ``Laden'' then the query rationales contain more features that are associated to him, and when more weight is put on ``Baghdadi'', then features such as ``islamic'' which is a token from his organization are returned. The last column shows an interesting configuration where a user is effectively asking for results that are similar to ``Baghdadi'' but dissimilar to ``Laden'' and the feature for \textit{kurdish} gets activated. Since the system returns results in under $100$ms, the user can fine-tune her query in real-time with the help of these query rationales.

I give one more example of the utility of negative weights: When $\cQ = \{\text{Brady}\}$, \nvge's rationale is [\textit{brady, game, patriots, left, knee, field, tackle}], indicating that \nvge associated the ``Brady'' entity with Tom Brady who is a member of the Patriots football team. When I added ``Wes Welker'' to $\cQ$ with a negative weight, the query rationale changed to [\textit{brady, game, left, tackle, knee, back, field}]. Since Wes is a Patriots receiver who received a negative weight in the query, \nvge deactivated the \textit{patriots} feature and activated the \textit{tackle} feature, opposite to a \textit{receiver}.
% the system ignores that aspect in the query.
%\item Example 1:
%\begin{itemize}
%\item "Brady" - brady, game, patriots, left, knee, field, tackle
%\item "Brady (1.0)", "Wes Welker" (-.5) - brady, game, left, tackle, knee, back, field
%\item "Brady (1.0)", "Wes Welker" (-.5), "Patriot" (-.5) - brady, game, tackle, knee, left, back, defensive
%(this last example the query rationale is all defensive things.
%\end{itemize}
%For example,
%\appref{app:weights_experiment} includes more examples and I provide a recorded demonstration at {\small \url{https://youtu.be/sGO_wvuPIzM}}.

%% \subsection{Intrinsic Quantitative Analysis of \nvge}
%% \nvge is a generative model and I can use the ELBO to upper bound its perplexity.
%% I randomly chose a subset $\cS \subset \cD$, of size $100$ and for each entity I estimated the ELBO using $100$ samples from ${q_\phi(z | x)}$. Through this procedure I upper bounded the perplexity of the model to be $64$ nats.
%% %% \begin{equation*}
%% %% \exp\left(- \frac{1}{100} \sum_{x \in \cS} \frac{\hat{l}_x}{\sum_{j=1}^\rV f_x[j]}\right),
%% %% %\text{Perplexity} \approx \exp\left(- \frac{1}{100} \sum_{x \in \cS} \frac{\hat{l}_x}{\sum_{j=1}^\rV f_x[j]}\right),
%% %% \end{equation*} where $\hat{l}_x$ estimates the ELBO for entity $x$.
%% The low perplexity of my model suggests that my model fits the data well.
%% Finally \figref{fig:sng} plots the singular values of the decoder matrix.
%% \begin{figure}[htbp]
%%   \centering
%%   \includegraphics[width=0.8\linewidth]{decoder_svd_really_cropped.pdf}
%%   \caption{Singular values of the decoder matrix in $\gennn$}
%%   \label{fig:sng}
%% \end{figure}

\section{Conclusion}
This chapter is based on the following journal publication:

\begin{itemize}
\item [] Rastogi, Pushpendre, Adam Poliak, Vince Lyzinski, and Benjamin Van Durme (2018). ``Neural variational entity set expansion for automatically populated knowledge graphs''. In: Information Retrieval Journal.
\end{itemize}

\textbf{The main ideas and scientific contributions are:}

\begin{itemize}
\item The first to learn Deep Representation of entities grounded in natural text for the task of Set Expansion.
\item The first to propose an efficient way of combining the posterior outputs of a VAE's inference network from multiple observations by summing the natural parameters.
\item One of the first methods for extracting query rationales for entity recommendations given a set expansion query.
\end{itemize}

I introduced \nvge as a step towards making advances in entity set expansion useful to real-world settings. \nvge is a novel unsupervised approach based on the VAE framework that discovers related entities from noisy knowledge graphs. \nvge ranks entities in a KG using an efficient and fast scoring function \eqref{eq:score_nver}, ranking 80K entities on a commodity laptop in $100$ milliseconds.

My experiments demonstrated that \nvge could be applied in real-world settings where automatically generated KGs are noisy. \nvge outperformed state of the art \er systems and other strong baselines on a real-world KG. I also presented a flexible approach to interpret \er methods and justify their recommendations.

In future work, I will extend my work by improving my model using more powerful auto-encoders such as the Ladder VAE~\mycite{sonderby2016ladder}; secondly I will experiment with the use of side information such as links from a KG through the use of Graph Convolutional Networks~\mycite{kipf2016semi}. Third, I will like to quantitatively measure how query rationales and justifications help users in accomplishing their search task. Finally, I will incorporate confidence scores from the KG in my model. Although there may be future work to improve my ESE method, I believe that \nvge serves as a significant step towards utilizing KGs and semantics for information retrieval and understanding in real-world settings.

%auto-ignore
\chapter{Knowledge Base Embeddings under Logical Constraints}
\label{cha:kbe}
Knowledge bases are large repositories of information about the entities in the real world and the relations between them. They can be thought of as large graphs marking the relations between real-world entities as the edges between its vertices. In the previous chapters, I presented algorithms for learning representation of words and entities from unlabeled, unstructured textual corpora. In this chapter, I shift focus from embedding the components of unstructured text to representing the structured information present in knowledge bases. To that end, I follow a two-pronged approach.\footnote{Previous versions of this work were published in \cite{rastogi2017elkb} and \cite{rastogi2017predicting}}

First, I scrutinize an existing method for embedding knowledge bases and demonstrate its shortcomings in accurately representing asymmetric-transitive relations both theoretically and empirically. I study the effect of the transitivity of a relation on the performance of the RESCAL algorithm by~\cite{nickel2011three}, and I demonstrate via a theorem and empirical results that RESCAL is inappropriate for representing transitive-asymmetric relations in a KB.

Second, I present new objectives and training algorithms for encouraging \textit{logical consistency} in the predictions by a knowledge base completion algorithm by incorporating logical constraints into the learning of entity and relation representations during the training of a Knowledge Base Completion (KBC) system. Enforcing logical consistency in the predictions of a KBC system guarantees that the predictions comply with logical rules such as symmetry, implication and generalized transitivity. My method encodes logical rules about entities and relations as convex constraints in the embedding space to enforce the condition that the score of a logically entailed fact must never be less than the minimum score of an antecedent fact. Such constraints provide a weak guarantee that the predictions made by a KBC model will match the output of a logical knowledge base for many types of logical inferences.

%% TODO
%% 2. Go slowly about than just jumping into, a discussion of why this is
%%    the right way to do things, the whole idea of representing words
%%    and entities goes back a long time, use logical space, why is the
%%    logical KB important. Explain how the BERT, ELMO type of methods
%%    don't
%%    New contextual representations method
%%    Will my method work or not work, you wanna talk about how the
%%    method is relevant an year from now.
%% 5. KB embeddings into the logical space - there may be something
%%    ill-suited. We are back to very limited
%%    Put a little bit of the smarts into the embedings, but your model
%%    should
%%    What are the advatanges
%%    Talk about what you are giving up. you can't expect ppl to
%%    What are disdvantages boaut what you are doing, in 5 years what
%%    will be, better models for learning representations.
%%    Here we made a decision, so other ppl who care about the same thing
%%    that we care about.

\section{Introduction} \label{sec:introduction} Large-scale and highly accurate knowledge bases~(KB) such as Freebase and YAGO2 ~\cite{hoffart2013yago2} have been recognized as essential for high performance on natural language processing tasks such as Relation extraction~\cite{dalton2014entity}, Question Answering~\cite{yao2014information}, and Entity Recognition in informal domains~\cite{ritter2011named}. Because of this importance of large scale KBs and since the recall of even Freebase, one of the largest open source KBs, is low\footnote{It was reported by~\citeauthor{dong2014knowledge} in October 2013, that 71\% of people in Freebase had no known place of birth and that 75\% had no known nationality.} a large number of researchers have presented models for knowledge base completion (KBC). Knowledge Base Completion (KBC), or link prediction, is the task of inferring missing edges in an existing knowledge graph.

A popular strategy for KBC is to \textit{embed} the entities and relations in low dimensional continuous vector spaces and to then use the learned \textit{embeddings} for link prediction. In other words, continuous real-valued vectors and matrices are automatically learned that can represent the entities and edges in a knowledge base, and at the time of inference, these real-valued representations are used to predict whether a particular edge exists between two entities. This general strategy can be implemented in many different ways, and I refer the reader to the survey by~\newcite{nickel2016review} for more details. Even though the strategy of embedding the elements of a graph is popular for knowledge base completion, theoretical studies of such methods are scarce. More specifically, although many methods have been evaluated empirically on select datasets for KBC, much less attention has been paid to understanding the relationship between the logical properties encoded by a given KB and the KBC method being evaluated.

In this chapter, I demonstrate \textit{theoretically}, and \textit{experimentally}, the adverse effect that asymmetric, transitive relations can have on a KBC method that relies on a single vector embedding of a KB entity. Transitive-asymmetric relations such as the \texttt{type of} relation in Freebase~\cite{bollacker2008freebase} and, the \textit{hyponym} relation in WordNet~\cite{miller1995wordnet} are ubiquitous in KBs and therefore very important~\cite{guha2015towards}. For my theoretical result, I analyze a widely cited KBC algorithm called RESCAL~\cite{nickel2011three,toutanova2015representing} and I prove that on large KBs that contain a large proportion of asymmetric, transitive relations, methods such as RESCAL will wrongly predict the existence of edges that are the reverse of edges in the training data. I also present a way to mitigate this problem, by using role sensitive embeddings for entities and I empirically verify that my proposed solution improves performance. Through my experiments, I also discover a drawback in the prevalent evaluation methodology, of randomly sampling unseen edges, for testing KBC models and show that random sampling can overlook errors on special types of edges.

A number of state of the art methods for Knowledge Base Completion (KBC) utilize a representation learning framework and learn distributed vector representations, i.e., \textit{embeddings}, of the entities and relations in a Knowledge Base (KB). Although such models make correct predictions on a sizable portion of the data, they cannot guarantee to follow logical rules and to make inferences consistent with those rules. For example, there is no way to guarantee in existing KBC methods that if an embeddings based KB predicts the fact that \emph{Alice murdered Bob} \twrs{(Murdered,(Alice, Bob))} then it will also predict that {\emph{Alice Killed Bob}}, even though it is very simple to enforce this in a traditional logical inference system by specifying the rule that \texttt{Murdered} implies \texttt{Killed}. Consider another example, if a KB knows that \twrs{Alice Is A Woman} and that \twrs{Bob Is Son Of Alice}, but the KBC method cannot guarantee to infer that \twrs{Alice Is Mother Of Bob} then such a method will have limited use in real applications.

In the second half of this chapter, I present a novel method for directly encoding logical rules via convex constraints on the embeddings. Such methods for directly ``shaping'' the feasible subspaces of embeddings based on logical properties of relations have not been deeply explored before, and I will show through my experiments that such a method can improve the performance of an existing KBC system. I validate my method via experiments on a knowledge graph derived from WordNet.

\section{Related Work}
\label{sec:related-work}
Due to the large body of work that has been done for the task of KBC, it is not possible to cover all of the related work on KBC in this section. Instead, I refer the reader to the survey~\cite{nickel2016review} for an overview of the empirical work that has been done in the area of KBC and link prediction. Similarly, The problem of enforcing consistency between the predictions made by a machine learning system and a first-order logic system, which is what my work attempts to do, has a large history of research but I will only be able to review recent work on learning representations of entities and relations of a knowledge graph and refer the reader to reviews of neural-symbolic systems~\cite{garcez2002neural-symbolic,hammer2007perspectives} for more references.

\subsection{Methods for Knowledge Base Completion}
\label{sec:meth-knowl-base}

Since I focus on the analysis of RESCAL, my work is most closely related to the paper~\cite{nickel2014reducing}. This chapter proves an important theorem that shows that the dimensionality required by the RESCAL model\footnote{Actually their theorem provides a lower bound for a more general model than RESCAL which automatically applies to RESCAL.} for exactly representing a weighted adjacency matrix of a knowledge graph must be greater than the number of strongly connected components in the graph. In my setting where I consider data sets that contain only transitive-asymmetric relations, the number of strongly connected components in the graph equal the number of vertices in the graph. Therefore their theorem proves that the dimensionality required for exactly representing a dataset such as WordNet using an algorithm such as RESCAL must be greater than the number of entities in the knowledge graph. In contrast to this result, my analysis gives an explicit example of a type of query for which the RESCAL algorithm will make wrong inferences.

My analysis trivially extends to a few other factorization based algorithms, e.g. the Holographic embedding algorithm by~\cite{nickel2016holographic}. The holographic embedding method can be rewritten as a constrained form of RESCAL with a ``holographically constrained'' matrix $M$. Figure~\ref{fig:holo} shows an example of a $3\times 3$ holographically constrained matrix with the constraint that elements with the same color must hold the same value. Since such a matrix is asymmetric by construction, my theorem proves that there will exist vectors $a,b$, and $c$ for which $M$ will violate transitivity.
\begin{figure}[htbp]
  \centering
  \includegraphics[width=.4\linewidth]{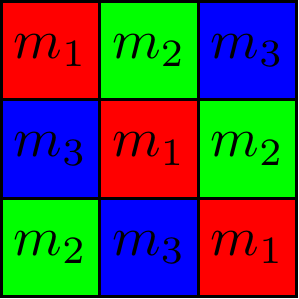}
  \caption[Illustration of Holographic Constraints.]{An illustration of a ``holographically constrained'' matrix.}
  \label{fig:holo}
\end{figure}

Recently~\newcite{bouchard2015approximate} argued that the phenomenon of transitivity of relations between vertices in a knowledge graph could be modeled with high accuracy if the knowledge graph is modeled as a thresholded version of a latent low-rank real matrix, and the vertex embeddings are learned as a low-rank factorization of that latent matrix. Based on this argument they claimed that factorizing a knowledge graph with a squared loss was less appropriate in comparison to factorizing it with a hinge loss or logistic loss. In this work, I provide an argument based on the symmetry of transitive matrices to show that the method of RESCAL which minimizes the squared reconstruction error must fail to capture phenomenon like transitivity in large knowledge bases. In this way, my results complement the work by~\citeauthor{bouchard2015approximate}.

\subsubsection{Logically Constrained Representations for KBC}
\label{sec:embedd-logic-constr}
\newcite{grefenstette2013towards} presented a novel model for simulating propositional logic with the help of tensors; however, their model relied on high-dimensional boolean embeddings of the entities and relations, and it only guaranteed adherence to the $\implication$  rule out of the ones presented in this chapter. \newcite{rocktaschel2014low,rocktaschel2015injecting} generalized Grefenstette's work learning embeddings of entities and relations that were real-valued and low dimensional and their learning mechanism could accommodate arbitrary first-order logic formulae into the parameter learning objective by propositionalizing the formulae. Their method has two drawbacks in comparison to my proposal --- 1. The process of propositionalization can be very expensive, especially for rules like $\proptrans$ and $\typeimpl$ that quantify over tuples of entities, and 2. Their method of \textit{differentiation through logic} does not guarantee that the learned embeddings will always be able to predict unseen relations that are logically entailed given the rules and the training data.

\newcite{bowman2014learning,bowman2015recursive} presented a neural network-based method for predicting the existence of natural logic relations between two entities. Their approach too had the drawback that it could not guarantee the inference of logically entailed facts.

\newcite{wang2015knowledge} presented an interesting approach for ``batch-mode'' knowledge base completion. They proposed to perform KBC in two steps -- First, they learned a distributional model of the entities and relations in a KB to predict the likelihood of unobserved facts. In the second step, they optimized a global objective with logical constraints using an ILP solver. Their approach is very different from ours since I present an online method for performing knowledge base completion and I directly translate the logical rules into constraints on the parameters instead of relying on a black box ILP solver.

\newcite{guo2015semantically} presented a method based on LLE~\cite{roweis2000nonlinear} for incorporating side information in the form of semantic categories of entities, but their method is not capable of incorporating the range of logical rules that I can. \newcite{demeester2016lifted} and \newcite{vendrov2016order-embeddings} proposed an approach to constrain the learnt embeddings in a way that is identical to the method prescribed by us in Subsection~\ref{ssec:constraints}. Our work generalizes their approach in two ways --- Firstly, I generalize their proposed constraints by using the language of convex geometry, and secondly, I propose constraints for many more logical rules and score functions than either of the previous two papers.

\newcite{hu2016harnessing} presented an adversarial setup with a \textit{teacher} neural network co-operating with a \textit{student} neural network to regularize the predictions of the student network to follow logical rules; however, their method amounts to propositionalization of the logical rules. Their method is more general than ours since it can be used to train neural networks however again it lacks guarantees during inference. \newcite{wang2016learning} presented a novel method of factorizing the adjacency matrix of a proof graph of a probabilistic logic language to learn embeddings of first-order logic formulas. My method is conceptually simpler than theirs and requires fewer training stages. Finally, \newcite{guo2016jointly} proposed an alternative method for embedding rules and entities based on t-norm fuzzy logics which was very similar to \newcite{rocktaschel2015injecting}'s approach.

\section{Theoretical Analysis of RESCAL}
\label{sec:main-result}
\paratextbf{Notation:} A KB contains \textit{(subject, relation, object)} triples. Each triple encodes the fact that a \textit{subject} entity is related to an \textit{object} through a particular \textit{relation}.  Let $\mathcal{V}$ and $\mathcal{R}$ denote the set of entities and relationships. {I use $\mathcal{V}$ to denote entities to evoke the notion that an entity corresponds to a vertex in the knowledge graph.} I assume that $\mathcal{R}$ includes a type for the \textit{null relation} or \textit{no relation}. Let $\mathrm{V} = |\mathcal{V}|$ and $\mathrm{R}=|\mathcal{R}|$ denote the number of entities and relations. I use $v$ and $r$ to denote a generic entity and relation respectively. The shorthand $[n]$ denotes $\{x | 1 \le x \le n, x \in \mathbb{N}\}$. Let $\mathcal{E}$ denote the entire collection of facts and let $e$ denote a generic element of $\mathcal{E}$. Each instance of $e$ is an edge in the knowledge graph. I refer to the subject, object and relation of $e$ as $e^{sub}, e^{obj} \in \mathcal{V}$ and $e^{rel} \in \mathcal{R}$ respectively. $\mathrm{E} = |\mathcal{E}|$ is the number of known triples.

\paratextbf{RESCAL:} The RESCAL model associates each entity $v$ with the vector $a_v \in \mathbb{R}^d$ and it represents the relation $r$ through the matrix $M_r \in \mathbb{R}^{d \times d}$. Let $v$ and $v'$ denote two entities whose relationship is unknown, and let $s(v, r, v') = a_v^{{T}} M_r a_{v'}$, then the RESCAL model predicts the relation between $v$ and $v'$ to be: $ \hat{r} = \operatorname*{argmax}_{r \in \mathcal{R}} s(v, r, v')$. Note that in general if the matrix $M_r$ is asymmetric then the score function $s$ would also be asymmetric, \ie $s(v, r, v') \ne s(v', r, v)$. Let $\Theta = \{a_v | v \in \mathcal{V}\} \union \{M_r | r \in \mathcal{R}\}$.
% Therefore even though the same embedding is used to represent an entity
% regardless of whether it is the first or the second entity in a relation,
% the RESCAL model could still handle asymmetry of relations
% if the matrix $M_r$ was asymmetric.

\paratextbf{Transitive Relations and RESCAL:} In addition to relational information about the binary connections between entities, many KBs contain information about the relations themselves. For example, consider the toy knowledge base depicted in \figref{fig:toykb}. Based on the information that \texttt{Fluffy} \textit{is-a} \texttt{Dog} and that a \texttt{Dog} \textit{is-a} \texttt{Animal} and that \textit{{is-a}} is a transitive relations I can infer missing relations such as \texttt{Fluffy} \textit{is-a} \texttt{Animal}.

Let us now analyze what happens when I encode a transitive, asymmetric relation. Consider the situation where the set $\mathcal{R}$ only contains two relations $\{r_0, r_1\}$. $r_1$ denotes the presence of the \textit{is-a} relation and $r_0$ denotes the absence of that relation. The embedding based model can only follow the chain of transitive relations and infer missing edges using existing information in the graph if for all triples of vertices $v, v', v''$ in $\mathcal{V}$ for which I have observed \textit{($v$, is-a, $v'$)} and \textit{($v'$, is-a, $v''$)} the following holds true:

\begin{align*}
s(v, r_1, v') {>} s(v, r_0, v') \texttt{ and } s(v', r_1, v'') {>} s(v', r_0, v'') & \implies s(v, r_1, v'') {>} s(v, r_0, v'')\\
\text{I.e. } a_v^{{T}} (M_{r_1} - M_{r_0}) a_{v'} > 0 \texttt{ and } a_{v'}^{{T}} (M_{r_1} - M_{r_0}) a_{v''} > 0 &\implies a_{v}^{{T}} (M_{r_1} - M_{r_0}) a_{v''} > 0 \numberthis \label{eq:mtrans}
\end{align*}

I now define a \textit{transitive matrix} and state a theorem that I prove in \secref{sec:proof}.
\begin{definition} A matrix $M \in \mathbb{R}^{d \times d}$ is transitive if $a^{{T}} M b > 0$ and $b^{{T}} M c > 0$ implies $a^{{T}} M c > 0$. \end{definition}
\begin{theorem}\label{thmMain} Every transitive matrix is symmetric. \end{theorem}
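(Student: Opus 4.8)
The plan is to prove the contrapositive-flavored statement directly: assume $M$ is transitive and show that for all vectors $a,b$ we must have $a^{{T}} M b = a^{{T}} M^{{T}} b$, i.e. $a^{{T}}(M - M^{{T}})b = 0$. Write $S = M - M^{{T}}$ for the skew-symmetric part, so the goal becomes $S = 0$. The key observation is that transitivity gives a contrapositive of the form: if $a^{{T}} M c \le 0$, then we cannot simultaneously have $a^{{T}} M b > 0$ and $b^{{T}} M c > 0$. I plan to exploit this by choosing $c = a$ and examining what the relation says about the sign pattern of $a^{{T}} M b$ versus $b^{{T}} M a$.

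First I would handle the diagonal-type behavior. Setting $b = c = a$ in the definition: $a^{{T}} M a > 0$ and $a^{{T}} M a > 0$ implies $a^{{T}} M a > 0$, which is vacuous. More useful: I want to show $a^{{T}} M a \ge 0$ for all $a$ is not quite what I need; instead I focus on pairs. Suppose for contradiction that $S \ne 0$, so there exist $a, b$ with $a^{{T}} M b \ne b^{{T}} M a$; without loss of generality $a^{{T}} M b > b^{{T}} M a$. The idea is to perturb: look at vectors of the form $c = a + t b$ or similar, and track the quadratic/bilinear forms $a^{{T}} M c$ and $c^{{T}} M a$ as $t$ varies, to produce a triple $(a, b', a)$ (with the third vector equal to the first) where $a^{{T}} M b' > 0$ and $b'^{{T}} M a > 0$ but $a^{{T}} M a \le 0$ — contradicting transitivity applied with the first and third arguments both equal to $a$. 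So a crucial sub-step is: transitivity forces $a^{{T}} M a > 0$ whenever there is \emph{any} $b$ with $a^{{T}} M b > 0$ and $b^{{T}} M a > 0$; equivalently, if $a^{{T}} M a \le 0$ then no such $b$ exists, meaning for every $b$, $\min(a^{{T}} M b,\ b^{{T}} M a) \le 0$.

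The main engine of the argument will be a careful case analysis on the sign of $a^{{T}} M a$ for a fixed vector $a$, combined with continuity of the bilinear form. If $a^{{T}} M a > 0$, then by continuity $a^{{T}} M b > 0$ and $b^{{T}} M a > 0$ for all $b$ in a small ball around $a$; transitivity with third argument $a$ then already holds, but I would instead push $b$ around within a region and use transitivity with \emph{varying} middle vector to chain constraints, aiming to show $a^{{T}} M b = b^{{T}} M a$ on that ball, hence (by the ball spanning $\mathbb{R}^d$ and bilinearity) everywhere, giving $S=0$. If instead $a^{{T}} M a \le 0$ for some $a$, I would derive a contradiction by scaling: replacing $a$ with $\lambda a$ doesn't change signs, but I can look at $a' = a + \epsilon e$ for a well-chosen direction $e$ to make $a'^{{T}} M a' > 0$ (possible unless $M$ is "very degenerate," a case to treat separately using that the null relation must be representable), reducing to the previous case.

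The hard part will be making the continuity/perturbation argument fully rigorous without hidden assumptions — in particular, ruling out degenerate cases where $a^{{T}} M a \le 0$ on a large set, and cleanly extracting the equality $a^{{T}} M b = b^{{T}} M a$ rather than just a one-sided inequality, since transitivity is inherently a one-sided (strict-inequality) hypothesis. I expect the cleanest route is: (i) show $a^{{T}} M a > 0$ for all $a \ne 0$ is forced (else build the bad triple $(a,b,a)$), which gives that the symmetric part of $M$ is positive definite; (ii) then fix $a,b$, consider $c_t = a + t b$, use that $a^{{T}} M c_t$ and $c_t^{{T}} M a$ are affine in $t$ and both positive near $t=0$, and apply transitivity to the triples $(a, c_t, a)$ and $(c_t, a, c_t)$ as $t$ ranges, forcing the two affine functions of $t$ to coincide, which yields $a^{{T}} M b = b^{{T}} M a$. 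Step (ii) is where I anticipate the real work, and I would lean on the positive-definiteness from (i) to guarantee the relevant forms stay positive on the needed range of $t$.
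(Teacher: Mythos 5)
Your plan has two genuine gaps, and the second one is fatal to the approach as described. First, step (i) claims more than is true: transitivity does \emph{not} force $a^{T}Ma>0$ for all $a\neq 0$. The zero matrix and $M=\mathrm{diag}(1,0)$ are transitive and symmetric but only positive \emph{semi}definite, so the most that can be extracted is $a^{T}Ma\ge 0$ (which is what the paper's Lemma proves). Moreover your route to it --- ``else build the bad triple $(a,b,a)$'' --- cannot be carried out: when $a^{T}Ma\le 0$ there need not exist any $b$ with $a^{T}Mb>0$ and $b^{T}Ma>0$. For instance, for $M=\mathrm{diag}(1,0)$ and $a=e_2$ both forms vanish identically in $b$, and for any skew-symmetric $M$ one has $b^{T}Ma=-a^{T}Mb$, so the two hypotheses are never simultaneously satisfiable. (Your weaker observation, that $a^{T}Ma>0$ follows \emph{if} such a $b$ exists, is correct but does not give definiteness.)

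Second, and structurally: every transitivity instance your plan invokes has equal first and third arguments --- $(a,b,a)$, $(a,c_t,a)$, $(c_t,a,c_t)$. All such instances hold vacuously for every skew-symmetric matrix (the two hypotheses are negatives of each other), so no argument built only from them can distinguish a nonzero skew-symmetric matrix from a symmetric one, and hence cannot prove the theorem. Even granting positive definiteness of the symmetric part, the step (ii) constraints do not force $a^{T}Mb=b^{T}Ma$: take $a^{T}Ma=b^{T}Mb=1$, $a^{T}Mb=0.1$, $b^{T}Ma=-0.1$; then $c_t^{T}Mc_t=1+t^{2}>0$ for all $t$, so the implication coming from $(c_t,a,c_t)$ is satisfied while the form is asymmetric --- detecting the non-transitivity of such data requires triples with three genuinely distinct directions. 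That is precisely what the paper's proof supplies: it first applies transitivity to the triple $c=x$, $b=Mx$, $a=M^{2}x$, for which the two hypotheses become squared norms, yielding $b^{T}Mb\ge 0$ (the PSD lemma); it then rules out strictly opposite signs of $x^{T}My$ and $x^{T}M^{T}y$ by applying transitivity to the triple $(x,y,-x)$ --- note the sign-flipped third slot, exactly the kind of non-repeated endpoint your plan never uses; and it finishes with a proportionality lemma (if positivity of one nonzero bilinear form implies positivity of another, the matrices are positive multiples), giving $M=\lambda M^{T}$ and $\lambda=1$. To repair your proposal you would need to replace the repeated-endpoint triples with instances of this kind.
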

If I enforce the constraint in Equation~\ref{eq:mtrans} to hold for all possible vectors and not just a finite number of vectors then $M_{r_1} - M_{r_0}$ is a transitive matrix. By \thref{thmMain}, $M_{r_1} - M_{r_0}$ must be symmetric. This further implies that if $s(v, r_1, v') > s(v, r_0, v')$ then $s(v', r_1, v) > s(v', r_0, v)$. In terms of the toy KB shown in \figref{fig:toykb}; if the RESCAL model predicts that \texttt{Fluffy} \textit{is-a} \texttt{Animal} then it will also predict that \texttt{Animal} \textit{is-a} \texttt{Fluffy}.

\paratextbf{Augmenting RESCAL to Encode Transitive Relations:} The analysis above points to a simple way for improving RESCAL's performance on asymmetric, transitive relations. The reason that the original method fails to satisfactorily encode transitive asymmetric relations is because if the score $s(v, r_1, v')$ is high then $s(v', r_1, v)$ will also be high. I can avoid this situation by using two different embeddings for all the entities and compute the score of a relation through those role specific embeddings; i.e. I can use the embeddings $a_v^1, a_v^2$ to represent vertex $v$ and let $s(v, r_1, v') = a_v^1 M_{r_1} a_{v'}^2$ and $s(v', r_1, v) = a_{v'}^1 M_{r_1} a_{v}^2$. This idea of using role-specific embeddings has been known for a long time starting from~\cite{Tucker1966}. \footnote{Recently \cite{yoon2016translation} used this idea of using role-specific embeddings to preserve the properties of symmetry and transitivity in \textit{translation based} knowledge base embeddings.} The specific method that I have just explained is generally known to KBC researchers as the Tucker2 decomposition~\cite{sameertowards2015}. To encode more than one relations, only the matrix $M_r$ needs to change, but the entity embeddings can be shared across all relations.

\begin{figure}[htbp]
\centering
\includegraphics[]{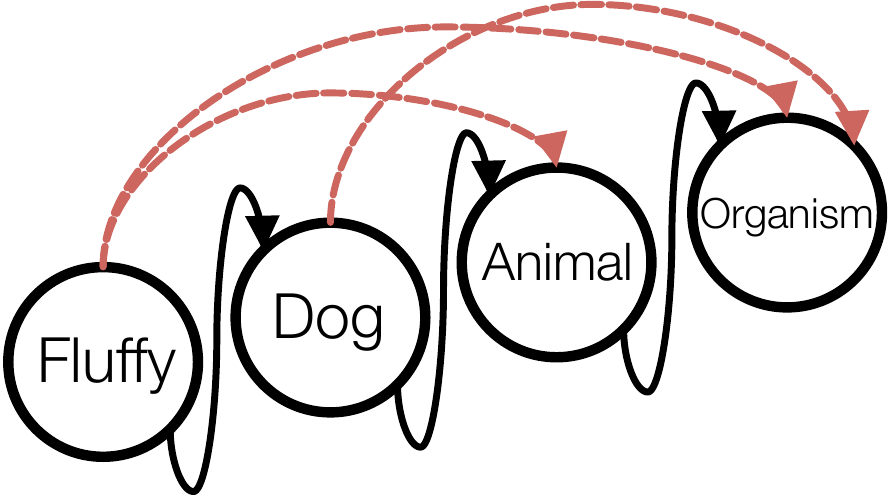}
\caption[A toy KB with one transitive relation.]{  A toy knowledge base containing only \textit{is-a} relations. The dashed edges indicate unobserved relations that can be recovered using the observed edges and the fact that \textit{is-a} is a transitive relation.}
\label{fig:toykb}
\end{figure}

\subsection{Proof of Theorem~\ref{thmMain}}
\label{sec:proof}
I now present my novel proof of \thref{thmMain} beginning with a lemma. \footnote{\thref{thmMain} was first proven by \cite{212808}(unpublished). My proof is more elementary and direct.} \begin{lemma}\label{lem2}   Every transitive matrix is PSD. \end{lemma}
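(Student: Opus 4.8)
The plan is to prove the statement by contradiction: assume $M$ is transitive but \emph{not} positive semidefinite, and manufacture vectors that violate transitivity. Since $M$ is not PSD there is a vector $v$ (necessarily nonzero) with $v^\top M v < 0$. The governing idea is to find a single ``intermediate'' vector $b$ that is $M$-related to $v$ from both sides, that is, $v^\top M b > 0$ and $b^\top M v > 0$; then transitivity applied with $a = c = v$ forces $v^\top M v > 0$, contradicting $v^\top M v < 0$. So the entire argument reduces to producing such a $b$.

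To produce $b$, set $n_1 = M^\top v$ and $n_2 = M v$ and use the elementary identities $\langle M^\top v, b\rangle = v^\top M b$ and $\langle M v, b\rangle = b^\top M v$, so the two requirements on $b$ become $\langle n_1, b\rangle > 0$ and $\langle n_2, b\rangle > 0$. The intersection of these two open half-spaces through the origin is nonempty precisely when $n_1, n_2$ are nonzero and $n_2$ is not a negative scalar multiple of $n_1$; in that case an explicit witness is $b = n_1/\|n_1\| + n_2/\|n_2\|$, for which $\langle n_1, b\rangle = \|n_1\|\bigl(1 + \langle \hat n_1, \hat n_2\rangle\bigr) > 0$ and symmetrically $\langle n_2, b\rangle > 0$, the strict inequality coming from strict Cauchy--Schwarz ($\langle \hat n_1, \hat n_2\rangle > -1$ whenever $n_2 \neq -\alpha n_1$). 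What makes this work is that the two bad configurations are exactly the ones the hypothesis already excludes: if $n_1 = 0$ or $n_2 = 0$ then $v^\top M v = \langle n_1, v\rangle = 0$, contradicting $v^\top M v < 0$; and if $n_2 = -\alpha n_1$ with $\alpha > 0$ then $v^\top M v = \langle n_2, v\rangle = -\alpha \langle n_1, v\rangle = -\alpha\, v^\top M v$, forcing $v^\top M v = 0$, again a contradiction. Hence $b$ exists, transitivity yields $v^\top M v > 0$, and the contradiction is complete, so $M$ is PSD.

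The only genuinely delicate step is the half-space intersection: one must check that the degenerate alignments of $n_1$ and $n_2$ that would make the two open half-spaces disjoint coincide with the cases already ruled out by $v^\top M v < 0$, so that no additional case leaks through. I would favour the closed-form choice $b = \hat n_1 + \hat n_2$ over invoking a general separating-hyperplane theorem, since it keeps the lemma self-contained and makes the role of strict Cauchy--Schwarz transparent; everything else is routine manipulation of the bilinear form $x \mapsto x^\top M y$. This lemma is the first step toward Theorem~\ref{thmMain}: it remains to deduce symmetry from positive semidefiniteness together with transitivity.
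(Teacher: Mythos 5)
Your proof is correct, but it takes a genuinely different route from the paper's. The paper argues directly: for an arbitrary $x$ it builds the chain $c := x$, $b := Mx$, $a := Mb$, observes that the two antecedent forms are automatically nonnegative ($a^{T}Mb = \|Mb\|^{2}$, $b^{T}Mc = \|b\|^{2}$), and invokes transitivity in the nondegenerate case ($b \ne 0$ and $Mb \ne 0$) to get $b^{T}Mb = a^{T}Mc > 0$, the degenerate cases giving $b^{T}Mb = 0$; no analysis of the relative geometry of the two normals is needed because the intermediate vectors are manufactured from $M$ itself. You instead argue by contradiction: assuming $v^{T}Mv < 0$, you construct an intermediate vector $b$ in the intersection of the open half-spaces determined by $n_{1} = M^{T}v$ and $n_{2} = Mv$ (explicit witness $\hat n_{1} + \hat n_{2}$, strict Cauchy--Schwarz), and apply transitivity with $a = c = v$; your bookkeeping of the degenerate alignments ($n_{1} = 0$, $n_{2} = 0$, $n_{2} = -\alpha n_{1}$) is exactly right and is indeed the only delicate point. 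The trade-off is worth noting: the paper's construction is shorter and case-free, but as written it certifies $u^{T}Mu \ge 0$ only at points $u = Mx$ in the range of $M$, leaving the passage to nonnegativity at an arbitrary vector (which is what \lemref{lem3} actually invokes) implicit; your contradiction argument establishes $v^{T}Mv \ge 0$ for every $v$ directly, at the cost of the half-space/Cauchy--Schwarz analysis.
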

\begin{proof}
  Consider the triplet of vectors $c := x, b := Mc, a := Mb$. Then   $a^T(Mb) = ||Mb||^2 \ge 0$ and $b^T(Mc) = ||b||^2 \ge 0$ and $a^TMc = b^TMb$.   Three cases are possible,   either $b = 0$, or $Mb = 0$, or both $Mb \ne 0$ and $b   \ne 0$. In the third case transitivity applies and I conclude that $b^TMb >   0$. In all cases $b^TMb \ge 0$ which implies $M$ is PSD.
\end{proof}
The next lemma proves that if $M$ is transitive then $x^TMy$ and $x^TM^Ty$ must have the same sign.
\begin{lemma}\label{lem3}   If $\exists x, y\ x^TMy > 0 $ but $x^TM^Ty < 0$ then $M$ is not transitive. \end{lemma}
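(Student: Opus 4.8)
The plan is to argue by contradiction: assume $M$ is transitive and that vectors $x,y$ exist with $x^TMy > 0$ and $x^TM^Ty < 0$ (note $x^TM^Ty$, being a scalar, equals $y^TMx$, so this says $y^TMx < 0$), and then produce an explicit triple of vectors witnessing a failure of transitivity. First I would record some easy preliminaries: $Mx\ne 0$ and $M^Ty\ne 0$, since otherwise one of the two scalars would vanish; hence also $M\ne 0$. By Lemma~\ref{lem2}, $M$ is PSD. If $M$ were symmetric we would have $x^TMy = y^TMx$, contradicting the assumed opposite signs; and a PSD matrix of rank $\le 1$ is necessarily symmetric (a PSD rank-one matrix has the form $\beta vv^T$ with $\beta>0$). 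So from here on I may assume $\operatorname{rank}(M)\ge 2$.

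Next I would extract the structural consequence of transitivity. Writing $u\,R\,v$ as shorthand for $u^TMv>0$, transitivity applied with endpoints $y$ and $x$ says: because $y^TMx<0$ (so \emph{not} $y\,R\,x$), there can be no $b$ with $y\,R\,b$ and $b\,R\,x$; equivalently, the open half-spaces $\{b:\langle M^Ty,b\rangle>0\}$ and $\{b:\langle Mx,b\rangle>0\}$ are disjoint. An elementary fact of convex geometry is that two nontrivial open half-spaces through the origin are disjoint exactly when their normals are anti-parallel, so $Mx=-\lambda M^Ty$ for some $\lambda>0$. Conversely, if two such normals are \emph{not} anti-parallel (and both nonzero) the half-spaces intersect; I will reuse this in the final step.

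The key move is then a perturbation of the endpoint $x$. The naive attempt to violate transitivity through a triple $(y,b,x)$ fails precisely because the two normals just identified are anti-parallel. Instead, since $\operatorname{rank}(M)\ge 2$ the range of $M$ is not contained in the line $\operatorname{span}(M^Ty)$, so I may choose $w$ with $Mw\notin\operatorname{span}(M^Ty)$ and set $c_\epsilon=x+\epsilon w$. For $\epsilon>0$ small, continuity gives $y^TMc_\epsilon<0$ and $Mc_\epsilon\ne 0$, while $Mc_\epsilon=-\lambda M^Ty+\epsilon Mw$ fails to be a scalar multiple of $M^Ty$, in particular is not anti-parallel to it. Hence $\{b:\langle M^Ty,b\rangle>0\}$ and $\{b:\langle Mc_\epsilon,b\rangle>0\}$ meet; picking $b$ in their intersection yields $y\,R\,b$ and $b\,R\,c_\epsilon$, yet $y^TMc_\epsilon<0$, i.e. not $y\,R\,c_\epsilon$. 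This contradicts transitivity and finishes the proof.

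The step I expect to be the main obstacle is this last one — realizing that the obvious direct construction degenerates and that one must nudge $c$ off of $x$ into a direction in which $M$ actually ``has rank'', together with the bookkeeping that makes this legitimate, namely that $M$ PSD and non-symmetric forces $\operatorname{rank}(M)\ge 2$, which is exactly what supplies a direction $w$ with $Mw$ off the line $\operatorname{span}(M^Ty)$. The low-rank reductions and the ``disjoint open half-spaces iff anti-parallel normals'' fact are routine but should be stated cleanly.
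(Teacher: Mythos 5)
Your argument is correct, but it is a much longer detour than the proof the paper actually uses, and it misses the one-line trick at the heart of it. The paper simply rewrites the hypothesis $x^TM^Ty<0$ as $y^TM(-x)>0$, so that $x^TMy>0$ and $y^TM(-x)>0$ form a two-step chain; transitivity then forces $x^TM(-x)>0$, i.e.\ $x^TMx<0$, which directly contradicts \lemref{lem2} (every transitive matrix is PSD). In other words, the witnessing triple is just $(x,y,-x)$ --- no perturbation, no rank bookkeeping, no half-space geometry. Your route is valid: the disjointness of the two open half-spaces with normals $M^Ty$ and $Mx$ does force $Mx=-\lambda M^Ty$, your observation that a PSD matrix of rank at most one is symmetric correctly rules out the degenerate case (non-symmetry follows from the opposite signs of $x^TMy$ and $x^TM^Ty$), and the perturbation $c_\epsilon=x+\epsilon w$ with $Mw\notin\operatorname{span}(M^Ty)$ does produce a genuine violation of transitivity. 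What your version buys is some structural insight --- under transitivity, a negative cross-term pins $Mx$ to be anti-parallel to $M^Ty$ --- but none of that structure is needed for the statement; both proofs ultimately lean on \lemref{lem2}, and the paper uses it to kill $x^TMx<0$ outright rather than merely to secure $\operatorname{rank}(M)\ge 2$. The main lesson is that when the relation is defined by a bilinear form, negating one argument flips the sign, so a ``bad'' pair $(x,y)$ already contains a transitivity chain ending at $-x$.
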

\begin{proof}   Let $x,y$ be two vectors that satisfy $x^TMy > 0$ and $x^TM^Ty < 0$. Since   $x^TM^Ty = y^TMx$ therefore $y^T M (-x) > 0$. If I assume $M$ is transitive,   then $x^T M (-x) > 0$ by transitivity, but \lemref{lem2} shows such an $x$ cannot   exist. \end{proof}

\lemref{lem1} is a general statement about all matrices which states that if the two bilinear forms have the same sign for all inputs then they have to be scalar multiples of each other. I omit its proof due to space constraint.
\begin{lemma}\label{lem1}   Let $M_1, M_2 \in \mathbb{R}^{d \times d} \setminus \left\{ 0 \right\}$. If   $x^TM_1y {>} 0$ ${\implies}$ $x^TM_2y {>} 0$ then $M_1 = \lambda M_2$ for some $\lambda   > 0$. \end{lemma}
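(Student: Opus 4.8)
The plan is to reduce the bilinear inequality to a pair of one‑variable (linear‑functional) statements obtained by slicing $x^{T}My$ first in the $x$–argument and then in the $y$–argument, and to match the resulting proportionality constants on the overlap. First I would record the elementary fact I will use twice: if $\ell_1,\ell_2$ are linear functionals on $\R^{d}$ with $\ell_1\not\equiv 0$ and $\ell_1(x)>0\Rightarrow \ell_2(x)>0$ for all $x$, then $\ell_2=\mu\,\ell_1$ for some $\mu\ge 0$. This is proved by first showing $\ker\ell_1\subseteq\ker\ell_2$: if some $x_0\in\ker\ell_1$ had $\ell_2(x_0)\ne 0$, fix $x_1$ with $\ell_1(x_1)>0$; then $\ell_1(x_1-tx_0)=\ell_1(x_1)>0$ for all $t$, so $\ell_2(x_1)-t\,\ell_2(x_0)>0$ for all $t$, which is false once $t$ has the right sign and large magnitude. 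Hence (unless $\ell_2\equiv 0$, in which case $\mu=0$) the two hyperplanes $\ker\ell_1,\ker\ell_2$ coincide, so $\ell_2=\mu\,\ell_1$, and evaluating at a point where $\ell_1>0$ forces $\mu>0$.

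Next I would apply this fact in two ways. Since $M_1\ne 0$, choose $a_0$ with $a_0^{T}M_1\ne 0$ as a row vector, i.e.\ $M_1^{T}a_0\ne 0$. (i) For each fixed $y$, take $\ell_1(x)=x^{T}M_1y$ and $\ell_2(x)=x^{T}M_2y$; the lemma's hypothesis is exactly $\ell_1(x)>0\Rightarrow\ell_2(x)>0$, so whenever $M_1y\ne 0$ we get $M_2y=\mu(y)\,M_1y$ for some $\mu(y)\ge 0$. (ii) Apply the fact instead with $\ell_1(y)=a_0^{T}M_1y=(M_1^{T}a_0)^{T}y$ and $\ell_2(y)=a_0^{T}M_2y$; this is legitimate because $M_1^{T}a_0\ne 0$, and it yields a single constant $\lambda\ge 0$ with $a_0^{T}M_2=\lambda\,a_0^{T}M_1$.

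Now I would glue the two conclusions. Let $U=\{y:a_0^{T}M_1y\ne 0\}$, which is open and dense in $\R^{d}$ since $M_1^{T}a_0\ne 0$ makes it the complement of a proper hyperplane. For $y\in U$ we have $M_1y\ne 0$, so (i) gives $a_0^{T}M_2y=\mu(y)\,a_0^{T}M_1y$, while (ii) gives $a_0^{T}M_2y=\lambda\,a_0^{T}M_1y$; since $a_0^{T}M_1y\ne 0$, this forces $\mu(y)=\lambda$, hence $M_2y=\lambda M_1y$ for every $y\in U$. Both sides are linear, hence continuous, in $y$, so the identity $M_2y=\lambda M_1y$ propagates from the dense set $U$ to all of $\R^{d}$, giving $M_2=\lambda M_1$. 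Finally, $\lambda\ne 0$ because $M_2\ne 0$, and $\lambda\ge 0$ by (ii), so $\lambda>0$, which is the claim.

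The only delicate point I anticipate is the elementary linear‑functional fact and its degenerate cases (the trivial possibility $\ell_2\equiv 0$, and keeping track of signs when one replaces $x_0$ by $-x_0$ or chooses the sign of $t$). The conceptual payoff of slicing in \emph{both} arguments and reconciling the two proportionality constants on $U$ is precisely that it sidesteps any case analysis on the rank of $M_1$ (in particular the awkward rank‑one situation, where all images $M_1y$ are collinear); everything else is bookkeeping.
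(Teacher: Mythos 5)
Your argument is correct. Note that the paper itself gives no proof of this lemma (it is explicitly omitted ``due to space constraint''), so there is nothing in the text to compare against; your write-up actually supplies the missing argument. The two ingredients check out: the linear-functional fact is proved correctly (the line $\ell_1(x_1 - t x_0) = \ell_1(x_1) > 0$ for all $t$ forces $\ell_2(x_0)=0$, the degenerate case $\ell_2\equiv 0$ is absorbed into $\mu\ge 0$, and positivity of $\mu$ when $\ell_2\not\equiv 0$ follows by evaluating at $x_1$), and both applications are legitimate instances of the lemma's hypothesis (fixing $y$, respectively fixing $x=a_0$ with $M_1^T a_0\neq 0$). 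The gluing on the open dense set $U=\{y: a_0^T M_1 y\neq 0\}$ correctly reconciles $\mu(y)$ with the single constant $\lambda$, and density plus linearity gives $M_2=\lambda M_1$ everywhere; $\lambda\neq 0$ since $M_2\neq 0$, and this also disposes of the rank-one case without separate analysis. The only cosmetic remark is that you obtain $M_2=\lambda M_1$ with $\lambda>0$ while the statement reads $M_1=\lambda M_2$; since $\lambda>0$ these are interchangeable by taking the reciprocal, but it is worth one sentence to say so.
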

Finally I use \lemref{lem3}--\ref{lem1} to prove \thref{thmMain}. \begin{proof}   Let $M$ be a transitive matrix and let $x,y$ be two vectors such that   $x^{T} M y > 0$. By transitivity of $M$ and \lemref{lem3} $x^TM^Ty > 0$. Therefore   by \lemref{lem1} I get $M = \lambda M^T$ for some $\lambda > 0$.   Clearly $\lambda = 1$, this concludes the proof that $M$ is symmetric. \end{proof}

\subsection{Experimental Results}
\label{sec:experiment}
My theoretical result in \secref{sec:main-result} was derived under the assumption that the constraint~\ref{eq:mtrans} held over all vectors in $\mathbb{R}^d$ instead of just the finite number of vector triples used to encode the KB triples. It is intuitive that as the number of entities inside a KB increases my assumption will become an increasingly better approximation of reality. Therefore my theory predicts that the performance of the RESCAL model will degrade as the number of entities inside the KB increases and the dimensionality of the embeddings remains constant. I perform experiments to test this prediction of my theory.

\subsubsection{Experiments On Simulated Data}
\label{sec:simulation-study}
I tested the applicability of my analysis by the following experiments:  I started with a complete, balanced, rooted, directed binary tree $\mathcal{T}$, with edges directed \textit{from} the root \textit{to} its children. I then augmented $\mathcal{T}$ as follows: For every tuple of distinct vertices $v$, $v'$ I added a new edge to $\mathcal{T}$ if there already existed a directed path starting at $v$ and ending at $v'$ in $\mathcal{T}$. I stopped when I could not add any more edges without creating multi-edges.  For the rest of the chapter, I denote this resulting set of ordered pairs of vertices as $\mathcal{E}$ and those pairs of vertices that are not in $\mathcal{E}$ as $\mathcal{E}^c$.
%, For example, $\mathcal{E}$ contains an edge % from the root vertex to every other vertex and $\mathcal{E}^c$ contains an edge % from every vertex to the root vertex.
For a tree of depth 11, $\mathrm{V} = 2047, \mathrm{E}=18,434$ and $|\mathcal{E}^c| = 4,171,775$. See \figref{fig:ds} for an example of $\mathcal{E}, \mathcal{E}^c$.

\begin{figure}[htbp]
  \centering
  \includegraphics[width=0.5\linewidth]{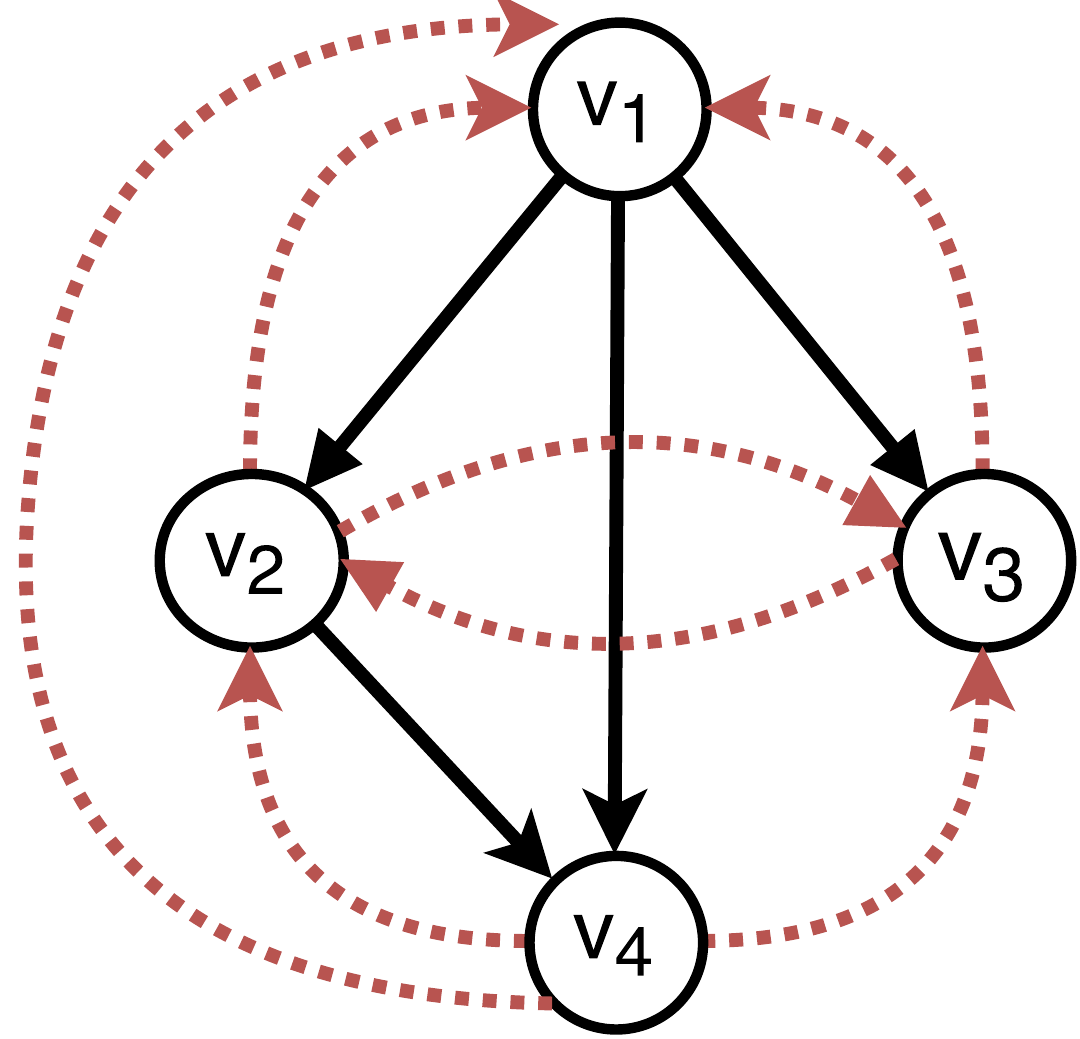}
  \caption[Illustration of $\mathcal{E}$ and $\mathcal{E}^{rev}$]{Assume that the black edges constitute $\mathcal{E}$ and the red dotted denote $\mathcal{E}^c$, then $\mathcal{E}^{rev}$ contains the edges $(v_4, v_1), (v_4, v_2), (v_2, v_1)$, and $(v_3, v_1)$.}
  \label{fig:ds}
\end{figure}

I trained the RESCAL model under two settings: In the first setting, called \textit{FullSet}, I used entire $\mathcal{E}$ and $\mathcal{E}^c$ for training. In the second setting, called \textit{SubSet}, I randomly sample $\mathcal{E}^c$ and select only $\mathrm{E} = |\mathcal{E}|$ edges from $\mathcal{E}^c$. All the edges in $\mathcal{E}$ including all the edges in the original tree are always used during both \textit{FullSet} and \textit{SubSet}. For both the settings of \textit{FullSet} and \textit{SubSet} I trained RESCAL 5 times and evaluated the models' predictions on $\mathcal{E}$, $\mathcal{E}^c$ and $\mathcal{E}^{(rev)}$. $\mathcal{E}, \mathcal{E}^c$ have already been defined, and $\mathcal{E}^{(rev)}$ is the set of reversed ordered pairs in $\mathcal{E}$.  \bigie $\mathcal{E}^{rev} = \{(u,v) | (v,u) \in \mathcal{E}\}$

For every edge in these three subsets, I evaluated the model's performance under $0-1$ loss. Specifically, to evaluate the performance of RESCAL on an edge $(v, v') \in \mathcal{E}$ I checked whether the model assigns a higher score to $(v, r_1, v')$ than $(v, r_0, v')$ and rewarded the model by $1$ point if it made the right prediction and $0$ otherwise. As before, $r_1$ and $r_0$ denote the presence and absence of relationship respectively.

Note that low performance on $\mathcal{E}^{rev}$ and high performance on $\mathcal{E}$ will indicate exactly the type of failure predicted from my analysis. I vary the dimensionality of the embedding $d$, and the number of entities $\mathrm{V}$, since they influence the performance of the model, and present the results in \tabref{tab:fullset}--\ref{tab:subset}. The rightmost column of \tabref{tab:subset} is the most direct empirical evidence of my theoretical analysis. The performance of RESCAL embeddings is substantially lower on $\mathcal{E}^{rev}$ in comparison to $\mathcal{E}, \mathcal{E}^c$. The last row with $d=400$, however, shows a very sharp drop in the accuracy on $\mathcal{E}^c$ while the performance of $\mathcal{E}^{rev}$ increases slightly. I believe that this happens because of higher overfitting to the forward edges as the number of parameters increases.

\begin{table}[htbp]
\addtolength{\tabcolsep}{-2.2pt}
\begin{subtable}[t]{\linewidth}
\centering
\begin{tabular}{r | ccc | ccc | ccc}
\multicolumn{10}{c}{{\textit{\small FullSet}}}\BStrut \\\hline%\cline{4-8}
\TStrut $d$ & \multicolumn{3}{c|}{$\mathrm{V}=2047$} & \multicolumn{3}{c|}{$4095$} & \multicolumn{3}{c}{$8191$} \\\hline
& $_\mathcal{E}$ & $_{\mathcal{E}^c}$ & $_{\mathcal{E}^{rev}}$ & $_\mathcal{E}$ & $_{\mathcal{E}^c}$ & $_{\mathcal{E}^{rev}}$& $_\mathcal{E}$ & $_{\mathcal{E}^c}$ & $_{\mathcal{E}^{rev}}$\\
$50$    & 66                & 100    & 100 & 60 & 100 & 100 & 54 & 100 & 100 \\
$100$   & 76                & 100    & 100 & 69 & 100 & 100 & 63 & 100 & 100 \\
$200$   & 86                & 100    & 100 & 79 & 100 & 100 & 72 & 100 & 100 \\
$400$   & 94                & 100    & 100 & 88 & 100 & 100 & 81 & 100 & 100 \\
\end{tabular}
\caption{ Accuracy in percentage of RESCAL with all the edges as training     data (denoted as \textit{FullSet}) on $\mathcal{E}, \mathcal{E}^c,     \mathcal{E}^{rev}$.\label{tab:fullset}}
\end{subtable}
\begin{subtable}[t]{\linewidth}
\centering
\begin{tabular}{r| ccc | ccc | ccc}
\multicolumn{10}{c}{{\textit{\small SubSet}}}\BStrut \\\hline%\cline{4-8}
\TStrut$d$ & \multicolumn{3}{c|}{$\mathrm{V}=2047$} & \multicolumn{3}{c|}{$4095$} & \multicolumn{3}{c}{$8191$} \\\hline
& $_\mathcal{E}$ & $_{\mathcal{E}^c}$ & $_{\mathcal{E}^{rev}}$ & $_\mathcal{E}$ & $_{\mathcal{E}^c}$ & $_{\mathcal{E}^{rev}}$& $_\mathcal{E}$ & $_{\mathcal{E}^c}$ & $_{\mathcal{E}^{rev}}$\\
$50$  & 100 & 93 & 52         & 100 & 91 & 48 & 100&  89 & 44 \\
$100$ & 100 & 78 & 58         & 100 & 92 & 56 & 100&  89 & 52 \\
$200$ & 100 & 60 & 72         & 100 & 71 & 61 & 100&  90 & 59 \\
$400$ &  100 & 54 & 67         & 100 & 57 & 70 & 100&  65 & 62 \\
\end{tabular}
\caption{ Accuracy in percentage of RESCAL trained  with all positive     edges and subsampled negative edges as training data (together called
\textit{SubSet}). \label{tab:subset}}
\end{subtable}
\caption[RESCAL accuracy in different training scenarios.]{$\mathrm{V}$ denotes the number of nodes in the tree. $d$ denotes the number of dimensions.}
\end{table}
\subsubsection{Experiments On WordNet}
\label{sec:wordnet-experiments}
% I tested my analysis on real data by performing experiments on the % WordNet dataset.
WordNet is a KB that contains vertices called \textit{synsets} that are arranged in a tree-like hierarchy under the \textit{hyponymy} relation. The hyponym of a synset is another synset that contains elements that have a more specific meaning. For example, the \textit{dog} synset\footnote{A synset must be   qualified by the word sense and the part of speech. So a valid   synset  called \textit{dog.n.01}. For simplicity   I skip this detail in my explanation but my implementation distinguishes   between the synset \textit{dog.n.01} and \textit{dog.n.02}.} is a hyponym of the \textit{animal} synset and an \textit{animal} is a hyponym of \textit{living\_thing} therefore a \textit{dog} is a hyponym of \textit{living\_thing}. I extracted all the hyponyms of the \textit{living\_thing.n.01} synset as the vertices of $\mathcal{T}$ and I used the transitive closure of the direct hyponym relationship between two synsets as the edges of $\mathcal{T}$. Quantitatively, the \textit{living\_thing} synset contained $16,255$ hyponyms, and $16,489$ edges. After performing the transitive closure $\mathrm{E}$ became $128,241$.

I performed two experiments with the WordNet graphs, using the same \textit{FullSet} and \textit{SubSet} protocols described earlier. The results are in the left half of \tabref{tab:wordnet}. I see that even though the accuracy on $\mathcal{E}$ and $\mathcal{E}^c$ is high, the performance on $\mathcal{E}^{rev}$ is much lower. This trend is in line with my theoretical prediction that the RESCAL model will fail on ``reverse relations'' as the KB's size increases.

\begin{table}[htbp]
  \centering
  \addtolength{\tabcolsep}{-2.2pt}
  \begin{tabular}{r | ccc | ccc || ccc}
    $d$   & \multicolumn{3}{c|}{\textit{FullSet}} & \multicolumn{3}{c||}{\textit{SubSet}} & \multicolumn{3}{c}{\textit{SubSet}} \\\hline
    & $_\mathcal{E}$ & $_{\mathcal{E}^c}$ & $_{\mathcal{E}^{rev}}$ & $_\mathcal{E}$ & $_{\mathcal{E}^c}$ & $_{\mathcal{E}^{rev}}$ & $_\mathcal{E}$ & $_{\mathcal{E}^c}$ & $_{\mathcal{E}^{rev}}$\\
    $50$  & 71 & 100 & 100       & 100 & 93 & 58   & 100 & 55 & 65     \\
    $100$ & 79 & 100 & 100       & 100 & 94 & 60   & 100 & 56 & 56     \\
    $200$ & 84 & 100 & 100       & 100 & 93 & 63   & 100 & 56 & 75     \\
    $400$ & 89 & 100 & 100       & 100 & 68 & 69   & 100 & 97 & 91     \\
  \end{tabular}
\caption[Accuracy on WordNet for Logically constrained embeddings.]{ Results from experiments on WordNet.   I used the subtree rooted at the \textit{living\_things} synset from the WordNet hierarchy. $d$ indicates the dimensionality of the embeddings used and the triple of numbers under \textit{FullSet} and \textit{SubSet} indicates the accuracy of RESCAL on $\mathcal{E}, \mathcal{E}^c, \mathcal{E}^{rev}$. $\mathrm{V}$ is $16,255$ for all columns. The right half shows results from experiments on WordNet with role dependent embeddings for entities.}
\label{tab:wordnet}
\end{table}

Finally, I present the results of augmenting RESCAL with role-specific embeddings in the right half of \tabref{tab:wordnet}. The results show that using role-specific embeddings increases the performance over the performance of the RESCAL algorithm and with a high dimensionality of embeddings it is possible to encode both the forward and the reverse relations in the embeddings. Please note that I do not claim that my proposed augmentation for RESCAL will empirically be any better than the much more recently proposed methods such as ARE~\cite{nickel2014reducing}, or Poincar\'{e} embeddings~\cite{nickel2017poincar}. I leave a careful empirical comparison of these techniques for future work.

\subsection{Discussion}
\label{sec:discussion}
The information present in large scale knowledge bases has helped in moving information retrieval beyond retrieval of documents to more specific entities and objects. And in order to further improve coverage of knowledge bases, it is important to research knowledge base completion methods. Since many knowledge bases contain information about real-world artifacts that obey hierarchical relations and logical properties, it is important to keep such properties in mind while designing knowledge base completion algorithms. In this chapter, I demonstrate a close connection between the logical properties of relations such as asymmetry, and transitivity, and the performance of KBC algorithms used to predict those relations. Specifically, I theoretically analyzed a popular KBC algorithm named RESCAL, and my analysis showed that the performance of that model in encoding transitive and asymmetric relations must degrade as the size of the KB increases. My experimental results in \tabref{tab:fullset},\ref{tab:subset} and~\ref{tab:wordnet} confirmed my theoretical hypothesis, and most strikingly I observed that the accuracy of RESCAL on $\mathcal{E}^{rev}$ was substantially lower than its performance on either $\mathcal{E}$ or $\mathcal{E}^c$, even though $\mathcal{E}^{rev}$ is a subset of $\mathcal{E}^c$.
% This discrepancy is most pronounced in configurations % where $\mathrm{V}$ is high, and $d$ is low.

In \tabref{tab:example}, I visualize the errors made by RESCAL by listing a few edges in $\mathcal{E}^{rev}$ that were wrongly predicted as true edges. These examples show that the trained RESCAL model can predict that \textit{fruit tree} is a hyponym of \textit{mango} or that every \textit{accountant} is a \textit{bean   counter}. Such wrong predictions can be harmful. Based on my analysis, I advocated for role-specific embeddings as a way of alleviating this shortcoming of RESCAL, and I empirically showed its efficacy in~\tabref{tab:wordnet}.

My results also highlight a problem with the commonly employed KBC evaluation protocol of randomly dividing the edge set of a graph into train and test sets for measuring knowledge base completion accuracy. For example with $d=50$ the average accuracy on both $\mathcal{E}$ and $\mathcal{E}^c$ is quite high but on $\mathcal{E}^{rev}$ accuracy is low even though $\mathcal{E}^{rev}$ is a subset of $\mathcal{E}^c$. Such a failure will stay undetected with existing evaluation methods.

\begin{table}[htbp]
  \centering\small
  \addtolength{\tabcolsep}{-2.2pt}
  \begin{tabular}{l | l}\hline
    \textit{\normalsize Argument 1} & \textit{\normalsize Argument 2}\\\hline
    draftsman.n.02   & cartoonist        \\
    fruit tree       & mango             \\
    taster           & wine taster       \\
    accountant       & bean counter      \\
    scholar.n.03     & rhodes scholar    \\
  \end{tabular}
  \caption[Some examples of errors by RESCAL]{ Examples of wrong predictions for the hyponym relations by the     RESCAL model with $d=400$ when trained under the \textit{SubSet} setting. The default synset is \textit{n.01}. i.e. the default synset in this table is the sense 1 for nouns.}
  \label{tab:example}
\end{table}

\section{Training Relation Embeddings under Logical Constraints}
\label{sec:method}
Let a knowledge base be defined as a tuple $(\cF, \cL)$,  with $\cF$ a set of statements, and a set of first order logic rules $\cL$. Every element $f \in \cF$ is itself a nested tuple $(r, (e, e'))$ which states that the entities $e$ and $e'$ are connected via the relation $r$. Let $\cE$ and $\cR$ be the set of all entities and relations respectively. Let $\cT$ be the set of all entity tuples that appear in $\cF$, and let $\cU$ denote the universe of all possible facts, i.e. $\cT = \{t \mid (r, t) \in \cF\}$, and $\cU = \{(r, (e, e')) \mid \ r \in \cR,\ e,e' \in \cE\}$. Note that $\rT \le \rF$.\footnote{   Per standard convention I denote the size of a set using the corresponding   roman symbol. E.g. $\rE$ is the size of $\cE$.} Finally, $\cFc = \cU \setminus \cF$ is the set of unknown facts. The goal of a KBC system is to rank the elements of $\cFc$ so that facts that are correct receive a smaller rank than incorrect facts.

\textbf{Embedding Model:} I assume that every relation $r \in \cR$ and entity $e \in \cE$ can be represented using real valued vectors $\bldr\in \mathbb{R}^d$ and $\blde\in \mathbb{R}^{\tilde{d}}$; $d$ and $\tilde{d}$ may have different values. The vector representation of each tuple $t$ is computed from its constituent entities via a composition function $\comp: \mathbb{R}^{\std} {\times} \mathbb{R}^{\std} {\to} \mathbb{R}^d$, i.e. $\bldt = \comp(\blde, \blde')$. For example $\comp$ may denote vector concatenation, in which case $\bldt = [\blde^T, {\blde'}^T]^T$. I will use the semicolon symbol $;$ as an infix operator to denote vector concatenation, i.e. $(\x; \y) = [\x^T, \y^T]^T$. Finally, $\x \ge \y$ denotes that the vector $\x$ is elementwise larger than $\y$ and $B(\x, r)$ denotes the $L_2$ ball centered at $\x$ with radius $r$.

\textbf{Score Function:} A majority of the existing work on embedding based KBC measures the \textit{correctness} of a fact via a scoring function, $\score: \cR {\times} \cE {\times} \cE {\to} \mathbb{R}$, with the property that when $\score(f) > \score(f')$, fact $f$ is more likely to be correct than $f'$. The two major classes of score functions are:
\begin{align}
  \score(f) &= \ip{\bldr}{\bldt}\label{eq:scoreip}\\
  \score(f) &= -\norm{\bldr- \bldt}\label{eq:scoredist}
\end{align}
In Equations~(\ref{eq:scoreip}--\ref{eq:scoredist}), $\bldr$ and $\bldt$ are vector representations of $r$ and $t=(e, e')$, respectively, that are constituents of $f=(r, (e, e'))$. For brevity, I will omit this expansion from here onwards.

\textbf{Unconstrained Objectives for Learning Score Function} \newcite{rendle2009bpr} proposed the Bayesian Personalized Ranking (BPR) objective as a way of tuning recommendation systems when a user can only observe positive training data, such as correct facts, but the facts that are absent may be either correct or incorrect. in this chapter I will focus on the BPR objective since this objective has been used for learning the parameters of a KBC system by various researchers~\cite{rendle2009bpr,demeester2016lifted,riedel2013relation}. \newcite{wang2016learning} experimentally showed that the BPR objective outperforms other objectives such as Hinge Loss and Log Loss.

BPR posits that the training data is a single joint sample of $\rU (\rU-1)$ bernoulli random variables $\{b_{ff'} \mid f \in \cU, f' \in \cU, f' \ne f\}$. $b_{ff'}$ equals $1$ when $f$ is in $\cF$ and $f'$ is in $\cFc$ and $0$ otherwise. $b_{ff'}$ is parameterized by its probability $p_{ff'}$ and all $b_{ff'}$ are conditionally independent given the probabilities $p_{ff'}$. The probability values must obey the reasonable condition that $p_{ff'} = 1 - p_{f'f}$. A natural way to satisfy this condition is to parameterize $p_{ff'}$ as $\sigma(\score(f) - \score(f'))$ where $\sigma$ is the sigmoid function.\footnote{The sigmoid function, $\sigma(x) =   \frac{1}{1+\exp(-x)}$, has the useful properties that   $\sigma(x) + \sigma(-x) = 1$ and   $\frac{d \sigma(x)}{dx} = \sigma(x)\sigma(-x)$.}
% Note that each partition of $\cU$ into $\cF$ and $\cFc$ creates a unique valuation of $\{b_{ff'}\}$ % but not every valuation of $\{b_{ff'}\}$ corresponds to a valid split of $\cU$.
The BPR estimator is simply the $L_2$ regularized MLE estimator of this probabilistic model, with regularization strength $\alpha$. \tabref{tab:obj} lists instances of the BPR objective that arise with different score functions.
\begin{table}[htbp]
    \begin{adjustwidth}{-1cm}{}
  \begin{tabular}{c |  c | l}\hline
    Model & $\score$ & \ \ {Minimization Objective ($J$)} \\\hline
    \tworow{A $\mid t=(\blde;\blde')$}{\AR $\mid t=\blde\otimes \blde'$}  &
    \eqbref{eq:scoreip} &
    $\begin{aligned}
      - \sum_{{(f,\fc) \in \cF {\times} \cFc}}
      \ \log(\sigma(\ip{\bldr}{\bldt} - \ip{\rc}{\tc}))
      + \alpha (\sum_{r \in \cR}\norm{\bldr}+ \sum_{t \in \cE} \norm{\blde})
    \end{aligned}$\\[10pt]
    \hline
    \begin{tabular}{c}
      $B$\\
      $t=(\mathbf{e}; \mathbf{e}'; \mathbf{e}^T\mathbf{e'} )$
    \end{tabular} &
    \eqbref{eq:scoreip}  &
    \begin{tabular}{l}\\
      $-\sum_{{\text{  }(f,\fc) \in \cF {\times} \cFc}} \log\sigma\Bigg(
      \begin{aligned}
        &\ip{\bldr_1}{\blde} + \ip{\bldr_2}{\blde'} + \ip{\blde}{\blde'} \\[-12pt]
        &- \ip{\rc_1}{\ec} - \ip{\rc_2}{\ec'} - \ip{\ec}{\ec'}
      \end{aligned}
      \Bigg)$ \\[12pt]
      $+ \alpha (\sum_{r \in \cR}\norm{\bldr}+ \sum_{t \in \cE} \norm{\blde})$\\
    \end{tabular}\\[10pt]
    \hline
    \tworow{C $\mid t=(\blde;\blde')$}{\CT $\mid t=\blde- \blde'$} &
    \eqbref{eq:scoredist} &
    \begin{tabular}{l}\\
    $-  \sum_{{(f,\fc) \in \cF {\times} \cFc}} \log(\sigma(\norm{\rc - \tc} - \norm{\bldr- \bldt}))$\\
    $+ \alpha (\sum_{r \in \cR}\norm{\bldr}+ \sum_{t \in \cE} \norm{\blde})$\\
    \end{tabular}
    \\[10pt]
    \hline
    \begin{tabular}{c}
      $D$\\
    $t=(\mathbf{e}; \mathbf{e}' ||\mathbf{e}-\mathbf{e}'||)$
    \end{tabular} &
    \eqbref{eq:scoredist} &
    \begin{tabular}{l}\\
      $-\underset{{(f,\fc) \in \cF {\times} \cFc}}{\displaystyle\sum}
       \ \log\sigma\left( \begin{aligned}\norm{\rc_1 - \ec} + \norm{\rc_2 - \ec'} + \norm{\ec - \ec'}\\[-12pt]
         - \norm{\bldr_1 - \blde} - \norm{\bldr_2 - \blde'} - \norm{\blde- \blde'}\end{aligned}\right)$\\[12pt]
       $+ \alpha \left(\sum_{r \in \cR}\norm{\bldr}+ \sum_{t \in \cE} \norm{\blde}\right)$
    \end{tabular}
  \end{tabular}
%% \end{adjustbox}
  \end{adjustwidth}
  \caption[Instantiations of the BPR objective.]{     Instances of the BPR objective corresponding to different choices of     composition and score functions. For example, if     $\comp(\blde, \blde') = (\blde;\blde')$ and \eqnref{eq:scoreip} is used as the score     function then I need to minimize the function in the first row with     respect to $\bldr,\blde$. In the first and third row, $\bldr= (\bldr_1;\bldr_2)$, in the     second row $\bldr= (\bldr_1;\bldr_2;1)$ and in last row $\bldr= (\bldr_1;\bldr_2;0)$. The     symbol $\otimes$ refers to the vector outer product operator that takes two     vectors of size $\std$ and produces a vector of size $\std^2$. Since this     scoring function is equivalent to the score function of RESCAL I call the     model \AR. Similarly the scoring function for $\CT$ is the same as TransE ~\cite{bordes2013translating}     .}
  \label{tab:obj}
\end{table}

\textbf{Logical Consistency of Embeddings through Constraints}
My general scheme for incorporating logical relations into embeddings is to ensure that during the learning of the vector representation of entities and relations, the score of a consequent fact will be greater than the score of any of its antecedents. In other words if $f_1, \ldots, f_{n-1} \implies f_n$ then $\score(f_n) \ge \min_{i \in [1, n-1]}(\score(f_i))$. If this does not hold, then it will be possible for my KB to assign a low score to a logically entailed fact even though all of its antecedents have a high score.

I analyzed common logical rules found in large scale KBs, and for different combinations of a logical rule and scoring function, I devised inequalities that the score function should satisfy. I translated those inequalities into constraints that restrict the entity and relation representations in a KB.

%% By completeness, I mean that the constraints should
%% ensure that the score of a consequent fact is larger than the minimum of the
%% scores of the antecedent facts. This constraint minimizes the number of false
%% negatives that the system can produce and therefore ensures ``completeness'' of
%% my set of predictions. The second constraint of efficiency simply means that
%% the constraints should be efficient to enforce during the parameter training
%% phase.  Specifically, this means that it should be possible to satisfy the
%% constraint by efficiently projecting onto the feasible set for that constraint.
I use the projected subgradient descent algorithm for learning the parameters of my KBS system. Algorithm~\ref{alg:tellmain} shows a specific instance, for Model A and batch size $1$, of my parameter learning algorithm with a general set of rules $\cL$. I now show how to construct convex constraints from logical rules.

\subsection{Constraints for Logical Consistency: Relational Implication}
\label{ssec:constraints}
I now present the constraints for guaranteeing that the predictions from embeddings based KBC systems are consistent with logical rules starting with implication rules. An implication rule of the form, \implication$(r, r')$, specifies that if a fact $f=(r, t)$ is correct, then $(r', t)$ must also be correct. For example, the rule \implication(\twr{Husband Of, Spouse Of}) enforces that if my KB predicts the fact, \twr{(Husband Of, (Don, Mel))}, then it will also predict \twr{(Spouse Of, (Don, Mel))}. As explained above I can enforce such a rule by ensuring that $\score(r', t) \ge \score(r, t)\ \forall t \in \cT$. \footnote{I abuse notation in saying that   $\score(r, (x, y)) = \score(r, x, y)$.}

When I use the inner product score function~(\ref{eq:scoreip}) then this inequality can be enforced by ensuring that $\ip{\bldr'-\bldr}{\bldt} \ge 0$ for all $t \in \cT$.
%% The naive approach of adding 1 constraint for each entity
%% tuple to the training objective has the drawback that the addition of $\rT$
%% constraints can tremendously slow down training~\cite{demeester2016lifted}.
%% Instead of adding a large number of constraints, if
I constrain $\bldt$ to lie in a subset of $\mathbb{R}^d$, say $\T$, then the implication rule can be enforced by constraining $\bldr'- \bldr$ to lie in the dual cone of $\T$, denoted $\T^*$. A very convenient special case arises when I chose $\T$ to be a ``{self dual cone}'' for which $\T = \T^*$. The set of positive real vectors $\rp$ is one example of such a self-dual cone. \footnote{Other self-dual cones, distinct from $\rp$ also exist such as the   Lorentz cone   $\{x \in \mathbb{R}^d\mid x_{d} \ge \sqrt{\sum_{i=1}^{d-1} x_i^2}\}$.   I refer the reader to \cite{gruber2007convex} for more details on the   geometry of closed convex cones and their polar and dual sets.}

When I use the $L_2$ distance score function~(\ref{eq:scoredist}) then the restriction on the score function translates into the following constraints on the vector representations: $\norm{\bldr- \bldt} - \norm{\bldr'- \bldt} \ge 0$ $\implies \ip{\bldr- \bldr'}{\bldr'+\bldr-2\bldt} \ge 0$ $\implies \ip{\bldr- \bldr'}{\bldr'+\bldr}/2 \ge \supp{\bldr- \bldr'}$. Here $\supp{\x}$ is the value of the support function of $\T$ at $\x$ which is defined as $\supp{\x} = \sup_{\bldt \in \T}\ip{\x}{\bldt}$. It is necessary and sufficient for the feasibility of this constraint that the $h_{\T}$ function should be finite for at least one value of $\x = \bldr- \bldr'$. Once I have fixed $\bldr-\bldr'$ then $\bldr+ \bldr'$ can be easily chosen from the halfspace $\hfs{\bldr'- \bldr}{2\supp{\bldr- \bldr'}}$. Note that if $h_{\T}$ is difficult to compute then implementing this constraint will also be difficult, therefore I must chose $\T$ wisely.

One example of a good choice of $\T$ is $\rp$. $\supp[\rp]{\bldr- \bldr'}$ is finite and zero \ifoif $\bldr- \bldr'\le 0$ therefore, the value of $\bldr+\bldr'$ must lie in the halfspace $\ip{\bldr- \bldr'}{\bldr'+\bldr}\ge 0$. Unfortunately, the problem of finding $\bldr$ and $\bldr'$ vectors that satisfy this constraint is non-convex and it is not possible to project on to this set given a pair of vectors that violate the constraints. I remedy this situation by adding an additional constraint that $\bldr+\bldr'$ must also lie in the negative orthant, i.e. $\bldr+ \bldr'\le \Z$. \tabref{tab:impl} presents all the derived constraints. Unfortunately, since the \CT model defines $\bldt = \blde- \blde'$, therefore it is not possible to set $\T = \rp$. In the case of the \CT model if I constrain $\blde$ to lie in $B(\Z, \rho)$ then $\bldt$ must lie in $B(\Z, {2}\rho)$ and $\supp{\bldr-\bldr'} = 2\rho(\bldr-\bldr') \implies \frac{\ip{\bldr- \bldr'}{\bldr'+\bldr}}{||{\bldr-     \bldr'}||} \ge 4\rho$. One way to make this constraint amenable to efficient projection is to enforce that $\bldr+ \bldr'= 4\rho (\bldr- \bldr')$ and $\norm{\bldr- \bldr'} \ge 1 \implies \norm{\bldr'} \ge |{2\rho - .5}|$. This constraint becomes trivial if $\rho = 0.25$

\subsubsection{Reverse Relational Implication and Symmetry}
\label{sssec:rev-impl}
A reverse relational implication rule denoted by $\rrelimpl(r,r')$ specifies that if $(r, (x, y))$ is correct, then $(r', (y, x))$ is also correct for all $(x,y) \in \cT$. This rule can be enforced through the inequality that $\score(r', y, x) \ge \score(r, x, y)$. Depending on the model let $\bldr= (\bldr_1; \bldr_2)$ or $(\bldr_1; \bldr_2; 1 \text{ or } 0)$ as shown in \tabref{tab:obj}, and similarly decompose $\bldr'$.  I will omit this detail in later sections. Under models A and B, this inequality translates to the following constraint $\ip{\y}{\bldr_1'} + \ip{\x}{\bldr'_2} \ge \ip{\x}{\bldr_1} + \ip{\y}{\bldr_2}$ and under models C and D, the necessary constraints are $\ip{\bldr_1 - \bldr_2'}{\bldr_1 + \bldr_2' - 2\x} \ge \ip{\bldr_1'-\bldr_2}{\bldr_1' + \bldr_2 - 2\y}$. Stronger versions of these constraints, which are more efficient to enforce, are shown in \tabref{tab:impl}.

A symmetry rule denoted as $\symm(r)$ specifies that if the fact $(r, (e, e'))$ is known to be correct then $(r, (e', e))$ is also correct. I can only comply with this logical rule in an embedding base KB by ensuring that $\score(r, e, e')$ $=$ $\score(r, e', e)$. Under all 4 score models this rule can be enforced only by ensuring that $\bldr_1 = \bldr_2$.

\subsubsection{Entailment}
\label{sssec:entailment}
A type A entailment logical rule denoted as \entail$_A(r,e,r',e')$ specifies that $(r, (e, x))$ implies $(r', (e', x))$ for all $x$ in $\cE$.\footnote{I use the   term, entailment, in the sense of entailment of properties. Note that this is different from implication.} A Type B entailment rule, \entail$_B(r,e,r',e')$ specifies that $(r, (x, e))$ implies $(r', (x, e'))$. $r$ and $r'$ may denote the same relation. For example, the rule \entail$_B$(\twr{Is A,Man,Is A,Mortal})  can be used to enforce that if \twr{(Is A, (Socrates, Man))} then the KB must also predict that \twr{(Is A, (Socrates, Mortal))}. The final constraints required to implement a type B entailment rule are shown in \tabref{tab:entailment}.\footnote{Details: To implement a type B entailment rule I need to ensure that $\score(r', x, e') \ge$ $\score(r, x, e)$ for all $x \in \cE$. Under model A this inequality translates to, $\ip{\bldr_1' - \bldr_1}{\x} \ge \ip{\bldr_2}{\blde} - \ip{\bldr_2'}{\blde'}$. Model B requires $\ip{\bldr_1' - \bldr_1 + \blde'- \blde}{\x} \ge \ip{\bldr_2}{\blde} - \ip{\bldr_2'}{\blde'}$. Model C requires $\ip{\bldr_1 - \bldr_1'}{\bldr_1 + \bldr_1' - 2\x} \ge \ip{\bldr_2' -\blde'+ \bldr_2 -   \blde}{\bldr_2' - \blde'- \bldr_2 + \blde}$, and finally the constraints over model D's score functions are $\ip{\bldr_1 - \bldr_1'}{\bldr_1 + \bldr_1'} + \ip{\blde- \blde'}{\blde+ \blde'} - \ip{\bldr_2' - \blde'- \bldr_2 + \blde}{\bldr_2' - \blde'+ \bldr_2 - \blde} \ge 2 \ip{\bldr_1 - \bldr_1' -   \blde- \blde'}{\x}$.}

\subsubsection{Property Transitivity}
\label{sssec:proptrans}
A property transitivity rule denoted \proptrans$(r, r', e', r'', e'')$ specifies that if $(r, (x, y))$ and $(r', (y, e'))$ are true then $(r'', (x, e''))$ is also true. For example, the rule \proptrans(\twr{Partner, Convicted, Criminal, Suspected, Criminal}) can be used to incorporate the common sense rule that if an entity is the partner of a convicted criminal then it will be suspected of being a criminal into the embeddings based KB. Note that the score of the hypothesis fact $(r'', (x, e''))$ should be high if the antecedent facts have high score for any possible entity $y$. A natural way in which I can incorporate such a rule into score based KBC models is by ensuring that $\score(r'', x, e'') \ge \max_{y \in \cE} \min(\score(r, x, y), \score(r', y, e'))$. In order to derive efficient constraints that can enforce this inequality I now strengthen the constraint imposed on the score function by replacing the $\min$ function in the lower bound to a convex combination of the scores, i.e. let $\lambda \in (0,1)$, I enforce the inequality that $\score(r'', x, e'') \ge \max_{y \in \cE} \lambda \score(r, x, y)  + (1-\lambda) \score(r', y, e')$ .

Since a convex combination of two values is greater than their minimum, this stronger inequality translates to the following constraint for model A: $\ip{\blde''}{\bldr_2''} - (1-\lambda)\ip{\blde'}{\bldr_2'} + \ip{\x}{\bldr_1''-\lambda \bldr_1} \ge \ip{\y}{\lambda \bldr_2 + (1-\lambda)\bldr_1'}$. Let $\ab = \frac{\bldr_1'' - \lambda \bldr_1 + \blde''}{\lambda}$, $\bb = -\frac{(1-\lambda)(\bldr_1' + \blde') + \lambda \bldr_1}{\lambda}$, $\cb= \frac{\ip{\bldr_2''}{\blde''} - (1-\lambda)\ip{\bldr_2'}{\blde'}}{\lambda}$, and let $\bE$ contain the set $\{\blde\mid e \in \cE\}$. For Model B, the above inequality on the score function leads to the the constraint: $\forall \x, \y \in \bE,\ \ip{\x}{\y} \le \ip{\x}{\ab}  + \ip{\y}{\bb} + \cb$. Remember that my goal is to devise a set $\bE$, and constraints on relation embeddings such that it is efficient to project onto it and for which the above inequality can be guaranteed. The following proposition shows how to construct such a set:
\begin{proposition}\label{prop:trans}
  Let $\x, \y$ be members of $\rp \cap B(\ab, ||\ab||)$ and $\ab \ge \Z$   then $\ip{\x}{\y} \le \ip{\x + \y}{\ab}$.
\end{proposition}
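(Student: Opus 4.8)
The plan is to reduce the statement to two elementary facts: the geometric meaning of membership in the ball $B(\ab,\|\ab\|)$, and the arithmetic–geometric mean inequality applied coordinatewise. No deep machinery is needed; the whole content is choosing the right way to rewrite the ball constraint.

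First I would unpack the hypothesis $\x \in B(\ab,\|\ab\|)$. By definition this means $\|\x - \ab\| \le \|\ab\|$; squaring and expanding gives $\|\x\|^2 - 2\ip{\x}{\ab} + \|\ab\|^2 \le \|\ab\|^2$, so that $\|\x\|^2 \le 2\ip{\x}{\ab}$. The identical computation applied to $\y$ yields $\|\y\|^2 \le 2\ip{\y}{\ab}$. In particular this already forces $\ip{\x}{\ab}\ge 0$ and $\ip{\y}{\ab}\ge 0$. The one point to be careful about is that the radius in the statement is exactly $\|\ab\|$, which is precisely what makes the $\|\ab\|^2$ terms cancel and leaves the clean quadratic bound; this is the only ``idea'' in the argument.

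Next I would bound $\ip{\x}{\y}$ from above. Writing $\x=(x_1,\dots,x_d)$ and $\y=(y_1,\dots,y_d)$, the inequality $x_iy_i \le \tfrac12(x_i^2+y_i^2)$ holds for every index $i$ (it is just $(x_i-y_i)^2\ge 0$, i.e. AM–GM), so summing over $i$ gives $\ip{\x}{\y} \le \tfrac12\bigl(\|\x\|^2+\|\y\|^2\bigr)$; equivalently one may invoke Cauchy–Schwarz followed by $ab\le\tfrac12(a^2+b^2)$. Chaining the two bounds then finishes the proof: $\ip{\x}{\y} \le \tfrac12\bigl(\|\x\|^2+\|\y\|^2\bigr) \le \tfrac12\bigl(2\ip{\x}{\ab}+2\ip{\y}{\ab}\bigr) = \ip{\x}{\ab}+\ip{\y}{\ab} = \ip{\x+\y}{\ab}$, as claimed.

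Since this derivation never uses $\x,\y\ge\Z$ or $\ab\ge\Z$, I would remark that those nonnegativity hypotheses are not strictly required for the inequality itself; they are kept because in the property-transitivity construction the feasible set $\bE$ must additionally lie in $\rp$ (together with $\ab\ge\Z$) so that the accompanying orthant constraints on the entity and relation embeddings hold simultaneously. There is no genuine obstacle to overcome here — the only care needed is the correct translation of ``$L_2$ ball of radius $\|\ab\|$ centered at $\ab$'' into $\|\x\|^2\le 2\ip{\x}{\ab}$.
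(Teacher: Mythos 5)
Your proof is correct. The paper states Proposition~\ref{prop:trans} without giving a proof, so there is nothing to compare against; your argument is the natural one and is complete: the radius-$\|\ab\|$ ball condition gives $\|\x\|^2 \le 2\ip{\x}{\ab}$ and $\|\y\|^2 \le 2\ip{\y}{\ab}$, and combining with $\ip{\x}{\y} \le \tfrac{1}{2}\left(\|\x\|^2 + \|\y\|^2\right)$ yields $\ip{\x}{\y} \le \ip{\x+\y}{\ab}$. Your side remark is also accurate: the inequality itself never uses $\x,\y \in \rp$ or $\ab \ge \Z$; those hypotheses matter only because the surrounding construction needs $\bE$ to sit inside the nonnegative orthant so that the other constraints in Table~\ref{tab:proptrans} can be enforced simultaneously.
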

The above proposition shows that if $\ab=\bb$ and $\cb \ge 0$ then by setting $\bE = \rp \cap B(\ab, ||\ab||)$ I can satisfy the above constraints.
%% Unfortunately, I were not able to drive suitable constraints for models C and %% D. See \tabref{tab:entailment}

\begin{table}[htbp]
  \centering
  \begin{tabular}{c | r | l }
    Rule & Model & {Constraints} \\\hline\hline
    \multirow{2}{*}{$\implication(r, r')$}& A, \AR , B &  $\bldr\le \bldr'$ \\
    & C, D &  $\bldr\le \bldr'\le -\bldr$\\\hline\hline
    \multirow{3}{*}{$\rrelimpl(r,r')$}& A,B & $\bldr_2' \ge \bldr_1$, $\bldr_1' \ge \bldr_2$\\
    & C, D & $\bldr_1 \le \bldr_2' \le -\bldr_1$, $\bldr_2 \le \bldr_1' \le -\bldr_2$.\\
    & R & $\textit{matrix}(\bldr')\ge \textit{matrix}(\bldr)$
    \end{tabular}
  \caption[Sufficient constraints for enforcing implication.]{Constraints sufficient for enforcing $\implication(r, r')$ and $\rrelimpl(r,r')$
    The constraint $\blde\ge \Z \forall e \in \cE$ applies for all models.
  $\textit{matrix}$ is the inverse of the operation that converts a matrix to a vector by concatenating its columns. I.e. $\textit{matrix}(\bldr)$ denotes the matrix
    form of the vector $\bldr$.}
  \label{tab:impl}
\end{table}

\begin{algorithm}[H]
  \floatname{algorithm}{Alg.}
  \caption[Projected SGD Algorithm]{Projected SGD for Model A, Batch Size=1}\label{alg:tellmain}
  \begin{algorithmic}%[1]
    \State Given: $\cF, \cFc, \cL$. Hyperparameters: $\alpha, \eta, \rS$.
    \For{each fact $f \in \cF$}
    % \State Mark $f$ as seen.
    \For{$\rS$ steps}
    \State Sample $\fc = (\bar{t}, \bar{r})$ from $\cFc$
    % \State Set $f = f'$ with $p=0.5$,
    % \State $\qquad$or sample $f$ from seen facts
    \State Let $\vvv = \sigma(\ip{\rc}{\tc} - \ip{\bldr}{\bldt})$
    \State \LeftComment{Fix $\blde$ and optimize ${J}$}
    \State {$\grad{\bldr}{\Jr} = {-\bldt}\vvv, \ \grad{\rc}{\Jr} = {\tc}\vvv$}
    \State $(\bldr; \rc) {\leftarrow}$\textit{proj}
           $_{\cL}\left((\bldr; \rc) - \eta ((\grad{\bldr}{\Jr}; \grad{\rc}{\Jr}) + 2\alpha (\bldr; \rc))\right)$
    \State \LeftComment{Fix $\bldr$ and optimize ${J}$}
    \State {$\grad{\bldt}{\Je} = {-\bldr}\vvv, \ \grad{\tc}{\Je} = {\rc_1}\vvv$}
    \State $(\bldt; \tc) {\leftarrow} $\proj$_{\cL}\left((\bldt; \tc) - \eta ((\grad{\bldt}{\Jr}; \grad{\tc}{\Jr}) + 2\alpha (\bldt; \tc))\right)$
    \EndFor
    \EndFor
  \end{algorithmic}
\end{algorithm}

\subsubsection{Type Implication}
\label{sssec:type-implication}
A type implication rule, denoted as $\typeimpl(r, e, r')$, specifies that if the fact $(r, (x, y))$ is correct then $(r', (x, e))$ is also correct $\forall (x, y) \in \cT$. In other words, this rule enforces that positional arguments of a relation possess certain properties. For example, the rule \typeimpl(\twr{Husband of, Man, Gender}) can enforce that if my KB predicts the fact that \twr{(Husband of, (Don, Mel))} then it also predicts that \twr{(Gender, (Don, Man))}.

Under model A the $\typeimpl(r, e, r')$ rule translates to the following inequality for the parameters $\ip{\x}{\bldr'_1} - \ip{\x}{\bldr_1} \ge \ip{\y}{\bldr_2} -\ip{\blde}{\bldr'_2} \forall (x, y) \in \cT$. Let $\ab = \blde+ \bldr_1' - \bldr_1$, $\bb=-\bldr_2$ and $\cb=\ip{\bldr_2'}{\blde}$. Under model B, the restriction on the score function translates to: $\ip{\x}{\y} \le \ip{\ab}{\x} + \ip{\bb}{\x} + \cb$. The analysis for this case again relies on Propostion~\ref{prop:trans} and the analysis for models C and D is yet out of reach. See \tabref{tab:entailment}.

\begin{table}[htbp]
  \centering
  \begin{tabular}{c | l}
    {Model} & \multicolumn{1}{c}{Constraints} \\\hline
    A& $\bldr_1' \ge \bldr_1$, $\ip{\bldr_2}{\blde} \le \ip{\bldr_2'}{\blde'}$\\[1ex]\hline
    B& $\bldr_1' \ge \bldr_1 + \blde- \blde'$, $\ip{\bldr_2}{\blde} \le \ip{\bldr_2'}{\blde'}$\\[1ex]\hline
    C& $\begin{aligned}
      &\bldr_1 \le \bldr_1' \le -\bldr_1, \bldr_2 - \blde\le \bldr_2' - \blde',\\
      &\bldr_2'  + \bldr_2 \le \blde'+ \blde
    \end{aligned}$ \\[2ex]\hline
    D& $\begin{aligned}
      &\bldr_1 - \bldr_1' \le \blde'- \blde,\; \blde\ge \blde', \bldr_1 \le \bldr_1' \le -\bldr_1,\\
      &\bldr_2 - \blde\le \bldr_2' - \blde',\; \bldr_2'  + \bldr_2 \le \blde'+ \blde\\
    \end{aligned}$
  \end{tabular}
  \caption[Sufficient constraints for entailment.]{Sufficient constraints for $\entail_B(r, e, r', e')$. The constraint $\blde\ge \Z \forall
  e \in \cE$ applies for all models.}
  \label{tab:entailment}
\end{table}

\begin{table}[htbp]
  \centering
  \begin{tabular}{c | r | l}
    Rule & Model & \multicolumn{1}{c}{Constraints}\\\hline
    \multirow{5}{*}{$\proptrans$} & A & $\begin{aligned}\bldr_1'' \ge \lambda \bldr_1, \lambda \bldr_2 + (1-\lambda)\bldr'_1 \le \Z\\ \ip{\blde''}{\bldr_2''} \ge (1-\lambda)\ip{\blde'}{\bldr_2'}\end{aligned}$\\[1ex]\cline{2-3}
& B & $\begin{aligned}\bldr_1'' + \blde'' + (1-\lambda)(\bldr_1' + \blde')=0,\\
                   {\ip{\bldr_2''}{\blde''} \ge (1-\lambda)\ip{\bldr_2'}{\blde'}}, \text{and}\\
                   \ab \ge \Z, \forall x \in \cE,\ \x \in \rp \cap B(\ab, ||\ab||)
    \end{aligned}$\\\hline\hline
    \multirow{3}{*}{$\typeimpl$}
    & A & $\bldr'_1 \ge \bldr_1$, $\ip{\blde}{\bldr'_2} \ge 0$ and $\bldr_2 \le \Z$ \\\cline{2-3}
    & B & $\begin{aligned}
      \blde + \bldr_1' = \bldr_1 - \bldr_2,\ \ip{\bldr_2'}{\blde} > 0, \text{and}\\
      -\bldr_2 \ge \Z, \forall x \in \cE\, \x \in \rp \cap B(-\bldr_2, ||\bldr_2||)\end{aligned}$\\
  \end{tabular}
  \caption[Sufficient constraints for property-transitivity.]{Constraints for enforcing $\proptrans(r, r', e', r'', e'')$ and $\typeimpl(r, e, r')$.
    $\ab = \frac{\bldr_1'' - \lambda \bldr_1 + \blde''}{\lambda}$}
  \label{tab:proptrans}
\end{table}

\subsection{Evaluating Logical Deduction and KBC on WordNet}
\label{sec:experiment-elkb}
My method for training embeddings based KBC systems allows for a very interesting application for solving logical puzzles using an embedding based KBC system without using an external logical-symbolic subsystem. I perform a controlled experiment where I compare the performance of an embedding based KBC system trained with the constraints versus a system that has been trained without those constraints.

\textbf{Data} Consider the logical deduction problem shown in~\tabref{tab:prob1}. This is a simplified version of a logical puzzle presented in \newcite{russell1995modern}. In this puzzle, $\twr{Nono}$ is a country that possesses a ${WMD}$ and $\twr{Benedict}$ has traded with $\twr{Nono}$. The KB has to deduce whether $\twr{Benedict}$ is a criminal based on just two input facts and 3 rules. The total number of facts is $5^2 \times 4 = 100$.

\begin{table}[htbp]
  \centering
  \begin{tabular}{|l|}\hline
    \multicolumn{1}{|c|}{\textbf{Rules}}              \\\hline
    \implication(\twr{Trade With, Transact With})     \\
    \entail$_B$(\twr{Possess,{WMD},Considered,Enemy}) \\
    \tworowb[l]{\proptrans(\twr{Transact With, Enemy,}}%
    {\,\,\twr{Considered, Criminal, Considered})}     \\\hline
    \multicolumn{1}{|c|}{\textbf{Facts}}              \\\hline
    $(\twr{Possess,(Nono,WMD)})$                      \\
    $(\twr{Trade With,(Benedict,Nono)})$              \\\hline
    \multicolumn{1}{|c|}{\textbf{Query ?}}            \\\hline
    (\twr{Considered,(Benedict, Criminal)})          \\\hline
  \end{tabular}
  \caption[Example of a logical deduction problem]{\label{tab:prob1} A Logical Deduction Problem. Based on the rules and facts a KB should infer that \texttt{Benedict} is a \texttt{Criminal}.}
\end{table}

\begin{table}[htbp]
  \centering
  \begin{tabular}{l | ccc | ccc }
          & \multicolumn{3}{c|}{\underline{\;Baseline\;}} & \multicolumn{3}{c}{\underline{\;ELKB\;}} \\
        Model & P@10 & MRR & MAP & P@10 & MRR & MAP   \\\hline
    \;\;A     &0.00 &0.02& 0.01  & \textbf{0.20}\udg&\textbf{0.44}\udg& \textbf{0.83}\udg \\
    \;\;B     &0.00 &0.03& 0.03  & \textbf{0.17}\udg & \textbf{0.26}\udg& \textbf{0.35}\udg \\
  \end{tabular}
  \caption[Results for the ELKB method]{Table of Results. The baseline of \AR is equivalent to the RESCAL
    method.% and \CT is equivalent to TransE
    Bold marks that the average
    performance is higher. $\dagger$ implies that the difference is significant
    with two-tailed p-value $\le 0.005$ as measured by a matched pair t-test. \label{tab:results1}}
\end{table}

\textbf{Evaluation} I train two versions of two KBC systems, Models A and B, with batch size$=1$, $\alpha=0.001, \eta=0.1, S=200, d=50$, and $\std=25$ using Algorithm~\ref{alg:tellmain}. Both KBs were trained in one pass using the two training facts. The only difference was that the baseline system did not constrain the embeddings to obey logically derived geometric constraints. After training, I queried the KBs for the scores of all possible facts. I ranked all the facts based on their scores, excluding the training facts, and marked all facts that could be logically entailed from the two training facts as correct results and the rest of them as incorrect. I performed 10 runs, and in each run, I computed the MRR, P@10, MAP for the two models.
Finally, I averaged these quantities over 10 runs.%% \footnote{Since I am testing whether a
  %% logically entailed fact will be given higher score by a KBC system without
  %% explicitly generating entailed facts using a 3rd party system therefore we
  %% do not compare to other methods that additively augment the training objective
  %% with logically entailed facts.}

\textbf{Results}  \tabref{tab:results1} shows that my method was able to rank logically entailed facts with much higher precision and recall than the baseline systems. This validates my intuition that logical rules can be usefully incorporated into the parameter learning mechanism of a KBC system via simple geometric constraints even for low dimensional embeddings. The reason for the large improvement in performance by the ELKB system in comparison to the baseline is that the ELKB model makes the score of entailed facts higher than the score of non-entailed facts because of the constraints during learning. E.g. the scores of entailed facts such as \twr{(Considered,(Nono, Enemy))}, and \twr{(TransactWith,(Benedict, Nono))} are forced to be high in comparison to non-entailed facts such as \twr{(Trade With,(Benedict,WMD))}. In comparison, the baseline method does not have this systematic advantage, and its scores remain unchanged.

\subsection{Evaluating Link Prediction on WordNet}
In the link prediction task, the KBC system is given incomplete facts, with either a missing head entity or tail entity, i.e. given either $(r, (\_, e'))$, or $(r, (e, \_))$ the system has to predict $e$ or $e'$ respectively. I evaluated the utility of proposed constraints by comparing the performance of model A and model B trained with and without the constraints. I now present the results of my experiments on the WN18 knowledge graph,\footnote{I found that the performance of models $C, D$ and $\AR$ was too low therefore I do not report their results.} derived from WordNet, and released by~\newcite{bordes2013translating}, which is a popular testbed for KBC algorithms \cite{wang2014knowledge,lin2015learning,toutanova2015representing,yang2015embedding}.

\textbf{Data} The WN18 dataset comes with standard train, development and test splits. These three splits of the data contain 141442, 5000, and  5000 facts respectively. The total number of relations in the WN18 dataset is 18, and the number of entities is 40,943. Recently \newcite{guo2016jointly} publicly released a list of logical rules\footnote{\url{aclweb.org/anthology/attachments/D/D16/D16-1019.Attachment.zip}}
%get rid of this is I need the space which I directly incorporated into my framework. All of their rules were $\rrelimpl$ rules.%% \footnote{The complete list of 14

For all the models I fixed batch size$=10$, $\alpha=0.001, \eta=0.125, S=200, \std=100$, for model $\CT$, $d=100$ and otherwise $d=200$. Following existing work, I measured the MRR, HITS@3 and Hits@10 metrics and reported their average over the two tasks of head entity prediction and tail entity prediction. Instead of training in a single pass, I trained my models for 50 epochs on the WN18 dataset and chose the best parameters using early stopping on the validation set. In other words, I used the parameters from that epoch which performed the best on the validation set in terms of the HITS@10 metric. Finally, I combined the predictions of the best performing T model and the best performing system based on model B. In order to combine the two ranking systems I trained a logistic regression classifier using the default settings in vowpal webbit\footnote{\url{https://github.com/JohnLangford/vowpal_wabbit}} to first predict whether model T or model B will produce a better ranking and then output that system's ranking over entities for evaluation. My logistic regression classifier had 73\% accuracy on the training data and 70\% accuracy over the test data. By using this third system, I were able to create a single ranking system that performed better than model \CT which is very similar to the TransE model.\footnote{The main differences between model \CT and TransE are that TransE used hinge loss versus the BPR objective. TransE does not regularize the relation embeddings and forces the entity embeddings to lie on the unit sphere, instead in model \CT I add a quadratic regularization term to regularize the embeddings.}
\begin{table}
  \centering
  \begin{tabular}{c | c | l ll}
    Model & Project & MRR    & HITS@3   & HITS@10   \\\hline
    A     & No      & 0.0152 & 0.016 & 0.0330 \\
    A     & Yes     & 0.0238 & 0.03  & 0.0514 \\
    B     & No      & 0.0677 & 0.072 & 0.137  \\
    B     & Yes     & 0.241  & 0.283 & {0.50}  \\
    T    & No      & {0.311}  & {0.412} & {0.66}\\
    B$^{\scriptstyle\text{project}}$+T  & --      & \textbf{0.367}  & \textbf{0.475} & \textbf{0.708}
  \end{tabular}
  \caption[Comparative Results for ELKB models]{\label{tab:comp} MRR, HITS@3 and HITS@10 of the
constrained and unconstrained versions of models A, B and unconstrained T.
B+T reports the results of combining models B and model T.}
\end{table}

\textbf{Results} \tabref{tab:comp} shows that both the constrained and unconstrained versions of model A perform quite poorly.  This is to be expected since model A scores a triplet $(r, (e, e'))$ as $\ip{\bldr}{\blde} + \ip{\bldr}{\blde'}$. Regardless of $e'$, the ranking produced by the model will remain the same. Therefore model $A$ is unsuitable for this task, for similar reason model C is also an unsuitable model. However, the drastic improvement in the performance of model $B$ when it is trained according to the constraints corresponding to the $\rrelimpl$ rules demonstrates the utility of my proposed constraints. After adding the constraints, the MRR increased almost 3 times and the value of HITS@10 by 4 times from $0.137$. Recall that at test/inference time the constraints do not play any role, so the only role of the constraints is as a form of regularization on the parameters of the model.

\subsection{Discussion}
\label{sec:disc}
It is instructive to look at a few examples of the predictions that model B makes and to compare them to the predictions made by model T. \tabref{tab:diff} compares the top 5 predictions of constrained model B with the predictions from  model T for the input $(\texttt{hypernym}, (\_, \texttt{flowering shrub NN\_1}))$. The true answer is \texttt{poinciana\_gilliesii\_NN\_1} so the model \CT achieves a reciprocal rank of 1 for this example, but constrained model B is not able to rank the right answer within the top 5 answers. This list of answers shows that model B ranks those answers higher that are similar to \texttt{flowering shrub NN\_1}, but it is not able to properly use the relation information. However, by properly combining the models, I can improve the performance of the overall system.
\begin{table}[htbp]
  \centering
  \begin{tabular}{| c | c |}\hline
    B & T \\\hline
\text{flowering\_shrub\_NN\_1}      &    \text{poinciana\_gilliesii\_NN\_1}   \\
\text{genus\_caesalpinia\_NN\_1}    &    \text{mysore\_thorn\_NN\_1}          \\
\text{shrub\_NN\_1}                 &    \text{flowering\_shrub\_NN\_1}       \\
\text{tree\_NN\_1}                  &    \text{pernambuco\_wood\_NN\_1}       \\
\text{rosid\_dicot\_genus\_NN\_1}   &    \text{caesalpinia\_bonducella\_NN\_1}\\\hline
\end{tabular}
\caption[Error analysis of logically constrained model]{\label{tab:diff} A comparison of the top 5 predictions of constrained model B with the predictions from  model T for the input $(\texttt{hypernym}, (\_, \texttt{flowering shrub NN\_1}))$. }
\end{table}

\section{Conclusion}
\label{sec:conclusion}

I have presented a novel method for incorporating logical constraints into an embedding based knowledge base by constraining the parameters of a KB. My experiments on a small logical deduction problem, and on WordNet, indicate that my ideas of imposing geometric constraints on embeddings for enforcing logical rules are sound and that they can improve the generalization of models that are hard to train otherwise. Although the KBC models A, B, C and D do not perform as well as existing models trained without constraints such as TransE,  I show that they can be used as part of a combination of systems to improve upon existing methods.

%auto-ignore
% Yesterday the spin was that I have written a hypothesis and that the two experiments give me conclusive answers regarding which method is better for which application. And that I have better error analysis in the case of entity linking regarding what the shortcomings of VAE.
\chapter{Comparative Experiments}
\label{cha:more}

In this chapter, I present experiments on the downstream tasks of Coreferent Mention Retrieval and Entity Linking using the MVLSA and Variational Autoencoder representation learning methods.

\textit{The Coreferent Mention Retrieval (CMR) task}, is an information retrieval task, in which the system receives a query sentence mentioning an entity, and the goal is to retrieve sentences containing coreferent mentions of that entity. A user may use a CMR system to find more mentions of an entity when performing an exploratory task over a corpus containing information about entities. The CMR task is a special case of the well-studied problem of {Cross-Document Coreference Resolution}~\cite{bagga1998entity,mayfield2009cross} -- in which the system has to cluster all mentions of all entities  -- in which, unlike Cross-Doc Coref. Rather than operating on the entire mention graph, the system uses retrieval techniques to limit its focus to an implicit subgraph anchored by the given query mention.

Recently, \cite{sankepally2018test} introduced the CMR task and introduced a new dataset for this task. In this chapter, I compare a number of unsupervised representation learning methods to learn representations of mentions. I compare LSA, MVLSA, and Variational Auto Encoder based approaches and demonstrate that these features can improve the performance of a strong information retrieval system.

\textit{The Entity Linking task} is the task of automatically annotating spans of words in natural language texts that mention an entity with the coreferent entity. Entity linking is also sometimes called entity disambiguation to distinguish it from the task of jointly detecting the span of words mentioning an entity and the linking the span to an entity. In this chapter, I focus exclusively on the task of linking a given span in an unstructured plain text document to an entity in a Knowledge Base.

The ACE corpus~\cite{ace05} contains a wide range of documents from varying genres such as newswire and online newsgroups. The entire corpus is annotated with the mention boundaries, coreference information between mentions, the semantic type of each mention, and finally, the entity links for each mention which were added by \cite{bentivogli2010rte6}. The semantic type of an entity can be from one of seven classes: \texttt{Person, Organization, GPE, Location, Facility, Weapon}, and \texttt{Vehicle}. The entity links from mentions to a canonical Wikipedia URL are absent for the \texttt{Weapon}  and \texttt{Vehicle} classes. I compare LSA, MVLSA and Variational Auto Encoder based approaches for entity linking and I observe how these unsupervised features learned can improve entity linking performance.
\subsection{Hypothesis}
\label{sec:more-hyp}
Based on the experiments and discussion in chapters~\ref{cha:nvse} and \ref{cha:mvlsa} we hypothesize that the NVSE method which is based on the Variational autoencoder framework will outperform the spectral method based MVLSA method for the CMR task. One of the reasons for this is that the MVLSA method requires access to a large number of disparate views which are not readily available for the CMR task.

Due to similar reason, we believe that the VAE based method will be better than MVLSA for the task of entity linking. In fact, we skip testing the MVLSA method in a head-to-head comparison with NVSE on the entity linking task and focus on comparing the VAE based method to a state-of-the-art entity embedding method based on max-margin learning.
\section{Coreferent Mention Retrieval}
\label{sec:cont-ment-retr}

The CMR dataset released by \cite{sankepally2018test} was constructed using the TAC-KBP2014 Entity Discovery and Linking dataset~\cite{ji2014overview} which is available from the LDC as \texttt{LDC2014E54, LDC2014E13}. I will refer to this collection as TAC14. This dataset contains newswire documents, annotated mentions of entities in those documents, and some entity links between those mentions and canonical URLs of entities in Wikipedia. \cite{sankepally2018test} used $84$ mentions as input queries from TAC14. A subset of the documents from TAC14 collections was chosen for retrieval as follows: First, the earliest and latest dates for the documents from which the query mentions were selected were determined. Then, those documents whose dates did not fall between these dates were filtered out. This reduced the size of the retrieval set from $1$ million to $117,132$ documents. Given a query mention -- out of the $84$ mentions -- the goal was to find all co-referent mentions for that query out of $117,132$ documents. Since the TAC14 collection has fairly sparse mention, annotations, therefore, \citeauthor{sankepally2018test} used the Amazon Mechanical Turk platform to obtain additional relevance judgments for a set of candidate mentions. A total of $4,172$ relevant mentions were collected in this way. %% See \citeauthor{sankepally2018test} for a detailed explanation of the data collection protocol on Mechanical Turk and the inter-annotator agreement rates.

\cite{sankepally2018test} measured the performance of a system --  at the level of individual sentences -- using the Mean Inferred Average Precision metric~\cite{yilmaz2006estimating}. The average of infAP over all queries is Mean infAP, and analogously the average of inferred AP is mean Inferred AP. Inferred Average Precision is a refined version of the Average Precision metric that accounts for the fact that some of the results that are returned by a system may actually be relevant but may have been skipped by judges during manual annotation. Inferred Average Precision metric also assumes that the top-K results returned by a system have perfect recall. I briefly explain how inferred Average Precision~(infAP) is computed for a query. Let us first recall how Average Precision (AP) is computed. Assume that a retrieval system returns a ranked list of documents, for each document in the ranked list I evaluate whether the document is relevant or not. This gives us a sequence of binary values $(e_i)_{i=1}^{N}$ where $e_i$ is the binary relevance of the $\ath{i}$ result. The average precision is computed as

\[
\sum_{i=1}^{N} \frac{e_i}{N} \frac{\sum_{j=1}^i e_j}{i}
\]

Note that ${\sum_{j=1}^i e_j}/{i}$ is the precision at the $\ath{i}$  position. \cite{yilmaz2006estimating} showed that the above metric could be considered as an expectation of the following random experiment.

\begin{algorithm}
  \begin{algorithmic}[1]
    \State \textbf{Input:} A list, $L$, of binary relevance values, And a map, $M$, of relevant documents to their rank in $L$ or if a relevant document is not present in $L$ then the default value is $L$.
    \State \label{alg:ap:2}Select a relevant document at random from the keys of $M$ and let its associated rank be $R$.
    \State \label{alg:ap:3}Select a rank $r$ uniformly at random from the set $\{1, \ldots, R\}$.
    \State \label{alg:ap:4}Output the binary relevance of the document at rank $r$.
  \end{algorithmic}
  \caption{The Average Precision Random Experiment.\label{alg:cmr-datas-eval}}
\end{algorithm}

Steps~\ref{alg:ap:3} and \ref{alg:ap:4} effectively compute the precision at a relevant document, and Step~\ref{alg:ap:2} has the effect of weighted averaging over all documents, weighted by the relevance of a document. The inferred Average Precision metric changes steps~\ref{alg:ap:3} and \ref{alg:ap:4} to compute precision at rank $R$  more robustly when all the relevant mentions are not available in the retrieved set.

\subsection{Experiments}
\label{sec:cmr-experiments}

I performed mention retrieval experiments on the CMR dataset in a query-by-example setting. I first learned representations of un-linked mention spans of named entities using methods such as MVLSA and Variational Autoencoders, and then I used these features to re-rank the top-100 results returned by the \textsc{Lucene}~\cite{mccandless2010lucene} Information Retrieval software. In the following sections, I explain the pre-processing steps performed before doing Lucene retrieval and the features used to learn mention representations, and finally how the mention representations were used for the final mention retrieval.

\subsubsection{Mention Featurization}
\label{sssec:cmr-mf}
A mention is a span of words inside a sentence that refers to an entity.\footnote{Such Spans are also called named-spans, but I will use the term \textit{mentions} throughout.} Such spans are already marked in the TAC14 collection~\cite{ji2014overview}. Given all the mentions in the corpus, my first step is to obtain the raw features for each mention-span in the corpus. I followed the best-performing pre-processing method from ~\cite{sankepally2018test} for the following steps.

I created two sets of features for each mention.\footnote{These feature sets are also called fields in the Information Retrieval literature.} The first field of features -- called the \textit{mention field} -- uses the mention words, and the second field -- called the \textit{document field} -- is built upon the background document text for representing candidate mentions. Each field contains binary and real-valued features. The binary features are constructed by counting the occurrence of the mention string, the mention type, trigrams of mention string, and an acronym of the mention. The acronym feature was created by concatenating the first alphabetic character of each mention word. All English stop words such as ``the'', ``an'', ``of'' were removed before constructing the features.\footnote{The \textsc{Lucene} StandardAnalyzer filters the lowercased and normalized output of a grammar-based tokenizer which implements the word-break rules from the Unicode Text Segmentation algorithm~\cite{davis2018unicode} using a list of English stop words. The normalizer removes the 's at the end of words and dots in acronyms.} The real-valued features were constructed from the mention words, words from the surrounding sentence and top-scoring words from surrounding document and words in the coreference chain of the mention.\footnote{The coreference chain was extracted using the Stanford CoreNLP coreference resolution system.} All real-valued features in both the fields were weighted using the BM25 weights\footnote{See \S~\ref{ssec:bm25} for details about BM25.} but the binary features were not weighted using BM25.

After obtaining the features, I index them using the \textsc{Lucene} library. At query time I first retrieve the top-100 mentions from Lucene along with their scores as computed by Lucene. For representing the query, I only used the mention words in the query and projected these query features using \textsc{Lucene}'s multi-field Query Parser. I.e., I duplicated the mention words across both the mention field and the document field. The \textit{document field} was given a weight of $w$ and the \textit{mention field} was given a weight of $1.0$.

For example, the query \textit{Keith Wiggans} is represented as weighted bag-of-words features spread across two fields with associated weights as shown below:\\
\verb#mention_keith:1.0, document_keith:0.1, mention_wiggans:1.0, document_wiggans:0.1#

\subsubsection{Learning Entity Embeddings}
\label{sssec:lee}
As mentioned earlier I experimented with MVLSA and the Variational Autoencoder method described in \charef{cha:nvse}. I conducted experiments on a CMR dataset by \cite{sankepally2018test} and evaluate the inferred Average Precision metric.
The results of the evaluation are shown in \tabref{tab:cmrres} and we can conclude the following from the results:
\subsection{Results and Discussion}\label{sec:cmr-results}
MVLSA m is the intermediate dimensionality, k is the final dimensionality.
\begin{table}[htbp]
  \centering
  \begin{tabular}{p{3cm} p{2cm} l p{1cm} p{1cm}}
  \multicolumn{2}{c}{\textbf{Method}} & \textbf{Hyper-parameters} & \textbf{InfAP} & \textbf{Ensemble}\\\hline
  \multirow{2}{3cm}{\cite{sankepally2018test}}&
   & $w=0.01$                         & 54.26 &\\
  && $w=0.01$                         & 54.53 &\\\hline
  \multirow{3}{3cm}{LSA of Concatenated Views}&
   & dim$=20$                         & 34.08 &50.11\\
  && dim$=300$                        & 40.08 &50.32\\
  && dim$=500$                        & 38.08 &49.39\\\hline
  \multirow{4}{3cm}{Multiview LSA}&
   & $m=500$,$k=500$                  & 37.08 &-\\
  && $m=500$,$k=1000$                 & 36.65 &-\\
  && $m=500$,$k=300$                  & 36.51 &-\\
  && $m=256$,$k=300$                  & 35.08 &48.95\\\hline
\multirow{14}{3cm}{Variational Autoencoder}
  &\multirow{3}{2cm}{Multilabel Decoder}
   & $lbv=0,encdim1=150,\beta=1$      & 36.35 &-\\
  && $lbv=1,encdim1=150,\beta=1$      & 39.55 &-\\
  && $lbv=1,encdim1=300,\beta=1$      & 40.99 &-\\\cline{2-4}
  &\multirow{11}{2cm}{Multinomial Decoder}
 & $\nepoch=5,\beta=1$,               & 50.25 &55.13\\
&& $\nepoch=5,\beta=1$, +$lbv=0$      & 46.29 &-\\
&& $\nepoch=5,\beta=1$, +deduplicate  & 50.25 &-\\
&& $\nepoch=0,\beta=1$                & 50.74 &54.05\\
&& $\nepoch=40,\beta=1$               & 50.29 &55.04\\
&& $\nepoch=5,\beta=1$, +smooth=$0.5$ & \textbf{50.91} &55.27\\
&& $\nepoch=2,\beta=0.0,smooth=0.5$   & 49.67 &\textbf{55.80}\\
&& $\nepoch=2,\beta=0.2,smooth=0.5$   & 49.31 &55.42\\
&& $\nepoch=2,\beta=0.4,smooth=0.5$   & 49.62 &55.53\\
&& $\nepoch=2,\beta=0.6,smooth=0.5$   & 48.83 &55.14\\
&& $\nepoch=2,\beta=0.8,smooth=0.5$   & 50.34 &54.69\\
%% 7L1TmnE00ED300        & 49.92 \\
%% 7L1TmnE00ED600        & 50.52 \\
%% 7L1TmnE00ED600VD600   & 50.15 \\
%% 7L1TmnMedrun          & 49.46 & 54.87\\
%% 7L1TmnE00ED300Mean        54.61
%% 7L1TmnE00ED600Mean        54.67
%% 7L1TmnE00ED600VD600Mean   54.63
%% Co7L1TmnMean              55.00
%% Co7L1TmnLrSlowerMean      52.60
%% Co7L1TmnXActLinMean       54.55
%% Co7L1TmnLVActLinMean      55.18
%% Co7L1TmnAmsMean           54.46
%% Co7L1TmnBeta1HighMean     55.27
%% Co7L1TmnLVActLinBeta1HighKLAN4           43.33
%% Co7L1TmnLVActLinBeta1HighKLAN4Mean       53.66
%% Co7L1TmnLVActLinBeta1HighED300KLAN4      41.86
%% Co7L1TmnLVActLinBeta1HighED300KLAN4Mean  52.69
%%               Epoch0 Epoch1 Epoch2
%% DA1     35.70  37.18  37.88  35.70
%% DA1Mean 51.70  52.80  50.54  51.70
%% Basically using Dawen's architecture did not work at all.
%% Co7L1TmnLLMNOM             29.20
%% Co7L1TmnLLMNOM (negative)  38.61
%% Co7L1TmnLLMNOM (pwmean)    38.63
%% DA1LLMNOM                  38.63
\end{tabular}
\caption[Contextual mention retrieval results]{Results of Different Unsupervised Representation Learning Algorithms on the Contextual Mention Retrieval Task.}
\label{tab:cmrres}
\end{table}
%%TODO Requires much more than this.  If this is the concluding chapter where the methods are being put head to head, then you need to have some real error analysis: what mistakes does each approach make?  Are they the same kinds of mistakes or different?  What might be improved in the future?
\subsubsection{Performance Comparison between MVLSA and VAE} The highest performance of the Variational Autoencoder is $50.91$ infAP points individually which is far superior to the best performance of achieved by MVLSA of $37.08$. Clearly the non-linearity of the encoder and optimizing the ELBO objective is helping the VAE extract more useful features from the raw bag-of-words representation.

\subsubsection{The influence of Decoder Type on VAE}
The decoder type choice for the VAE exerts significant influence on the performance of the VAE. The best performance with the multilabel decoder for the VAE is $40.99$ which increases by $10$ absolute points to $50.91$ in the case of the multinomial decoder. This shows us that the multinomial decoder is the better choice for encoding high-dimensional sparse bag-of-words representations of text documents.

\subsubsection{The effectiveness of VAE training}
An interesting observation is the relatively small improvement in the individual performance of the VAE from $50.74\%$ infAP to $50.91\%$ infAP after the multinomial decoder is trained for $5$ epochs. However, the improvement in the ensemble-performance is larger from $54.05\%$ to $55.80\%$ which is almost a $2\%$ absolute improvement in inferred average precision.

\section{Named Entity Disambiguation}
\label{sec:named-entity-disamb}
The goal of Named Entity Disambiguation (NED) is to link a detected name-mention in a text document to an entity in a knowledge graph (KG). See \citet{hoffart11} for an overview of the NED task and survey of approaches circa 2011. See \cite{bollacker2008freebase,dong2014knowledge} for an introduction to knowledge graphs.

More recently, unsupervised representation-learning approaches -- such as Word2Vec~\cite{mikolov2013distributed}, Paragraph Vectors~\cite{le2014distributed}, and BERT~\cite{devlin18} -- have become popular for language processing tasks. Simultaneously systems that learn low-dimensional and dense \textit{Entity Embeddings} were proposed for the NED task by \citet{he13,yamada16,fang16,zwic16,yamada17} and \citet{ganea17}. In light of the greatly increased research into entity-embeddings, and because of the sustained interest in solving the NED task, the investigation -- presented in this paper -- of the effects of different entity embedding methods on NED accuracy will be useful.

One of the current best NED systems is the document-level joint-inference neural model proposed by \citealp{ganea17}. This model roughly operates along the following three steps:

\textbf{In Step One}: A max-margin objective is optimized -- independently for each entity -- to learn an entity embedding from its description page and the tokens in a fixed size window surrounding its mentions. The optimization procedure itself is inspired by Word2Vec and relies on negative sampling. The entity embeddings are learned separately, and then they are frozen.

\textbf{In Step Two}: For each mention, a list of top-$k$ candidate entities are re-scored using a \textit{local} neural network which receives the mention its context and the entity embeddings as input.

\textbf{Finally, in Step Three}: a document level joint-inference procedure\footnote{Specifically loopy-belief propagation over a full-connected factor graph with pairwise potentials is used for joint inference.} is used to determine which entity is referred by which mention.

Although \citealp{ganea17} quantified the downstream impact on the NED accuracy of using a \textit{global} joint-inference model in comparison to a \textit{local} NED model, the downstream impact of the entity embedding method in step one on the final NED accuracy remains unclear. Therefore, in this paper, we quantify the effect of different pre-training objectives and different types of input contexts on NED accuracy.

\subsection{Entity Embedding: Methods and Data}
\label{sec:mee}
Let $\mathcal{E}$ denote the set of entities. Abstractly an entity embedding is a map, say $\e: \mathcal{E} \to \R^d$. $d$ is typically chosen to between $100$ to $500$ based on cross-validation with the most common choice being $300$. Qualitatively, an entity's embedding should be discriminative amongst homonym entities such as a \textit{(river) bank} and a \textit{(financial) bank}, and it should be similar to the combined representation of content words that co-occur with its mentions.

\subsubsection{Data Sources}
\label{sec:data-sources}
Mainly two sources of data have been used in previous work for learning $\e$:
\textbf{The first data-source} is the text in the \textit{canonical} page that describes an entity. For example, the Wikipedia page for \textsc{Amarchy} defines the concept of anarchy. It mentions that \textsc{Anarchy} is a type of \textit{political philosophy} which \textit{rejects hierarchy}.
Clearly, embedding the content words that appear on the canonical page of an entity into a low-dimensional dense feature vector can help us describe an entity succinctly. We denote this type of data in general as $\mathcal{U}_1$.

\textbf{The second data-source} -- which may not exist in some cases -- consists of tokens surrounding the mentions of an entity. For example, the Wikipedia page for \textsc{Alexander Grothendieck}, the famous algebraic geometer, mentions that \textit{Grothendieck was born in Berlin to [anarchist](Anarchism) parents $\ldots$}\footnote{[description](link) is the markdown hyperlink syntax.}. By optimizing the entity embedding for \textsc{Anarchy} to be similar to the representations of the tokens surrounding the word \textit{anarchist} in this sentence, we can disambiguate the political philosophy of Grothendieck's parents. We denote this data-source as $\mathcal{U}_2$. Take note that this data-source is not truly unlabeled, and it may not exist in many practical applications. Especially in \textit{cold-start} situations~\cite{tac2015cold,rastogi2017vertex} large manually hyperlinked entity corpora do not exist. Therefore entity-embedding models that can operate with or without entity mentions are desirable. If $\mathcal{U}_2$ is available then the total data is $\mathcal{U} = \mathcal{U}_1 \cup \mathcal{U}_2$ otherwise $\mathcal{U} = \mathcal{U}_1$.

In addition, it is useful to quantify the benefit of entity mentions on well-studied NED task such as \textit{wikification}~\cite{mihal07,ratinov11} where large quantities of hyperlinked mentions are readily available, so that we can quantify how much may we gain by annotating this information.

\subsubsection{Methods}
\label{sec:methods}

Let $\mathcal{W}$ denote the word vocabulary of our entity embedding system. Contemporary approaches for learning entity embeddings~\cite{yamada17,ganea17} start with a pre-trained word embedding map $\w : \mathcal{W} \rightarrow \R^d$.\footnote{Typically the dimensionality of the word embedding map $\w$ and the entity embedding map $\e$ is kept the same to reduce the number of free hyper-parameters and to map words and entities to the same vector space. We follow the same convention in this work.} For example, if the word embeddings of \textit{philosophy} and \textit{theory} are similar and the embeddings for \textit{rejects} and \textit{shuns} are similar then an NED system could correctly disambiguate a \textit{political theory that shuns hierarchy}. It is folk-knowledge that using a pre-trained embedding such as Word2Vec pre-trained vectors~\cite{mikolov2013distributed} improves performance, but the downstream impact on NED -- to the best of our knowledge -- has not been reported in previous work.

\paragraph{Max-Margin Entity Embedding}
\label{sssec:mm}
The word embedding map $\w$ and unlabeled data $\mathcal{U}_{\{1,2\}}$ can be used to learn $\e$ in many ways. However, Word2Vec inspired objectives have dominated other approaches in the context of NED research \cite{yamada16,ganea17,yamada18}. In this paper, we focus on a state-of-the-art model for entity disambiguation proposed by \citealp{ganea17} which minimizes a max-margin loss for learning entity embeddings.

\citealp{ganea17} motivate their objective as follows: Let $\mathcal{U}_e$ denote the portion of training data containing all the words that co-occur with $e$. Let $p(w | e)$ denote a conditional-multinomial distribution of words that occur in $\mathcal{U}_e$. $p(w|e)$ is estimated from empirical counts ${\#(w, e)} / {\sum_{w\in \mathcal{U}_e} \#(w, e)}$.  Next, let $q(w)$ be an unconditional probability distribution. \citealp{ganea17} define $q(w) = {p}(w)^{\alpha}$ for some $\alpha \in (0,1)$ where ${p}(w) \propto \#(w)$ in $\mathcal{U}$. Let $w^+, w^-$ be two random variables sampled from $p(w|e)$ and $q(w)$ respectively. Careful readers will have noticed that the setup so far is the same as \cite{mikolov2013distributed}. Indeed, this is why we said that these models are ``Word2Vec inspired''.

The novelty of \citealp{ganea17} is that they optimize the max-margin objective in \eqref{eq:ganea} instead of the logistic-type loss defined by \citealp{mikolov2013distributed} to infer the optimal embedding for entity $e$ with a margin hyper-parameter $\gamma>0$:
\begin{align}
& \e(e) =  \argmin_{\z: \|\z\|=1, \z \in \R^d} \mathbb{E}_{w^+|e} \; \mathbb{E}_{w^-} \left[ h\left( \z;w^+,w^- \right) \right]  \nonumber\\
& h(\z;w,v) = \max(0, \gamma - \z^T (\w_{w}-  \w_{v})) \label{eq:ganea}
\end{align}
Since the word-embeddings are kept fixed therefore the above objective is convex.

\paragraph{Variational Entity Embedding}
\label{sssec:vee}
There are many ways of constructing the entity embedding map $\e$ from $\w$ and $\mathcal{U}$. \citealp{ganea17} motivated their learning algorithm using generative assumptions but optimized a \textit{contrastive} max-margin objective in the actual learning process. At this step, a question naturally arises that how important is the max-margin algorithm for learning and what other methods could be motivated from the same generative assumption.

In order to answer these questions, we propose to use the Variational-Autoencoder Framework (VAE)~\cite{kingma2014} for learning entity embeddings and comparing their downstream performance to the max-margin entity embeddings by \citealp{ganea17}. Under the standard VAE framework, the generative model is:
\begin{align*}
  \z &\sim \pi = \mathcal{N}(\mathbf{0}, \mathbf{I}), \text{ and } \x_e \sim p(w|\z) = \NN^g_{\btheta}(\z)
\end{align*}
Here $\pi$ denotes the standard gaussian prior distribution on the latent variable $\z$, the output of $\NN^g$ are the mean-parameters of a conditional multinomial distribution. $\x_e$ denotes  the bag-of-words representation of  $\mathcal{U}_e$.\footnote{Recall that in the previous section, we defined $\mathcal{U}_e$ as the training words that co-occur with $e$.} The parameters $\btheta$ of $\NN^g$ are learnt by defining two inference-networks $\NN^{i,m}_\bphi, \NN^{i,v}_\bphi$ that map $\x_e$ to a gaussian posterior over the latent variable $\z$. $\NN^{i,m}_\bphi, \NN^{i,v}_\bphi$ compute the mean and variance of the posterior respectively. The parameters $\btheta, \bphi$ are learnt jointly by minimizing the Evidence Lower Bound Objective (ELBO):
\begin{align}
  &\argmin_{\btheta,\bphi}\sum_{e \in \mathcal{U}}E_{\NN^i_\bphi(\z|\x_e)}[\log \NN^g_\btheta(\x_e | \z)] \nonumber\\
  &\qquad - \beta \KL(\NN^i_\bphi(\z|\x_e) || \pi).\label{eq:elbo}
\end{align}
Here $\beta \ge 0$ is a hyper-parameter that can be tuned via cross-validation. A larger value of $\beta$ leads to disentangled representations~\cite{higgins17} but a lower value of $\beta$ can improve the model fit by decreasing the KL penalty. Another interpretation of $\beta = 0$ is that instead of using $\pi$ as the prior of $\z$ we are using a dynamic prior equal to $\NN^i_\bphi(\z|\x_e)$.
After the training we define $\e(e)$ as the posterior mean of $\z$ given $\x_e$, \ie
\begin{align}
  \e(e) = \NN^{i,m}_\bphi(\x_e).\label{eq:vae}
\end{align}
%% Although previous approaches have been dominated by
%% Therefore a question naturally arises
%% The max-margin objective really important? Or are there other methods that could be equally useful.
%% We constructed entity embeddings using a variational auto-encoder as follows:

\paragraph{Null Objective Entity Embedding}
\label{sssec:rp}
Recall that we defined $\x_e$ as the bag-of-words vector representation of $\mathcal{U}_e$. The VAE method learns a neural network to map $\x_e$ to $\e(e)$ as shown in \eqref{eq:vae}. We want to know how beneficial is the VAE objective itself in training a discriminative neural network. Is it the case that $\x_e$ are themselves linearly separable and a random low dimensional embedding will work just as well or better than the VAE?
In order to answer these questions, we conduct an experiment where we use \eqref{eq:vae} with a randomly initialized inference network as our entity embedding.

We call this the \textit{Null Objective} Entity Embedding because we do not optimize objective \eqref{eq:elbo} for training $\NN^{i,m}_\bphi$.
\subsection{Related Works}
\label{sec:related-works}
Recently \cite{kar18} also released a local model; however, the code they have released was incomplete and
Yamada et al. also released entity embeddings
In order to limit the scope of this study, I restricted myself to the global model proposed by and we use the variational autoencoder to closely mimic the generative assumptions in the original paper but without the max-margin training. Our model is most closely related to the NVDM model~\cite{miao2016neural}.

There are potentially many ways of either bag-of-words methods such as Paragraph Vectors~\cite{le2014distributed}, Simple Embedding~\cite{arora17} or even pre-trained sentence encoders such as ELMO~\cite{peters18} and BERT~\cite{devlin18} to construct $\e$. Most recently \cite{yamada18} proposed a method to train entity embeddings; however they did not show a downstream evaluation of NED accuracy. Therefore, it is not known at this time how effective these embeddings will be for NED. We leave this evaluation for future work.

\subsection{Experiments and Results}
\label{sec:experiments}
To quantify the effect of the embedding objective on NED accuracy we trained
the best performing \textit{global} NED model from \citealp{ganea17} with pre-trained entity embeddings learnt using different objectives. For each objective, we varied whether the entity embeddings had access to the mentioned context $\mathcal{U}_2$. Following prior work, we evaluated the micro F1 score of the trained models on four of the most popular NED datasets. The four datasets are AIDA-CoNLL~\cite{hoffart11}, MSNBC~\cite{cuc07}, AQUAINT (AQUA.)~\cite{milne08} and ACE2004 (ACE04)~\cite{ratinov11}. We used the preprocessed versions of these datasets released by \newcite{ganea17,guo14}. The F1 scores and pertinent statistics for the datasets are shown in \tabref{tab:mf}.

The Max-Margin Embeddings were trained on a February 2014 dump of Wikipedia as follows: Each entity vector was randomly initialized from a mean-$0$, variance-$1$, $300$-dimensional normal distribution. Values larger than $10$ were clipped. First, the embeddings were trained to convergence on the content words in the canonical description page for an entity. In each iteration, $20$ samples of $w^{+}$ and $5$ samples of $w^-$ were used to minimize \eqref{eq:ganea}. The optimizer was Adagrad with a learning rate of $0.3$. Hyperparameters $\alpha=0.6$ and $\gamma=0.1$ were the recommended values from \citealp{ganea17}. If $\mathcal{U}_2$ was included, then we used the tokens from a window of size $20$ as positive examples as well.

%% VAE_ENCDIM1=300 VAE_DIM=300 VAEEPOCHS=100 LOAD_WORD2VEC=1 MENTION_CONTEXT_WINDOW=10
%% LOAD_WORD2VEC=0 MENTION_CONTEXT_WINDOW=-1
%% pp_max_features=50000 PREPROC=bm25 EMBED_METHOD=vae
%% VAEEPOCHS=10 VAE_LBV=1 VAE_ENCDIM1=150 VAE_DECODERTYPE=multinomial VAE_DIM=300
%% VAE_BATCH_SIZE=1023
%% VAEOPTIMIZER=adamcustom adam_lr=1e-3 adam_epsilon=1e-8 adam_amsgrad=0 adam_beta_1=0.9
%% x_activation=relu mu_activation=linear logvar_activation=sigmoid KLANNEAL_ULIM=0.0
%% Namespace(epochs=$VAEEPOCHS,lowerbound_encoder_variance=$VAE_LBV,encdim1=$VAE_ENCDIM1,decodertype='$VAE_DECODERTYPE',vaedim=$VAE_DIM,shuffle=1,batch_size=$VAE_BATCH_SIZE,optimizer='$VAEOPTIMIZER',trainmode='${TRAINMODE-prod}',sparseinput=${sparseinput-True},floatx='float32',validation_split=0.01,adam_lr=$adam_lr,adam_epsilon=$adam_epsilon,adam_amsgrad=$adam_amsgrad,adam_beta_1=$adam_beta_1,x_activation='$x_activation',mu_activation='$mu_activation',logvar_activation='$logvar_activation',klanneal_ulim=$KLANNEAL_ULIM,archetype='${archetype-myarch}',l2_regularization=${l2_regularization-0},load_word2vec=$LOAD_WORD2VEC,mcwindow=$MENTION_CONTEXT_WINDOW)
The $300$-dimensional VAE embeddings were trained using a fixed vocabulary of $50,000$ words.

For learning the NED model, we used ADAM with a learning rate of 1e-4 until the validation accuracy exceeded a threshold. Afterward, we reduced the learning rate to 1e-5. The validation threshold was selected for each configuration by first training the system with a threshold of $100\%$ for three trials and then using the median of the highest validation F1 scores. The rest of the training details of the global NED model were identical to \citealp{ganea17}.
\begin{table}[htbp]
  \centering
  \begin{tabular}{c| c | p{1.05cm} | p{1.2cm} | p{1.05cm} | p{1cm}}\toprule
    {Objective}& $\mathcal{U}_2$ & {AIDA (4485) (98.2\%)} & {MSNBC (656) (98.5\%)} & {AQUA. (727) (94.2\%)} & {ACE04 (257) (90.6\%)} \\\hline
    \multirow{2}{1cm}{Max Margin} & \xmark & 89.5  & 92.7          & 84.9         & 88.1 \\
                                  & \cmark & 92.2  & 93.7          & 88.5         & 88.5 \\\hline
    \multirow{2}{*}{VAE}          & \xmark & 83.9  & 93.3          & 84.6         & 87.7 \\
                                  & \cmark & 85.4  & 94.7          & 86.9         & 86.5 \\\hline
    \multirow{2}{*}{Null}         & \xmark & 84.1  & 92.6          & 81.3         & 88.1 \\
                                  & \cmark & 83.0  & 93.2          & 86.4         & 88.1 \\\bottomrule
  \end{tabular}
  \caption{Micro F1 results for the same NED model but with entity embeddings pre-trained with different objectives. \cmark and \xmark\ in the second column indicates whether the learning algorithm had access to the mention context or not, respectively. The top two numbers in the dataset columns indicate the test-set size and the recall of the top-$30$ candidate entities.}
  \label{tab:mf}
\end{table}

\subsubsection{Entity Relatedness:} In addition to downstream NED evaluation we also evaluated our embeddings on an intrinsic task of entity relatedness prediction. The entity relatedness test set of~\cite{cecca13} measures how well the geometry of the entity embeddings captures manually annotated similarity relations between entities. It contains 3319 and 3673 \textit{entity-relatedness queries} for the test and validation sets. Each query consists of one prompt entity and up to one hundred candidate entities. Each candidate has a gold label indicating whether it is related to the prompt entity or not. The cosine similarity of the entity embeddings is used to rank the candidate entities, and the goal is to rank related entities before the unrelated entities. Same as previous work we report Normalized Discounted Cumulative Gain (NDCG) at three different ranks and the Mean Average Precision (MAP) score. \tabref{tab:er} shows the performance of different systems on this test set.
\begin{table}[htbp]
  \centering
  \begin{tabular}{c| c | c | c | c | c}\toprule
    {\footnotesize Objective}                        & $\mathcal{U}_2$ & \NDCG1 & \NDCG5 & {\NDCG10} & \MAP  \\\hline
    \multirow{2}{1cm}{Max Margin} & \xmark & 0.616  & 0.590  & 0.616     & 0.549 \\
                                  & \cmark & 0.646  & 0.611  & 0.639     & 0.576 \\\hline
    \multirow{2}{*}{VAE}          & \xmark & 0.592  & 0.559  & 0.578     & 0.514 \\
                                  & \cmark & 0.615  & 0.571  & 0.596     & 0.542 \\\hline
    \multirow{2}{*}{Null}         & \xmark & 0.577  & 0.537  & 0.563     & 0.503 \\
                                  & \cmark & 0.615  & 0.571  & 0.596     & 0.542 \\\bottomrule
  \end{tabular}
  \caption{Entity Relatedness Evaluation Results.}
  \label{tab:er}
\end{table}

%% \subsubsection{Error Analysis}
%% TODO: There is no error analysis, from particular tasks.

\section{Conclusion}
\label{sec:concl}
We started with the hypothesis that the entity embeddings based on the variational autoencoder will outperform the MVLSA embeddings and our experiments on the CMR task validated this hypothesis. Moreover, within the different types of decoder types, we found that the multinomial decoder had a significantly higher performance than the multilabel decoder.

We then focused attention on the multinomial VAE generative model and compared its performance to a state-of-the-art contrastive method for learning entity embeddings for the task of named entity disambiguation. We also evaluated the contribution to the NED accuracy by the mention context $\mathcal{U}_2$? The results  in \tabref{tab:mf} show that although the performance improvement varies with each system and on each dataset, for the best Max-Margin entity embeddings the improvement in performance is substantial.

Based on the experiments we can conclude that although MVLSA and GCCA, in general, are useful methods for learning entity embeddings their utility is limited in situations where multiple views of data are not readily available. On the other hand, variational methods for learning embeddings are promising in single view situations and can fare well in comparison to other more discriminative methods for learning entity embeddings for NLP tasks.
%% How important is max-margin training of entity embeddings and can a conceptually more straight-forward approach of using VAEs outperform it

%% generalizing and what are some final conclusions
%% that you can make. What are the fundamental lessons learnt, when
%% would GCCA be applicable ? when will NVSE be applicable, what kind
%% of mistakes will they make, where they will be applicable. Draw
%% connections to previous chapters,
%% we are seeing something, try to draw a connections, the broad
%% strokes are the contribution of the thesis.

%auto-ignore
\chapter{Concluding Remarks}
\label{cha:conc}
In this thesis, I developed novel algorithms for learning representations at the level of words and entities mentioned in linguistic corpora. \chapref{cha:mvlsa} focused on multi-view learning of word embeddings using the novel spectral method called MVLSA, and \chapref{cha:nvse} presented the NVSE model -- which builds upone the Variational Auto-Encoder framework -- for learning embeddings of entities. Through automatic evaluations for MVLSA and human evaluations for NVSE I showed that these methods can outperform existing state of the art methods in their respective domains.

Then in \chapref{cha:kbe} I explored ways of geometrically regularizing the learning of entity embeddings in a knowledge base to force the learnt embeddings to comply with logical constraints. I showed that on toy tasks at least it is possible to perform interesting logical inference using the proposed regularization methods. \chapref{cha:more} applied the proposed MVLSA, NVSE algorithms to more practical, downstream tasks of Contextual Mention Retrieval (CMR) and Named Entity dismabiguation (NED). We found that an ensemble of the features learnt through the variational-autoencoder approach with pre-existing bag-of-words features can improve the performance of a state-of-the-art CMR system. However on the task of Named Entity Disambiguation we did not find any benefit of our representations over the state of the art entity embeddings.

Future directions for some of the work in this thesis, especially regarding the NVSE algorithm, involves the use of pretrained contextual sequence encoders such as ELMO~\cite{matthew2018ctx} and BERT~\cite{devlin2018bert}. The methods for extracting features for entities proposed in this thesis should be compatible with these sequence encoding methods. Another potential future direction will be to use the constraint based methods developed in this thesis for embedding entities in knowledge graphs and applying them to more sophisticated models of entities such as the Box-Lattice Measures for probabilistic embeddings~\cite{luke2018prob,li2018smoothing} and \cite{subramanian2018new}. In order to do so, it will be desirable to improve the scalability of the alternative projection stochastic gradient algorithm proposed in \chapref{cha:kbe}.
%TODO Future directions: Scalability, Transfer Learning, Non linear extensions, robustness, prior knowledge. TODO: There may be new things that we might be able to do now, we can
%% enable entity centric search, what are the new things that I am enabling.
%% Now discussion of future work: what do you think are the next big things people should consider on this line?

% https://www.library.jhu.edu/library-services/electronic-theses-dissertations/formatting-guidelines-checklist/
% For the remainder of the ETD, including text, illustrations, appendices, and bibliography, use Arabic numerals (1, 2, 3, etc.). The numbering begins with one (1) and runs consecutively to the end of the manuscript. Do not use suffixes to the Arabic numerals, such as 12a.
\appendix
%auto-ignore
\chapter[Appendix]{}%Appendix
\label{cha:appendix}

\section{Bayesian Sets}
\label{sec:bs}
The Bayesian Sets algorithm ranks the elements in $\XmD$ according to the ratio of  two probabilities:
\[
\textit{score}(x) = \frac{p(x|\cD)}{p(x)} = \frac{E_{p(z | \cD)}[p(x|z)]}{E_{\pi(z)}[p(x|z)]}
\]
Instead of assuming the commonly used Beta-Binomial distribution I assume that $p(x | z)$  is a product of independent Poisson distributions with Gamma conjugate priors. I.e.
$p(x | z) = \prod_k \frac{z_k^{x_k}}{x_k\!}$. The conjugate prior on $z$ is a product of Gamma distributions, $$p(z  | \alpha, \beta) = \prod_k \frac{{\beta_k}^{\alpha_k}}{\Gamma(\alpha_k)} {z_k}^{\alpha_k - 1} \exp(-\beta_k z_k)$$. Let $f(x_k, \alpha_k, \beta_k) =$
$${\left(\frac{{x_k+\alpha_k-1}}{x_k} \right)} (1-\frac{1}{1+\beta_k})^{\alpha_k} (\frac{1}{1+\beta_k})^{x_k}.$$
The Bayesian Sest score under these conditions is
\[
\textit{score}(x) = \prod_k {\frac{f(x_k, \alphatil_k, \betatil_k)}{f(x_k, \alpha_k, \beta_k)}}
\]
Where $\alphatil_k = \alpha_k+\sum_{x \in \cD}x_k$ and $\betatil_k = \beta_k + \rD$. Note that if $\alphatil_k = \alpha_k$ then ${\frac{f(x_k, \alphatil_k, \betatil_k)}{f(x_k, \alpha_k, \beta_k)}} = (\frac{1+\beta_k}{1+\beta_k+D})^{x_k}$ which means that features that occur in $x$ that did not occur in $\cD$ are penalized based on the number of times the feature appeared. Therefore, the Gamma-Poisson distribution is a good approximation only when quantitative differences in the number of times a feature appears are important.

Finally I assume that the components of $x$ were sampled from conditionally independent gaussian distributions with unknown mean and precisions. I.e. $p(x | \mu, \tau) = $
\[\prod_k \sqrt{\frac{\tau}{2\pi}}\exp(-(x_k-\mu_k)^2 \tau_k)\] and $p(\mu, \tau | \rho, \lambda, \alpha, \beta) =$
\[ \prod_k \frac{{\beta_k}^{\alpha_k} \sqrt{\lambda_k}}{\Gamma(\alpha_k) \sqrt{2\pi}}{\tau_k}^{\alpha_k - \frac{1}{2}} \exp(-\beta_k \tau_k) \exp(- \frac{\lambda_k \tau_k (\mu_k - \rho_k)^2}{2}).\]
In the following formulaes I omit the susbscript $k$ for convenience.
\begin{align*}
  \xbar    &= \frac{1}{\rD}\sum_{x \in \cD} x\\
  \rhot    &= \frac{\lambda \rho + \rD \xbar}{\lambda + \rD} \\
  \lambdat &= {\lambda + \rD}\\
  \alphatil  &= \alpha + \rD/2\\
  \betatil   &= \beta + \frac{1}{2}\sum_{x \in \cD} (x - \xbar)^2 + \frac{\rD \lambda}{\rD  + \lambda}\frac{(\xbar - \rhot)^2}{2}
\end{align*}

The Bayesian Sets score is the ratio of two $t$ distribution values:
\[
\textit{score}(x) = \prod_k \frac{t_{2\alphatil_k}(x_k \mid \rhot_k, \frac{\betatil_k (\lambdat_k + 1)}{\alphatil_k \lambdat_k})}{t_{2\alpha_k}(x_k \mid \rho_k, \frac{\beta_k (\lambda_k + 1)}{\alpha_k \lambda_k})}
\]

Now the value of $t_\nu(x |a, b)$ where $a$ is the location parameter and $b$ is the scale parameter is:
\[
t_\nu(x |a, b) = \frac{\Gamma(\frac{\nu+1}{2})}{\sqrt{b\nu\pi}\Gamma(\frac{\nu}{2})} \left( 1 + \frac{(x-a)^2}{b\nu}\right)^{-\frac{\nu+1}{2}}
\]

In order to use this distribution with count data, it is %very
important to use some variance stabilizing transform, and then perform mean and variance normalization to preprocess all the count features. In this way I can set the priors $\rhot_k$ to be $0$ and $\lambda_k$ can be set uniformly to some small number such as $2$ and $alpha_k, \beta_k$ can be chosen to be $2, 1$ respectively.

\subsection{Binarizing feature counts}
\label{app:bs-binarize}
BS binarizes the feature vector $f_x$ as $f'_x$ via thresholding: %, following the protocol: %ir protocol:
 \begin{align*}
   f'_x&[j] = \bI[f_x[j] > \mu[j] + \lambda \sigma[j]]\\
  \mu[j] &= \frac{{\sum_{x \in \cX} f_x[j]}}{\rX},
  \sigma^2[j] {=} {\frac{{ \sum_{x \in \cX} (f_x[j] - \mu[j])^2}}{\rX}},
 \end{align*}
where $\lambda \in \mathbb{R}$ is a hyperparameter.
I tried three values of $\lambda$ -- $\{0, 0.5, 1\}$ -- and set it to $0.5$ based on preliminary experiments. BS's scoring function becomes
\begin{subequations}
\begin{align}
 \underset{BS}\score(\cQ{,}x) &= {\sum_{j=1}^\rF} {\left(\log \frac{\talpha[j] \beta[j]}{\alpha[j]\tbeta[j]}\right)} f'_x[j] \\ \label{eq:bs_score}
  \talpha[j] &= \alpha[j] +  \sum_{x \in \cQ} f'_x[j]\\
  \tbeta[j] &= \beta[j] + \rQ - \sum_{x \in \cQ} f'_x[j].
\end{align}
\end{subequations}

\section{Ranking methods}
\label{app:rankings}
A standard function for computing the distance between distributions is the KL-divergence.
Another possibility to compute the distance between distributions is to compute the symmetric version of the KL-divergence.
Another standard method for computing the similarity between two probability distributions is to compute the probability product kernel (PPK) between two distributions~\mycite{jebara2004probability}; i.e.
\[\ip{ q_\phi(z|\cQ)}{q_\phi(z|x)} = \int_z q_\phi(z|\cQ)q_\phi(z|x) dz\]
In the special case that $q_\phi(z|\cQ)$ and $q_\phi(z|x)$ have the special deep-gaussian form then the KL divergence as well as the inner product can be computed in closed form.
%\footnote{
  KL Divergence between two distributions normal distributions $p_1, p_2$ with parameters $(\mu_1, \Sigma_1)$ and $(\mu_2, \Sigma_2)$ is:
\[
 KL(p_1 || p_2) = \frac{1}{2}\left(\text{tr}(\Sigma_2^{-1}\Sigma_1) + (\mu_1 - \mu_2)^\top \Sigma_2^{-1}(\mu_1 - \mu_2) - d + \log \frac{\det(\Sigma_2)}{\det(\Sigma_1)}\right).
\] and PPK is
\[
 \exp(\frac{-(\mu_1 - \mu_2)^T (\Sigma_1 + \Sigma_2)^{-1} (\mu_1 - \mu_2)}{2} - \log\det((\Sigma_1 + \Sigma_2)))
\]
In the further special case that $\mu_2 = \mathbf{0}, \Sigma_2 = \mathbf{I}$ then the KL divergence simplifies to:
\[
 KL(p_1 || p_2) = \frac{1}{2}\left(\text{tr}(\Sigma_1) + \mu_1^T \mu_1  - d - \log {\det(\Sigma_1)}\right).
\]

However, I propose here a simple way to compute the distance between two normal distributions. If $\mu_1, \Sigma_1$ and $\mu_2, \Sigma_2$ are the mean and variance of two normal
distributions, $p_1, p_2$ then I use the following distance
\[d(p_1, p_2)  = ||{\mu_1 \Sigma_1^{-1} - \mu_2 \Sigma_2^{-1}}||^2 = ||\xi_1 - \xi_2||^2
\]

This metric can be implemented as a single matrix multiplication while KL divergence and PPK cannot.
Intuitively this distance gives higher weightage to those dimensions where the variance of the either the distributions is lower. In preliminary experiments I  found this distance to be superior to KL divergence and PPL and I have used this distance function in all of my experiments. I believe that the regularization from the gaussian prior that encourages the posterior distributions to be close to the origin make shift invariance unnecessary.

\printbibliography[title={References}]

\end{document}